\documentclass[letterpaper]{article} 
\usepackage{arxiv}
\usepackage[hyphens]{url}  
\usepackage{graphicx} 
\usepackage{natbib}  
\usepackage{caption} 
\usepackage{amsmath,amssymb,amsthm,mathtools}
\usepackage{booktabs}
\usepackage{array}
\usepackage{algorithm}
\usepackage{algorithmic}
\usepackage{enumitem}
\usepackage{alphalph}

\graphicspath{{figures/}}

\newcommand{\placeholdergraphic}[2][]{%
  \IfFileExists{figures/#2.pdf}{\includegraphics[#1]{#2}}{%
  \IfFileExists{figures/#2.png}{\includegraphics[#1]{#2}}{%
    \fbox{\begin{minipage}{0.90\linewidth}\centering\vspace{0.4in}
    Placeholder for figure \texttt{\detokenize{#2}}\\
    Add the corresponding file under \texttt{figures/} for the camera-ready build.
    \vspace{0.4in}\end{minipage}}%
  }}%
}

\newtheorem{theorem}{Theorem}
\newtheorem{lemma}[theorem]{Lemma}
\newtheorem{corollary}[theorem]{Corollary}
\newtheorem{proposition}[theorem]{Proposition}
\newtheorem{remark}[theorem]{Remark}
\newtheorem{definition}{Definition}

\newcommand{\E}{\mathbb{E}}
\newcommand{\Prob}{\mathbb{P}}
\newcommand{\Reg}{\mathrm{Reg}}
\newcommand{\OPT}{\mathrm{OPT}}
\newcommand{\R}{\mathbb{R}}
\newcommand{\calB}{\mathcal{B}}
\newcommand{\calE}{\mathcal{E}}
\newcommand{\calG}{\mathcal{G}}
\DeclareMathOperator*{\argmax}{arg\,max}

\title{Discrepancy-Rounded Fair Bandits with Static and Time-Varying Exposure Floors}
\author {
    Ibne Farabi Shihab\textsuperscript{\rm 1}\equalcontrib\corresponding,
    Joyanta Jyoti Mondal\textsuperscript{\rm 2}\equalcontrib,
    Anuj Sharma\textsuperscript{\rm 3}
}
\affiliations {
    \textsuperscript{\rm 1}Department of Computer Science, Iowa State University, USA\\
    \textsuperscript{\rm 2}Department of Computer and Information Sciences, University of Delaware, USA\\
    \textsuperscript{\rm 3}Department of Civil, Construction \& Environmental Engineering, Iowa State University, USA \\
    joyanta@udel.edu, \{ishihab, anujs\}@iastate.edu
}
\date{}

\begin{document}
\maketitle

\begin{abstract}
Minimum-exposure constraints arise in recommendation, content curation, and regulated allocation when each provider, arm, or group must receive guaranteed exposure inside a period rather than only in aggregate. We study stochastic bandits with exact exposure floors and show that the right object is a rounding problem: a fractional fair schedule is realized as integral pulls, and the exposure error is exactly a discrepancy vector. The main contribution is a blockwise model with time-varying floors. BDQ-UCB satisfies every block floor deterministically and has fair regret governed by the nonmandatory budget \(R\), not the horizon \(T\), with high-probability regret \(O(\sqrt{KR\log(KT)})\). A MOSS residual variant attains \(O(\sqrt{KR})\), and a matching lower bound gives the minimax rate \(\Theta(\sqrt{KR})\), even with positive mandatory exposure; a kl-UCB\(^{++}\) residual rule adds instance-dependent optimality. The formulation becomes essential for overlapping group floors: per-arm rounding can violate a group constraint by \(\Omega(s)\) in the group size, whereas Beck--Fiala null-space rounding meets every group floor within the block budget with violation below the arm degree \(t\), and composes with UCB at the same \(R\)-parametrized regret. For learned group plans, we close disjoint systems at \(\widetilde\Theta(\sqrt{KT})\), give a dual-ledger decomposition explaining why naive index rules fail under overlap, and prove a plan-sampling rule that is pathwise feasible under an initial cover-slack condition and attains a conditional \(\widetilde O(\sqrt{KT})\) guarantee, leaving the condition-free overlap rate open. Experiments on synthetic floors, MovieLens-100k genre exposure, and deployment stress tests show exact feasibility without penalty tuning and regret competitive with tuned Lagrangian baselines.
\end{abstract}

\section{Introduction}

Fairness-constrained bandits arise when arms are providers, sellers, content sources, treatments, or protected groups that must receive minimum exposure. A classical stochastic bandit concentrates nearly all pulls on the empirically best arm, but in many allocation systems even a lower-reward arm is contractually, legally, or ethically entitled to a minimum number of opportunities.

For a fixed global exposure floor there is a simple, well-studied solution: give every arm its required pulls, then run a standard bandit algorithm, as Fair-MAB \citep{patil2020fair,patil2021fair} formalizes. Our starting point is different: we read the fairness layer as a \emph{rounding} layer, where the learner realizes a fractional exposure plan through integral pulls and the resulting exposure gap is exactly a discrepancy vector. This is almost trivial for a single global floor, but becomes structurally useful once floors vary over time and genuinely necessary once they overlap, when a single pull can credit several constraints at once. Figure~\ref{fig:pipeline} shows the pipeline: within each period, round a fractional fair plan to integral pulls that meet the floor exactly, then spend the remaining rounds learning reward. The discrepancy rounding leaves behind determines feasibility; the number of nonmandatory rounds determines regret.

\begin{figure}[t]
\centering
\placeholdergraphic[width=\columnwidth]{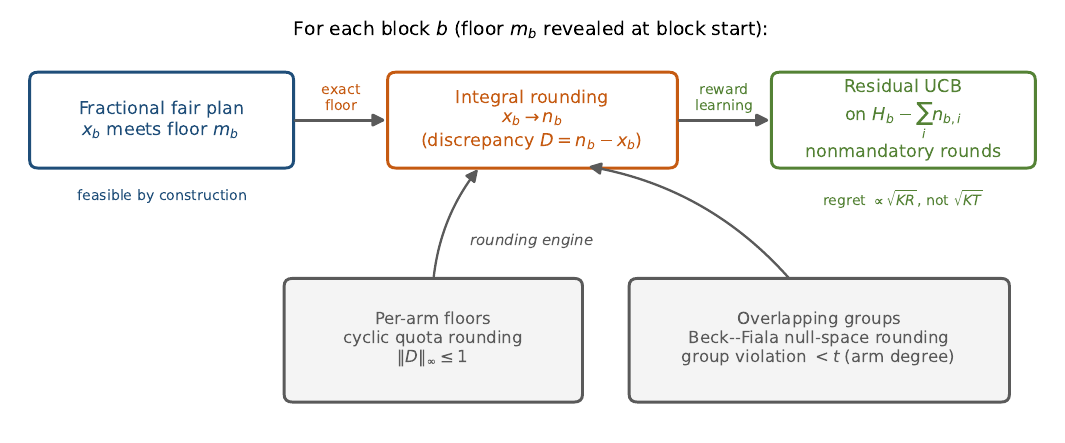}
\caption{The discrepancy-rounding view of fair bandits. Each period rounds a fractional fair plan to integral pulls (the exposure gap is a discrepancy vector) and spends the remaining rounds on residual UCB. Per-arm floors need only cyclic quota rounding; overlapping group floors need genuine set-discrepancy rounding, where the reduction becomes the algorithm itself rather than merely descriptive language.}
\label{fig:pipeline}
\end{figure}

This lens organizes the paper. Our central object is the \emph{blockwise} model, in which the horizon is split into periods, each carrying its own integer floor vector that must be satisfied inside that period rather than only in aggregate. The blockwise algorithm BDQ-UCB (Blockwise Discrepancy-Quota Upper Confidence Bound) executes each block's quota exactly and runs UCB only on the rounds the floor leaves free, so its fair regret is governed by the total nonmandatory budget
\[
    R=\sum_{b=1}^B\Bigl(H_b-\sum_i m_{b,i}\Bigr),
\]
not by the horizon \(T\). This is the right complexity parameter in a strong sense: a matching lower bound and a MOSS-based residual rule pin the minimax rate at \(\Theta(\sqrt{KR})\), even when the mandatory exposure is a constant fraction of the horizon, and a kl-UCB\(^{++}\) residual rule recovers the exact instance-dependent constant. The mandatory pulls, being common to every fair policy, are free; all unavoidable exploration lives in the residual budget. Algorithm names are systematic: the suffix is the residual index rule (UCB, MOSS, or kl-UCB\(^{++}\)); DQ-UCB is the static base, the B prefix marks the blockwise version, and the Group-, D-, OG-, and P- prefixes mark the fixed-, disjoint-, one-shot-, and sampled-plan group algorithms.

The reduction becomes necessary when floors are placed over overlapping groups. Rounding a group-feasible fractional plan with a per-arm rule can miss a group floor by \(\Omega(s)\) in the group size, whereas Beck--Fiala null-space rounding, a genuine discrepancy algorithm, meets every group floor within the reserved block budget using group slack strictly below the arm degree \(t\), independent of the group size and the number of groups. Composed with UCB, this yields Group-BDQ-UCB at the same residual-budget regret for a fixed group plan. The final question is whether the plan itself can be learned, so the benchmark becomes the best group-fair plan rather than a given one. Here the answer is sharp for disjoint groups, where a per-pull covering rule closes the price of plan adaptivity at \(\widetilde\Theta(\sqrt{KT})\) against a matching lower bound, and partial under arbitrary overlap, where linear-programming duality identifies the exact obstruction and a plan-sampling rule attains a conditional \(\widetilde O(\sqrt{KT})\) guarantee whose one open hypothesis we state explicitly. The full complexity picture across settings is summarized in Table~\ref{tab:glance} (Appendix~\ref{app:results-summary}).

We calibrate the claims explicitly. The static result is a rederivation of fixed-floor exposure fairness, not a new rate over Fair-MAB; the modeling contribution is the blockwise floor model, whose floors are period-specific, revealed before each period, and satisfied inside it. The separation result rules out final-count and block-independent surrogates but not deadline-aware dynamic penalties, which we include as baselines. The framework's value is exact feasibility without penalty calibration and regret governed by the nonmandatory budget \(R\).
\section{Related Work}

The closest prior work is the Fair-MAB framework \citep{patil2020fair,patil2021fair}, which requires every arm to receive a prescribed fraction of the pulls at every round up to an additive tolerance and measures regret against a fairness-aware comparator. Our static result is a discrepancy-based rederivation of this guarantee rather than a rate improvement, and because the quota is front-loaded, DQ-UCB meets the same anytime floor with tolerance one (Remark~\ref{rem:anytime}), but its purpose is to make the rounding layer explicit and carry it to the blockwise time-varying floors Fair-MAB does not capture. Other fairness criteria for bandits are complementary rather than directly comparable: meritocracy \citep{joseph2016fairness}, merit-based exposure \citep{wang2021fairness}, Nash social welfare \citep{barman2022fairness}, max-min fairness \citep{harada2025bandit}, fairness under uncertainty \citep{lee2026fairness}, and the fairness--performance frontier \citep{wilms2026fairness}, whose Pareto viewpoint parallels our closed-form analysis.

Nearer to exact exposure are works with explicit group or contextual constraints (fair contextual bandits \citep{chen2020fair}, bilevel group exposure with within-group merit \citep{pokhriyal2024bilevel}, hypergeometric finite-pool ranking floors \citep{cartier2025hyperfair}, and graph-structured multi-regularizer fairness \citep{zhou2025online}) and works that relax the single stationary learner through interacting agents \citep{manupriya2025multiagent,xu2025fair,krishnamurthy2026creator} or slow non-stationarity \citep{shaarad2020regret}, the latter adjacent to our blockwise model. A separate line enforces constraints through budgets, queues, or duality (bandits with knapsacks \citep{badanidiyuru2013bandits}, concave-reward convex-constraint extensions \citep{agrawal2014bandits}, virtual-queue fairness in combinatorial sleeping bandits \citep{li2019combinatorial}, modified-index minimum-rate fairness \citep{claure2020multi}, and dual mirror descent for online allocation \citep{balseiro2020dual}) but these certify only asymptotic or average feasibility, whereas exact per-period feasibility is what our model makes primitive; a deadline-aware Lagrangian represents this penalty-based family in our experiments. None of these enforces time-varying block-level floors deterministically without penalty tuning; the combination of exact blockwise feasibility, a tuning-free design, and a matching minimax lower bound is what is specific to this work.

The analysis draws on standard bandit machinery: finite-time upper confidence bound (UCB) \citep{auer2002finite,han2024ucb}; the minimax construction via the Kullback--Leibler (KL) chain rule and Pinsker's inequality \citep{bubeck2012regret,lattimore2020bandit}; its Bretagnolle--Huber form for the instance-dependent lower bound; and the MOSS \citep{audibert2009minimax} and kl-UCB\(^{++}\) \citep{menard2017minimax} indices behind our rate-optimal residual rules. Allocation under time-varying demand is complementary (floors there are unknown, ours are revealed per block \citep{lyu2023online}) and multi-objective and preference/risk mixtures \citep{davoodi2025stochastic,tatli2025preference,tatli2025risk} share the mixture structure of our Pareto curve. Finally, the rounding engine is classical discrepancy theory (Spencer's theorem \citep{spencer1985six}, Banaszczyk balancing \citep{banaszczyk1998balancing}, Bansal's constructive method \citep{bansal2010constructive}, dependent rounding \citep{gandhi2006dependent}, matroid-friendly rounding \citep{bansal2016approximation}, and online vector balancing \citep{bansal2020online,altschuler2025threshold,bednorz2024gramschmidt}), from which we use the Beck--Fiala theorem as the central tool for overlapping floors and to which we connect the harder online results in the open adaptive-planning question.

\section{Problem Setup and the Discrepancy Reduction}
\label{sec:model}

We study a stochastic \(K\)-armed bandit with horizon \(T\). Arm \(i\) produces independent rewards in \([0,1]\) with mean \(\mu_i\). Let
\[
    i^\star \in \argmax_{i\in[K]} \mu_i,
    \qquad
    \mu_\star = \mu_{i^\star},
    \qquad
    \Delta_i = \mu_\star-\mu_i,
\]
so that \(\Delta_{i^\star}=0\) and \(\Delta_i\ge 0\) for every arm \(i\). Notation is summarized in Table~\ref{tab:notation}, and all proofs are deferred to the technical appendix.

For a target exposure fraction \(\delta\in[0,1/K]\), the integral floor is \(m=\lfloor\delta T\rfloor\), and a policy is \(m\)-fair if its final pull counts satisfy \(N_i(T)\ge m\) for every arm \(i\in[K]\) (equivalently, the empirical exposure of every arm is at least \(m/T\ge\delta-1/T\)). The best \(m\)-fair allocation, the induced fair pseudo-regret, and the fair-regret identity \(\widehat\Reg_m(T)=\sum_{i\neq i^\star}\Delta_i\bigl(N_i(T)-m\bigr)\), which shows fair regret is a gap-weighted count of pulls beyond the quota, are the static (\(B=1\)) specialization of the blockwise objects of the next section; we defer their formal statements to Appendix~\ref{app:static} (Lemma~\ref{lem:static-id}). This gap-weighted count is what the discrepancy view will make actionable.

\subsection{The Exposure--Discrepancy Identity}
\label{sec:reduce}

We first record that the difference between a fractional fair schedule and an integral pull sequence is exactly a discrepancy vector. A fractional allocation at time \(t\) is a vector \(x_t\in\R^K\) with \(x_{t,i}\ge0\) and \(\sum_i x_{t,i}=1\), the intended cumulative fractional exposure of arm \(i\) is \(S_i(T)=\sum_{t=1}^T x_{t,i}\), and a deterministic pull \(A_t\) corresponds to the standard basis vector \(e_{A_t}\). Define the discrepancy vector
\[
    D_T
    =
    \sum_{t=1}^T \bigl(e_{A_t}-x_t\bigr)
    =
    N(T)-S(T),
\]
where \(N(T)=(N_1(T),\dots,N_K(T))\) and \(S(T)=(S_1(T),\dots,S_K(T))\).

\begin{proposition}[Exposure--discrepancy identity]
\label{thm:reduce}
Fix any fractional schedule \(x_1,\dots,x_T\). For any pull sequence \(A_1,\dots,A_T\), the following hold.
\begin{enumerate}[label=(\roman*),leftmargin=*]
\item The exposure error of each arm is exactly the corresponding coordinate of the discrepancy vector:
\[
    N_i(T)-S_i(T)=D_{T,i}.
\]
Hence, if \(S_i(T)\ge m+B\) for every \(i\) and \(\|D_T\|_\infty\le B\), then the integral pull sequence is \(m\)-fair.
\item The reward gap relative to the fractional schedule is a weighted discrepancy:
\[
    \sum_{t=1}^T \mu^\top x_t
    -
    \sum_{t=1}^T \mu_{A_t}
    =
    -\mu^\top D_T.
\]
\item Therefore, minimizing reward loss subject to a fairness floor is a constrained discrepancy-minimization problem: one chooses one vector from \(\{e_1-x_t,\dots,e_K-x_t\}\) at each time \(t\), controls the terminal \(\ell_\infty\) discrepancy for fairness, and controls the weighted discrepancy \(-\mu^\top D_T\) for regret.
\end{enumerate}
\end{proposition}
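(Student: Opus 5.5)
The argument is pure bookkeeping from the definition of \(D_T\) and linearity; there is no probabilistic content. We will prove the three parts in order, since (iii) only repackages (i) and (ii).

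For (i), we first write \(N_i(T)=\sum_{t=1}^T \mathbf 1\{A_t=i\}=\sum_{t=1}^T (e_{A_t})_i\) and \(S_i(T)=\sum_{t=1}^T x_{t,i}\). Taking the \(i\)-th coordinate of \(D_T=\sum_t(e_{A_t}-x_t)\) then gives \(D_{T,i}=N_i(T)-S_i(T)\) directly, which also justifies the equality \(D_T=N(T)-S(T)\) asserted in the definition. For the fairness consequence, we use \(D_{T,i}\ge -\|D_T\|_\infty\ge -B\), so that
\[
N_i(T)=S_i(T)+D_{T,i}\ge (m+B)-B=m
\]
for every \(i\), which is exactly \(m\)-fairness.

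For (ii), we note that \(\mu_{A_t}=\mu^\top e_{A_t}\). Summing over \(t\) and using linearity of the inner product gives
\[
\sum_t \mu^\top x_t-\sum_t\mu_{A_t}=\mu^\top\sum_t(x_t-e_{A_t})=-\mu^\top D_T .
\]

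For (iii), we observe that choosing \(A_t\in[K]\) is the same as choosing the increment \(e_{A_t}-x_t\) from the finite set \(\{e_1-x_t,\dots,e_K-x_t\}\), and \(D_T\) is the sum of these increments. By (i), the \(m\)-fairness constraint is implied by the terminal bound \(\|D_T\|_\infty\le B\) whenever the schedule has slack \(S_i(T)\ge m+B\). By (ii), the realized reward \(\sum_t\mu_{A_t}\) equals the fixed quantity \(\sum_t\mu^\top x_t\) plus \(\mu^\top D_T\), so minimizing reward loss is the same as minimizing the weighted discrepancy \(-\mu^\top D_T\). Together these give the claimed constrained discrepancy-minimization reformulation.

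The main obstacle is interpretive rather than technical. Part (iii) is a reformulation, not a quantitative claim, so the only care needed is to phrase it precisely: the \(\ell_\infty\) control is a \emph{sufficient} condition for fairness, given the slack hypothesis from (i), while the reward term is an exact identity rather than a bound.
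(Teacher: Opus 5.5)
Your proposal is correct and follows essentially the same route as the paper: coordinatewise reading of \(D_T=N(T)-S(T)\) with \(D_{T,i}\ge-B\) for the fairness consequence, linearity via \(\mu_{A_t}=\mu^\top e_{A_t}\) for (ii), and (iii) as a direct reformulation of the first two parts. Your remark that the \(\ell_\infty\) control is only a sufficient condition for fairness (given the slack \(S_i(T)\ge m+B\)), while the reward term is an exact identity, is a correct reading of (iii).
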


The identity is elementary, its proof is telescoping plus linearity, and we deliberately label it a proposition rather than a theorem. Its value is organizational: it fixes the objects the algorithms manipulate, and it is what will later let a genuine discrepancy algorithm, Beck--Fiala rounding, solve a fairness problem that per-arm rounding provably cannot. In the static algorithm in Appendix~\ref{app:static} we use a simple constructive rounding schedule for the mandatory quota part; in the blockwise extension the same rounding idea is applied separately to each block.

\section{Blockwise Exposure Floors}
\label{sec:blockwise}

The main extension is blockwise fairness. The horizon is partitioned into \(B\) consecutive blocks \(\calB_1,\ldots,\calB_B\), where block \(b\) has length \(H_b\), and each block has its own integer floor vector
\[
    m_b=(m_{b,1},\ldots,m_{b,K})\in\mathbb{Z}_{\ge0}^K,
    \qquad
    \sum_{i=1}^K m_{b,i}\le H_b.
\]
A policy is blockwise fair if \(N_{b,i}\ge m_{b,i}\) for every block \(b\) and arm \(i\), where \(N_{b,i}\) is the number of pulls of arm \(i\) inside block \(b\). This model captures time-varying exposure contracts, rotating provider guarantees, periodic protected-group targets, or demand-dependent exposure requirements, and it cannot be represented by a single global minimum fraction.

\begin{definition}[Best blockwise-fair allocation]
The best blockwise-fair allocation gives the floor \(m_{b,i}\) to every arm in each block and allocates the residual \(H_b-\sum_j m_{b,j}\) pulls of that block to a best arm. Its value is
\[
    \OPT_{\mathbf m}(T)=T\mu_\star-
    \sum_{b=1}^B\sum_{i\neq i^\star}m_{b,i}\Delta_i,
\]
and the realized blockwise fair pseudo-regret is
\(\widehat\Reg_{\mathbf m}(T)=\OPT_{\mathbf m}(T)-\sum_{t=1}^T\mu_{A_t}\).
\end{definition}

\begin{lemma}[Blockwise fair-regret identity]
\label{lem:block-id}
For every pull sequence,
\[
    \widehat\Reg_{\mathbf m}(T)
    =
    \sum_{i\neq i^\star}\Delta_i
    \left(N_i(T)-\sum_{b=1}^B m_{b,i}\right).
\]
If the sequence is blockwise fair, then \(\widehat\Reg_{\mathbf m}(T)\ge0\).
\end{lemma}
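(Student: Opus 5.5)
The plan is direct algebra from the definitions, in two steps. First I rewrite the realized reward \(\sum_{t=1}^T\mu_{A_t}\) by grouping rounds according to the arm pulled, which gives \(\sum_{i=1}^K \mu_i N_i(T)\). Since the blocks partition the horizon, \(N_i(T)=\sum_{b=1}^B N_{b,i}\) and \(T=\sum_i N_i(T)\). This lets me write
\[
    T\mu_\star-\sum_{t=1}^T\mu_{A_t}
    =\sum_{i=1}^K N_i(T)(\mu_\star-\mu_i)
    =\sum_{i\neq i^\star}\Delta_i N_i(T),
\]
where the \(i^\star\) term drops because \(\Delta_{i^\star}=0\).

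Next I substitute the definition of \(\OPT_{\mathbf m}(T)\) into \(\widehat\Reg_{\mathbf m}(T)=\OPT_{\mathbf m}(T)-\sum_t\mu_{A_t}\). This subtracts \(\sum_b\sum_{i\neq i^\star}m_{b,i}\Delta_i\) from the display above. Exchanging the finite sums over \(b\) and \(i\) and collecting terms by arm gives the claimed identity
\[
    \widehat\Reg_{\mathbf m}(T)=\sum_{i\neq i^\star}\Delta_i\Bigl(N_i(T)-\sum_b m_{b,i}\Bigr).
\]
This is the blockwise analogue of the static identity (Lemma~\ref{lem:static-id}), and the argument mirrors it.

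For nonnegativity, I note that blockwise fairness gives \(N_{b,i}\ge m_{b,i}\) for every \(b\) and \(i\). Summing over \(b\) yields \(N_i(T)\ge\sum_b m_{b,i}\), so every bracket in the identity is nonnegative. Since also \(\Delta_i\ge0\), the whole sum is nonnegative.

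There is no real obstacle here. The only care needed is to make explicit that \(\sum_i N_i(T)=T\), so that the \(T\mu_\star\) term is absorbed into a gap-weighted sum. I would also remark that if the best arm is not unique, the choice of \(i^\star\) in \(\OPT_{\mathbf m}\) is immaterial: any other optimal arm \(j\) has \(\Delta_j=0\) and so contributes nothing.
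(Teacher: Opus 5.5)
Your proposal is correct and follows essentially the same route as the paper's proof: rewrite $\sum_t \mu_{A_t}$ as $T\mu_\star-\sum_{i\neq i^\star}\Delta_i N_i(T)$ using $\sum_i N_i(T)=T$ and $\Delta_{i^\star}=0$, then substitute $\OPT_{\mathbf m}(T)$. For nonnegativity, you sum $N_{b,i}\ge m_{b,i}$ over blocks, exactly as the paper does.
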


\subsection{BDQ-UCB Algorithm and Guarantees}

BDQ-UCB applies discrepancy rounding separately in each block. At the start of block \(b\) it executes any deterministic schedule containing exactly \(m_{b,i}\) copies of arm \(i\), realizing the block floor with terminal discrepancy zero relative to the block target, and then runs UCB for the remaining \(H_b-\sum_i m_{b,i}\) rounds using all observations so far. We use the standard optimistic convention that an arm with zero observations has UCB index \(+\infty\) during residual rounds (equivalently, unseen arms are sampled before the empirical-mean index is used); this is needed because early blocks may have \(m_{b,i}=0\) for some arm, and the additive \(K\) term in the regret bounds is its initialization cost. Pseudocode is Algorithm~\ref{alg:bdqucb}.

\begin{algorithm}
\caption{BDQ-UCB}
\label{alg:bdqucb}
\begin{algorithmic}[1]
\STATE \textbf{Input:} arms $K$, blocks $B$, block lengths $(H_b)$, floors $(m_{b,i})$, failure prob.\ $\eta\in(0,1)$
\STATE \textbf{Output:} pull sequence across all blocks
\STATE $L \leftarrow \log(2KT/\eta)$, where $T = \sum_b H_b$
\FOR{$b = 1,\dots,B$}
    \STATE Pull each arm $i\in[K]$ exactly $m_{b,i}$ times (any fixed order; count determines fairness)
    \FOR{each residual round in block $b$}
        \STATE For each arm $i$, set $U_i\leftarrow +\infty$ if $N_i=0$ and $U_i\leftarrow\widehat{\mu}_i+\sqrt{2L/N_i}$ otherwise.
        \STATE Pull $A \in \argmax_{i\in[K]} U_i$, \quad $N_i$ = total pulls of arm $i$ so far
    \ENDFOR
\ENDFOR
\end{algorithmic}
\end{algorithm}

Let
\[
    R=\sum_{b=1}^B\left(H_b-\sum_{i=1}^K m_{b,i}\right)
\]
be the total number of nonmandatory rounds.

\begin{theorem}[Blockwise exact fairness and regret]
\label{thm:blockwise}
Assume rewards are independent and supported in \([0,1]\). BDQ-UCB satisfies:
\begin{enumerate}[label=(\roman*),leftmargin=*]
\item Deterministically, \(N_{b,i}\ge m_{b,i}\) for every block \(b\) and arm \(i\).
\item With probability at least \(1-\eta\),
\begin{align}
      \widehat\Reg_{\mathbf m}(T)
      &\le
      K+
      \sum_{i:\Delta_i>0}
      \min\left\{R\Delta_i,\,\frac{8L}{\Delta_i}\right\}, \notag\\
      &\qquad L=\log\left(\frac{2KT}{\eta}\right).
  \end{align}
\item With probability at least \(1-\eta\),
\[
    \widehat\Reg_{\mathbf m}(T)
    \le K+4\sqrt{2KRL}.
\]
\end{enumerate}
\end{theorem}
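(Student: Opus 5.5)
Part (i) is immediate from the algorithm: in block \(b\) the quota phase pulls arm \(i\) exactly \(m_{b,i}\) times. Residual pulls can only add to \(N_{b,i}\). The quota phase also fits inside the block because \(\sum_i m_{b,i}\le H_b\). For (ii) and (iii) the plan is to start from Lemma~\ref{lem:block-id}. Since the quota phase contributes exactly \(\sum_b m_{b,i}\) pulls of arm \(i\), we have \(N_i(T)-\sum_b m_{b,i}=N_i^{\mathrm{res}}\), the number of \emph{residual} pulls of arm \(i\). Hence
\[
\widehat\Reg_{\mathbf m}(T)=\sum_{i:\Delta_i>0}\Delta_i N_i^{\mathrm{res}},\qquad \sum_i N_i^{\mathrm{res}}=R.
\]
The rest of the argument is therefore a UCB analysis in which only residual pulls are charged, while confidence widths use all observations.

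Next, define the good event \(\calE\): for every arm \(i\) and every count \(n\in[T]\), the empirical mean of the first \(n\) rewards of arm \(i\) lies within \(\sqrt{L/(2n)}\) of \(\mu_i\), or within the looser width \(\sqrt{2L/n}\) used by the algorithm. I will fix the exact constant so that the final \(8L/\Delta_i\) comes out. By Hoeffding and a union bound over \(K\) arms, \(T\) counts and two sides, \(\Prob(\calE^c)\le 2KT\,e^{-\Theta(L)}\le\eta\) with \(L=\log(2KT/\eta)\). Here I use the standard stack-of-rewards argument: the \(n\)-th reward of arm \(i\) is i.i.d. regardless of when it is drawn, so the interleaving of forced and residual pulls creates no adaptivity issue.

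On \(\calE\), take a residual round in which a suboptimal arm \(i\) with \(N_i\ge1\) is chosen. Then \(U_i\ge U_{i^\star}\ge\mu_\star\), and also \(U_i\le\mu_i+2\sqrt{2L/N_i}\). This gives \(N_i<8L/\Delta_i^2\), after adjusting constants to match the statement. So after its first observation, arm \(i\) receives residual pulls only while its total count is below \(8L/\Delta_i^2\). The total count includes forced pulls, which can only help, so \(N_i^{\mathrm{res}}\le 1+8L/\Delta_i^2\). The "+1" accounts for a possible initialization pull when \(N_i=0\). Since \(\Delta_i\le1\), summing these costs \(\Delta_i\cdot 1\) over arms gives the additive \(K\). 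Trivially \(N_i^{\mathrm{res}}\le R\) as well. Combining the two bounds yields \(\Delta_iN_i^{\mathrm{res}}\le \Delta_i+\min\{R\Delta_i,8L/\Delta_i\}\), and summing gives (ii).

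For (iii), I split arms by a threshold \(\Delta_i\) versus \(\epsilon\). Alternatively, I bound each term directly by \(\Delta_iN_i^{\mathrm{res}}\le\sqrt{8L\,N_i^{\mathrm{res}}}\), which follows from \(N_i^{\mathrm{res}}\lesssim 8L/\Delta_i^2\) after the initialization pull. Cauchy--Schwarz over \(K\) arms with \(\sum_iN_i^{\mathrm{res}}\le R\) then gives \(\sqrt{8LKR}=2\sqrt{2KRL}\), and the constant \(4\) leaves slack for the initialization bookkeeping.

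The main obstacle is the constant and initialization bookkeeping. Specifically, I must make sure that the Hoeffding width matches the index \(\sqrt{2L/N_i}\) with failure \(\le\eta\). I must also check that the count bound \(8L/\Delta_i^2\) holds for residual pulls even when forced pulls inflate \(N_i\). Inflation only shrinks the bonus, so the implication \(N_i<8L/\Delta_i^2\) still holds, but I will verify this carefully. The rest is standard.
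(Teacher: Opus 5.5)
Your proposal is correct and follows the paper's proof essentially step for step. Exact quotas give (i), and Lemma~\ref{lem:block-id} reduces regret to gap-weighted residual counts. The Hoeffding event with the algorithm's own width \(\sqrt{2L/n}\) has failure probability \(2KTe^{-4L}\le\eta\), so no constant adjustment is needed, and it yields \(N_i^{\mathrm{res}}\le 1+8L/\Delta_i^2\). The \(\varepsilon\)-split at \(\varepsilon=\sqrt{8KL/R}\) then gives (iii). Your Cauchy--Schwarz alternative, via \(\Delta_i N_i^{\mathrm{res}}\le \Delta_i+\sqrt{8L(N_i^{\mathrm{res}}-1)}\), is also valid and gives the sharper \(K+2\sqrt{2KRL}\).
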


Taking \(B=1\), \(H_1=T\), and \(m_{1,i}=m\) for every arm recovers the static single-floor guarantee (Theorem~\ref{thm:main}, Appendix~\ref{app:static}) as the special case of Theorem~\ref{thm:blockwise}, up to replacing \(R=T-Km\) by the looser bound \(T\) (Corollary~\ref{cor:static-special}, Appendix~\ref{app:static-special}).

BDQ-UCB extends unchanged to the setting where \(m_b\) is revealed only at the start of block \(b\), without knowledge of future floors. Because feasibility is block-local and the UCB concentration event does not depend on future floors, all realized block constraints hold and the regret bounds of Theorem~\ref{thm:blockwise} apply conditional on the realized floor sequence; Appendix~\ref{app:online-floors-proof} makes this precise.

\subsection{Lower Bounds and the Optimal Rate}
\label{sec:block-lower}

This section pins the complexity of blockwise fair bandits from both sides and in both regimes: the residual budget \(R\), not the total horizon \(T\), is the right parameter, in the minimax sense and in the instance-dependent sense. Mandatory pulls are matched by the comparator, so the unavoidable exploration cost lives in the nonmandatory rounds.

\begin{theorem}[Minimax lower bound]
\label{thm:block-lower}
There is a universal constant \(c>0\) such that, for every \(K\ge2\) and every residual budget \(R\ge K\), there exists a blockwise-fair instance with total nonmandatory budget \(R\) for which every blockwise-fair policy \(\pi\) satisfies
\[
    \sup_{\mu\in[0,1]^K}\Reg_{\mathbf m}^{\pi}(T)
    \ge
    c\sqrt{KR}.
\]
Consequently, BDQ-UCB is minimax optimal for blockwise fair regret up to logarithmic factors.
\end{theorem}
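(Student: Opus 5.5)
The plan is to reduce to the classical minimax lower bound for \(K\)-armed stochastic bandits with horizon \(R\), using the blockwise fair-regret identity (Lemma~\ref{lem:block-id}) to show that mandatory pulls contribute nothing. The simplest instance is a single block (\(B=1\)) with floors \(m_{1,i}=0\) and \(H_1=R\). The statement stresses that the bound holds \emph{even with positive mandatory exposure}, so I would instead take \(B=1\), \(H_1=R+Km_0\), and \(m_{1,i}=m_0\) for an arbitrary \(m_0\ge0\). This block has nonmandatory budget exactly \(R\).

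For any blockwise-fair policy \(\pi\), Lemma~\ref{lem:block-id} gives
\[
\Reg^\pi_{\mathbf m}(T)=\sum_{i\neq i^\star}\Delta_i\,\E\bigl[N_i(T)-m_0\bigr],
\]
where every summand is nonnegative by fairness. Define the excess counts \(\tilde N_i=N_i(T)-m_0\ge0\). These satisfy \(\sum_i\tilde N_i=R\), so fair regret is exactly the standard pseudo-regret of the excess pulls over \(R\) "free" rounds. The mandatory pulls only give the learner extra samples, \(m_0\) per arm, and this is the one place where care is needed.

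I would then run the standard construction of \citep{bubeck2012regret,lattimore2020bandit}. Take Bernoulli arms with base instance \(\mu^{(0)}=(1/2,\dots,1/2)\) and, for each \(j\), a perturbed instance \(\mu^{(j)}\) that raises arm \(j\) to \(1/2+\varepsilon\). Under \(\mu^{(j)}\) the regret is \(\varepsilon\,\E_j[R-\tilde N_j]\). Averaging over \(j\) and using \(\sum_j\E_0[\tilde N_j]=R\), some \(j\) has \(\E_0[\tilde N_j]\le R/K\). By the KL chain rule, the divergence between the laws of the history under \(\mu^{(0)}\) and \(\mu^{(j)}\) is at most \(c_0\varepsilon^2\,\E_0[N_j(T)]=c_0\varepsilon^2(m_0+\E_0[\tilde N_j])\). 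Pinsker's inequality then bounds \(\E_j[\tilde N_j]-\E_0[\tilde N_j]\) by \(R\sqrt{\mathrm{KL}/2}\).

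The main obstacle is the term \(m_0\) in the KL bound. Forced pulls are informative, and if \(m_0\gg R/K\) they shrink the lower bound. I see two ways to handle it.

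\textbf{Option 1: statement as written.} The theorem only asserts existence of some instance with budget \(R\), so I can simply choose \(m_0=0\), or \(m_0\) at most of order \(R/K\), say \(m_0=\lfloor R/K\rfloor\). The divergence is then \(O(\varepsilon^2R/K)\). Choosing \(\varepsilon=c_1\sqrt{K/R}\le1/4\), which is valid since \(R\ge K\), gives \(\E_j[\tilde N_j]\le R/2\) for suitable constants. Hence the regret is at least \(\varepsilon R/2=c\sqrt{KR}\), and mandatory exposure is still a constant fraction of the horizon.

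\textbf{Option 2: arbitrary mandatory exposure.} To get the claim for any amount of mandatory exposure, I would make the mandatory pulls uninformative. Place the forced quota in a block whose rewards carry no information, or, more cleanly, put the floors in a separate first block, with the adversary's perturbation scaled so that \(\varepsilon^2(m_0+R/K)\) stays bounded. Since the stated theorem needs only existence, I would present Option 1 and remark on this extension.

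Finally, \(\sup_\mu\) dominates \(\max_j\), which gives the bound. Combining it with Theorem~\ref{thm:blockwise}(iii), \(O(\sqrt{KRL})\), yields minimax optimality up to logarithmic factors.
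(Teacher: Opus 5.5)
Your Option~1 with $m_0=0$ is essentially the paper's proof: a single zero-floor block of length $R$ makes every policy blockwise fair, and the divergence decomposition plus Pinsker gives $c\sqrt{KR}$. Your pigeonhole choice of one arm $j$ with $\E_0[\tilde N_j]\le R/K$, in place of the paper's averaging over arms with Cauchy--Schwarz, is an inessential variant, though at $K=2$ you only get $\E_j[\tilde N_j]\le R/2+O(c_1)R$, which changes the constant but not the rate. For your Option~2 remark, shrinking $\varepsilon$ to keep $\varepsilon^2(m_0+R/K)$ bounded would lose the $\sqrt{KR}$ rate once $m_0\gg R/K$; Proposition~\ref{prop:positive-floor-lower} instead puts all mandatory mass on a reference arm whose law is identical across the family, so those pulls contribute zero divergence.
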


The logarithmic gap between Theorems~\ref{thm:blockwise} and~\ref{thm:block-lower} is removable: a MOSS index \citep{audibert2009minimax} on residual observations gives BDQ-MOSS with expected blockwise fair regret at most \(C\sqrt{KR}+K\), so the minimax rate is \(\Theta(\sqrt{KR})\) for \(R\ge K\) (Theorem~\ref{thm:bdqmoss}). BDQ-MOSS discards mandatory observations, which makes the reduction black-box; the UCB variant uses every observation, carries the gap-dependent guarantee of Theorem~\ref{thm:blockwise}(ii), and is the one we run. The instance-dependent characterization, the positive-mandatory-exposure lower bound, and the kl-UCB\(^{++}\) variant are in Appendix~\ref{app:blockwise-extra}.

The blockwise model is also strictly more expressive than any single global fraction: a global final-count constraint fixes total exposure but not \emph{when} an arm receives it, and a block-independent per-block surrogate enforces timing only by over-serving blocks where no exposure is due. Proposition~\ref{prop:separation} makes this precise on a two-arm rotating-floor family: no global final count encodes the constraints, and any block-independent bound either violates a floor or pays \(\Omega(\alpha\Delta T)\) extra regret, while BDQ-UCB is exactly feasible at \(O(\sqrt{KR\log KT})\). The separation is scoped to those two surrogate classes; deadline-aware dynamic penalties escape it once tuned, which is why our experiments carry a deadline-aware Lagrangian as a calibration baseline.

\section{Overlapping Group Floors and Set-Discrepancy Rounding}
\label{sec:groupfloor}

The constructions so far use per-arm floors, which cyclic quota rounding (Lemma~\ref{lem:balanced}) already satisfies exactly. We now turn to the setting that justifies the discrepancy view as more than a vocabulary: \emph{overlapping} group floors, where a single pull credits several groups at once. Let \(\calG\) be a collection of arm subsets (groups), and require that within each block every group \(g\in\calG\) receive at least \(f_g\) pulls in aggregate, that is, \(\sum_{i\in g}N_{b,i}\ge f_g\). Such constraints arise when a provider belongs to several protected categories simultaneously (a film is both independent and foreign; a seller sits in several promotional tiers). Each arm belongs to at most \(t\) groups, the maximum \emph{arm degree}.

Per-arm rounding is no longer enough here. Rounding a group-feasible fractional plan \(x\) to integral pulls incurs a group exposure error \(\bigl|\sum_{i\in g}(n_i-x_i)\bigr|=\bigl|\sum_{i\in g}D_i\bigr|\), which is exactly a signed-sum set discrepancy of the per-arm discrepancy vector \(D=n-x\) over the set system \(\calG\). Bounding group violation is therefore a set-discrepancy problem, and this is where the identity of Proposition~\ref{thm:reduce} becomes essential, because the classical Beck--Fiala theorem \citep{beckfiala1981} bounds exactly this quantity. The failure of group-blind rounding is not hypothetical.

\begin{proposition}[Naive rounding fails on group floors]
\label{prop:naive-group}
There is a group-floor instance with groups of size \(s\) on which independent per-arm nearest-integer rounding (the group-blind analogue of cyclic quota rounding) violates some group floor by \(\Omega(s)\).
\end{proposition}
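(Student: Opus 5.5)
The plan is an explicit one-block construction: a single group of \(s\) arms, a group-feasible fractional plan whose per-arm entries all round down, and a direct computation of the shortfall. Take one block (\(B=1\)) of length \(H\) with \(K\ge s\) arms. Let \(g=\{1,\dots,s\}\) be one group with floor \(f_g\), and, if we want overlap to be visibly present, add singleton or pairwise groups that play no role in the violation. Choose the fractional plan \(x\in\R_{\ge0}^K\), \(\sum_i x_i=H\), so that every arm in \(g\) receives
\[
    x_i=c+\tfrac12-\epsilon,
\]
with \(c\in\mathbb{Z}_{\ge0}\) and a small \(\epsilon\in(0,1/2)\). Assign the remaining mass to arms outside \(g\). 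Set the group floor to the fractional group total rounded down,
\[
    f_g=\Bigl\lfloor s\bigl(c+\tfrac12-\epsilon\bigr)\Bigr\rfloor .
\]
Then \(x\) is group-feasible, since \(\sum_{i\in g}x_i\ge f_g\).

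Next I apply independent per-arm nearest-integer rounding. Each \(x_i\) with \(i\in g\) has fractional part \(\tfrac12-\epsilon<\tfrac12\), so it rounds to \(n_i=c\). The rounded group count is therefore \(\sum_{i\in g}n_i=sc\), and the violation is
\[
    f_g-sc=\Bigl\lfloor s\bigl(\tfrac12-\epsilon\bigr)\Bigr\rfloor\ \ge\ s/4-1 \quad\text{for }\epsilon\le 1/4,
\]
which is \(\Omega(s)\). In the language of Proposition~\ref{thm:reduce}, this is the set-discrepancy statement that \(\sum_{i\in g}D_i=-s(\tfrac12-\epsilon)\): all per-arm errors have the same sign and accumulate.

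Two details need checking. The first is that the block length stays consistent. The arms outside \(g\) can absorb the lost mass, and rounding may change the total number of pulls. If the statement requires an exact length \(H\), I would let the leftover rounds go to an arm outside \(g\), or to residual UCB pulls on arms outside \(g\). Either choice leaves the group-\(g\) count at \(sc\), so the violation is unchanged. The second is that "violation" refers to the mandatory, rounded quota before residual rounds. To be safe, I would take the residual budget of the block to be zero, or note that the group-blind rounding is the one used as the quota executor. Then the mandatory schedule itself misses \(f_g\).

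The main obstacle is stating precisely what "per-arm rounding" means, so that the counterexample cannot be dismissed by a rounding rule that happens to respect the total. My first attempt is to fix the rule as \(n_i=\lfloor x_i+1/2\rfloor\) independently per arm, as the proposition says. If a total-preserving variant is intended, such as largest-remainder rounding on all arms, I would place \(\Theta(s)\) arms outside \(g\) with fractional parts slightly above \(1/2\). Those arms then absorb all the round-ups, and \(g\) still loses \(\Omega(s)\).
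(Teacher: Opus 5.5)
Your construction is essentially the paper's: a single group of \(s\) arms whose plan entries all have fractional part \(\tfrac12-\epsilon\), so nearest-integer rounding sends every coordinate down and the group total falls short by \(s(\tfrac12-\epsilon)=\Omega(s)\). The only differences are that you take the integer floor \(f_g=\lfloor\sum_{i\in g}x_i\rfloor\) where the paper sets \(f_g=\sum_{i\in g}x_i\) directly, and that your remarks on block length and total-preserving rounding go beyond what the paper's proof addresses.
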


One might hope to escape by rounding every coordinate up instead, but that inflates the mandatory phase by up to one pull per arm and can exceed the rounds reserved for it; the real problem is meeting every group floor \emph{within the block budget}, and that is what the discrepancy algorithm delivers.

\begin{theorem}[Group-fair rounding via Beck--Fiala]
\label{thm:groupbf}
Let every arm belong to at most \(t\) groups of \(\calG\), and let \(x\in\R_{\ge0}^K\) be a fractional block plan whose group totals satisfy \(\sum_{i\in g}x_i\ge f_g+t\) for every \(g\in\calG\) and whose ceiling budget satisfies \(\sum_i\lceil x_i\rceil\le C\), where \(C\) is the number of rounds reserved for the mandatory phase of the block. Beck--Fiala null-space rounding outputs, in polynomial time, an integral allocation \(n\) with \(n_i\in\{\lfloor x_i\rfloor,\lceil x_i\rceil\}\) for every arm (hence \(\sum_i\lfloor x_i\rfloor\le\sum_i n_i\le\sum_i\lceil x_i\rceil\le C\)) and with
\[
    \Bigl|\sum_{i\in g}(n_i-x_i)\Bigr| < t
    \qquad\text{for every } g\in\calG,
\]
independent of the group size \(s\) and of \(|\calG|\). Consequently every group floor \(f_g\) is met within the reserved budget.
\end{theorem}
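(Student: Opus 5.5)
The plan is to run the classical Beck--Fiala iterative null-space walk on the fractional parts of \(x\). Write \(x_i=\lfloor x_i\rfloor+r_i\) with \(r_i\in[0,1)\), and maintain a vector \(y\in[0,1]^K\) initialized at \(y=r\). Call a coordinate \emph{floating} if \(0<y_i<1\); integral coordinates are frozen permanently. At each step, call a group \emph{active} (big) if it contains more than \(t\) floating arms. By double counting incidences, the number of active groups is strictly less than the number of floating arms \(F\): each floating arm lies in at most \(t\) groups, so there are at most \(tF\) incidences, and each active group absorbs more than \(t\) of them. Hence the linear system \(\sum_{i\in g}z_i=0\) over active \(g\), with \(z\) supported on the floating coordinates, has a nonzero solution. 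We move \(y\leftarrow y+\lambda z\) with the largest \(\lambda\) that keeps \(y\in[0,1]^K\). This freezes at least one more coordinate and leaves every active-group sum unchanged. After at most \(K\) iterations, each a linear solve, all coordinates are integral, and we set \(n_i=\lfloor x_i\rfloor+y_i\). This gives \(n_i\in\{\lfloor x_i\rfloor,\lceil x_i\rceil\}\) immediately, and the sandwich \(\sum_i\lfloor x_i\rfloor\le\sum_in_i\le\sum_i\lceil x_i\rceil\le C\) follows.

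The discrepancy bound for a fixed group \(g\) comes from the usual accounting. While \(g\) is active, its sum \(\sum_{i\in g}y_i\) is invariant. Consider the first moment \(g\) becomes inactive, so that it has at most \(t\) floating coordinates. From then on only those coordinates can change, each by an amount in \((-1,1)\). The total change is therefore at most \(t\) in absolute value, and it equals \(t\) only if every such coordinate moves by exactly \(1\), which is impossible from a strictly interior starting value. This yields \(\bigl|\sum_{i\in g}(n_i-x_i)\bigr|<t\). For a group that starts inactive, the same argument applies from time zero.

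\textbf{Main obstacle.} The delicate point is the strictness of the inequality together with the count of active groups. When \(g\) becomes inactive with exactly \(t\) floating coordinates, each of them lies strictly inside \((0,1)\), so each moves by strictly less than \(1\) in absolute value; strictness follows provided the number of floating coordinates is at least one. If it is zero, the change is \(0<t\) whenever \(t\ge1\). The degenerate case \(t=0\) means no groups, so the statement is vacuous. I would also handle the edge case of a group with no floating arms from the start, where the discrepancy is exactly \(0\).

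With the bound in hand, feasibility is immediate. For every group,
\[
\sum_{i\in g}n_i>\sum_{i\in g}x_i-t\ge f_g.
\]
Since the left side is an integer and \(f_g\) is an integer, the floor \(f_g\) is met. Every \(n_i\) is at most \(\lceil x_i\rceil\), so the allocation fits within the reserved budget \(C\). Polynomial time follows because there are at most \(K\) rounds, each a Gaussian elimination plus a ratio test.
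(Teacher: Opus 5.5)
Your proposal is correct and is essentially the paper's proof: the same Beck--Fiala walk on the fractional parts, and the same incidence count showing that groups with more than \(t\) floating coordinates are fewer than the floating variables. It also uses the same invariance of those groups' sums while they are active, the same strict \(<t\) drift bound once a group is released, and the same integrality and ceiling-budget conclusions. The only differences are cosmetic: you move along one direction of the null-space vector rather than \(\pm v\), and you spell out the case of a group released with no floating coordinates, which the paper leaves implicit.
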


The rounding composes with learning exactly as in the per-arm case, giving a group-fair bandit policy rather than a rounding statement alone.

\begin{corollary}[Group-BDQ-UCB]
\label{cor:groupbdq}
In each block \(b\), let \(x_b\) be any group-feasible fractional plan with slack \(t\) whose ceiling budget satisfies \(\sum_i\lceil x_{b,i}\rceil\le H_b\), let \(n_b\) be its Beck--Fiala rounding, execute the \(n_{b,i}\) mandatory pulls of each arm, and run UCB on the remaining \(R_b=H_b-\sum_i n_{b,i}\) rounds of the block. Then every group floor is met in every block deterministically, and with probability at least \(1-\eta\) the fair regret relative to the comparator that executes the same mandatory allocations \((n_b)_{b\le B}\) and assigns every residual round to a best arm is at most \(K+4\sqrt{2KRL}\), with \(R=\sum_b R_b\) and \(L=\log(2KT/\eta)\).
\end{corollary}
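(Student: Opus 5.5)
The corollary has two parts, and I would treat them separately: deterministic feasibility, which comes from Theorem~\ref{thm:groupbf} block by block, and the regret bound, which I would reduce to the argument already given for Theorem~\ref{thm:blockwise} by regarding the rounded allocations \((n_b)\) as per-arm floors.

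\textbf{Feasibility.} Fix a block \(b\) and apply Theorem~\ref{thm:groupbf} to \(x_b\) with \(C=H_b\).
\begin{itemize}
\item The hypotheses hold: the group totals have slack \(t\), and \(\sum_i\lceil x_{b,i}\rceil\le H_b\).
\item The rounding \(n_b\) therefore satisfies \(\sum_i n_{b,i}\le H_b\), so \(R_b\ge0\) and the mandatory phase fits inside the block.
\item For each group \(g\), \(\sum_{i\in g}n_{b,i}>\sum_{i\in g}x_{b,i}-t\ge f_g\).
\item The residual UCB rounds only add pulls, so \(\sum_{i\in g}N_{b,i}\ge\sum_{i\in g}n_{b,i}\ge f_g\).
\end{itemize}
This holds for every block and group on every sample path, since the rounding is deterministic and does not depend on rewards.

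\textbf{Regret.} The key observation is that, once \((n_b)\) is fixed, the policy is exactly BDQ-UCB (Algorithm~\ref{alg:bdqucb}) run with per-arm floors \(m_{b,i}:=n_{b,i}\). The stated comparator, which executes \((n_b)\) and sends all residual rounds to a best arm, is then precisely the best blockwise-fair allocation for these floors, so its value is \(\OPT_{\mathbf n}(T)\). Lemma~\ref{lem:block-id} gives
\[
\widehat\Reg=\sum_{i\ne i^\star}\Delta_i\Bigl(N_i(T)-\sum_b n_{b,i}\Bigr),
\]
and the nonmandatory budget is \(\sum_b(H_b-\sum_i n_{b,i})=R\). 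Theorem~\ref{thm:blockwise}(iii) then yields \(K+4\sqrt{2KRL}\) with probability at least \(1-\eta\), with \(L=\log(2KT/\eta)\) since \(T=\sum_b H_b\) is unchanged.

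\textbf{Main obstacle.} The one step that needs justification is that Theorem~\ref{thm:blockwise} applies when the floors are produced by the rounding procedure rather than given as input. Its proof needs only two properties of the floors.
\begin{itemize}
\item The floors are integral and satisfy \(\sum_i m_{b,i}\le H_b\). Both hold here.
\item The floors are not chosen adaptively in response to rewards, so that the UCB concentration event is unaffected. Here \(x_b\) and \(n_b\) are computed from \(\calG\), the \(f_g\), and \(H_b\) only, so they are deterministic.
\end{itemize}
If one wanted to allow \(x_b\) to be revealed online, I would invoke the online-floors extension in Appendix~\ref{app:online-floors-proof}. There the regret bound holds conditional on the realized floor sequence, because the concentration event is uniform over arms and sample counts and does not depend on the floors.
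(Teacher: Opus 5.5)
Your proposal is correct and follows the paper's proof. Feasibility comes from Theorem~\ref{thm:groupbf} applied with \(C=H_b\), and regret from Lemma~\ref{lem:block-id} with \(m_{b,i}\) replaced by \(n_{b,i}\) plus the residual UCB counting of Theorem~\ref{thm:blockwise}; the only difference is that you invoke Theorem~\ref{thm:blockwise}(iii) as a black box for BDQ-UCB with floors \(n_b\), whereas the paper re-runs its concentration and \(\varepsilon\)-split argument verbatim.
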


The proof, in the appendix, is the residual argument of Theorem~\ref{thm:blockwise} verbatim: the mandatory pulls are comparator-matched by construction, so only residual pulls of suboptimal arms contribute. Relative to the fractional plan itself, the mandatory phase additionally changes reward by the weighted discrepancy \(-\mu^\top(n_b-x_b)\), which the per-coordinate containment bounds by the number of fractional coordinates in the block; this is the unavoidable price of integrality, not a learning cost. Replacing the residual rule by the MOSS index as in Theorem~\ref{thm:bdqmoss} yields Group-BDQ-MOSS with expected group-fair regret \(C\sqrt{KR}+K\) by the same embedded-game reduction.

The contrast between the two rounding rules is sharp, and it is what makes the discrepancy bridge essential rather than cosmetic. On a row/column set system over an \(a\times b\) grid (every arm in \(t=2\) groups, group size \(s=\max(a,b)\)) with the adversarial half-integral plan of Proposition~\ref{prop:naive-group}, Beck--Fiala group violation stays below \(t=2\) at every scale while naive nearest rounding grows linearly in \(s\), exactly as the construction predicts (Table~\ref{tab:groupdisc}, Appendix~\ref{app:experiments-to-add}). When \(t\ll s\) the discrepancy algorithm is provably and unboundedly better than per-arm rounding.

\section{Learning the Group Plan}
\label{sec:learned-plan}

So far the fractional plan has been an input: Corollary~\ref{cor:groupbdq} guarantees regret only against the comparator that executes the \emph{same} plan. Beyond off-the-shelf rounding lies the harder question of whether the plan itself can be learned, so that the benchmark becomes the best group-fair plan rather than the given one. We answer with an optimistic planning layer around the same Beck--Fiala engine, a regret guarantee against the per-block fractional optimum, and a lower bound showing that this stronger benchmark carries an unavoidable new cost.

We answer with OG-BDQ-UCB, an optimistic planner that in each block solves a linear program over a slacked plan polytope (group floors raised by the arm degree \(t\), budget reduced by \(2K\) for rounding and initialization), rounds the solution with the same Beck--Fiala engine, and spends the reserve on the residual UCB rule. Against \(\OPT^{\mathrm{ad}}\), the best slack-feasible per-block plan with its reserve on a best arm, it is exactly feasible and satisfies \(\OPT^{\mathrm{ad}}-V\le 3KB+4H_{\max}\sqrt{2LB}\) with probability at least \(1-\eta\) (Theorem~\ref{thm:ogbdq}, Appendix~\ref{app:learned-plan}); the built-in slack costs only \(O(B)\) against the unslacked optimum under uniform Slater-type margins (Proposition~\ref{prop:slater-gap}). More interesting is that this stronger benchmark is genuinely more expensive: in every \(R\)-parametrized result above, forced pulls cancel because the comparator matches them, but when the comparator instead places the forced mass optimally \emph{within} each group, forcing exposure onto arms whose ordering is still unresolved is itself an exploration cost.

\begin{proposition}[The price of plan adaptivity]
\label{prop:adaptivity-lower}
For every \(B\ge1\) and every even \(H\ge2\) there is a family of blockwise group-floor instances (\(K=2B+1\) arms, \(B\) blocks of length \(H\), \emph{disjoint} groups of size two (arm degree \(t=1\)), group floor \(H/2\) on one fresh group per block) on which every blockwise-group-fair policy \(\pi\) satisfies
\[
    \sup_{\mathrm{family}}\ \bigl[\OPT^{\mathrm{frac}}-\E[V^\pi]\bigr]\ \ge\ c\,B\sqrt H
\]
for a universal constant \(c>0\), where \(\OPT^{\mathrm{frac}}\) is the sum of per-block fractional group-fair optima. Since \(T=BH\), the bound is \(c\sqrt B\cdot\sqrt T\), while the same instances admit a fixed-plan comparator against which Corollary~\ref{cor:groupbdq} achieves regret \(O(\sqrt{KRL})\).
\end{proposition}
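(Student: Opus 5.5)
The plan is to reduce the bound to $B$ independent two-armed identification problems, one per block. Each is forced to spend $H/2$ pulls inside its group before the relative order of the two arms can be learned with confidence.

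\emph{Construction.} Take arms $\{0\}\cup\{(b,1),(b,2):b\le B\}$. Group $g_b=\{(b,1),(b,2)\}$ is active only in block $b$, with floor $H/2$. Set $\mu_0=1/2$ and, for a hidden sign vector $\sigma\in\{\pm1\}^B$, let $\mu_{(b,1)}=1/2-\epsilon\sigma_b$ and $\mu_{(b,2)}=1/2+\epsilon\sigma_b$, with Bernoulli rewards and $\epsilon=c_0/\sqrt H$. The group arms of block $b$ are fresh: they appear in no earlier floor, and arm $0$ carries no information about them. The per-block fractional optimum places the $H/2$ forced pulls on the better arm of $g_b$ and the remaining $H/2$ residual pulls on a best arm overall (value $1/2+\epsilon$). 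Hence $\OPT^{\mathrm{frac}}$ charges nothing for choosing the wrong member of the group.

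\emph{Per-block loss.} Write $N^-_b$ for the number of pulls in block $b$ of the worse arm of $g_b$ under $\sigma$. Every group-fair policy spends at least $H/2$ pulls in $g_b$, so the block's regret is at least $2\epsilon\,\E[N^-_b]$ after comparing within the group. Residual pulls only add regret, so they can be discarded. The key step is to show that, under the uniform prior on $\sigma_b$, $\E[N^-_b]\ge cH$. The argument pairs $\sigma$ with $\sigma^{(b)}$, which flips coordinate $b$. Information about $\sigma_b$ can come only from pulls of the two arms in $g_b$; by freshness no such pulls occur before block $b$, and later pulls are irrelevant to block $b$'s loss. By the KL chain rule, the divergence between the two laws restricted to block $b$ is at most $H\cdot\mathrm{kl}(1/2-\epsilon,1/2+\epsilon)\le C\epsilon^2H=Cc_0^2$, a constant.

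To turn this into a bound on $N^-_b$, I use Bretagnolle--Huber (or Pinsker) on the event that the policy's first half-block of group pulls leans toward arm $(b,2)$. This event has constant probability under both $\sigma_b=\pm1$, and its complement is handled symmetrically. The two laws then satisfy $\E_\sigma[N^-_b]+\E_{\sigma^{(b)}}[N^-_{b}]\ge (H/2)\cdot c'$, because the allocation counts inside $g_b$ must sum to $H/2$ and the policy cannot tell which arm is worse during the early pulls. The right functional is the fraction of the first $H/2$ in-group pulls assigned to arm $(b,1)$. Its expectation changes by at most $O(H\sqrt{\mathrm{KL}})\le H/4$ between the two laws when $c_0$ is small, so one of the two laws puts $\Omega(H)$ pulls on the worse arm.

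\emph{Assembly.} Summing over blocks and averaging over $\sigma$ gives a loss of $\sum_b 2\epsilon\cdot cH=\Omega(B\sqrt H)$, and the supremum over the family dominates this average. Since $T=BH$, this equals $\Omega(\sqrt B\sqrt T)$. For the comparison claim, note that the fixed-plan comparator of Corollary~\ref{cor:groupbdq} uses whatever mandatory allocation the algorithm rounded to. Its regret is therefore $K+4\sqrt{2KRL}$ with $R=BH/2$ and $K=2B+1$, which is $O(\sqrt{KRL})$ as stated.

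The main obstacle is making the per-block indistinguishability rigorous for adaptive policies. Two issues arise. First, the in-group pulls of block $b$ can be interleaved with residual pulls of other arms, so the KL bound must be computed on the full block trajectory; only pulls of $g_b$ contribute divergence, and there are at most $H$ of them. Second, $N^-_b$ is not bounded by a single event probability, so I will work with the expected in-group fraction and apply the standard bound $|\E_P f-\E_Q f|\le \|f\|_\infty\sqrt{\mathrm{KL}/2}$, with $f=N_{(b,1)}/H$.
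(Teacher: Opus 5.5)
Your proof has a genuine gap at the information step. You assert that ``by freshness no such pulls occur before block $b$,'' but that is false. Freshness only means $g_b$ carries no floor before block $b$. Nothing stops a blockwise-group-fair policy from pulling $(b,1)$ and $(b,2)$ in the unconstrained rounds of blocks $1,\dots,b-1$. In your instance this is even attractive, because the good pair arms are best arms overall.

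The correct chain-rule bound for the laws of the trajectory through the end of block $b$ is therefore $\mathrm{kl}(\tfrac12-\epsilon,\tfrac12+\epsilon)\cdot\E[\text{pulls of } g_b \text{ through block } b]$, and that expectation can be of order $bH$, not $H$. A policy that spends earlier residual rounds exploring one particular pair makes that coordinate's two laws distinguishable, and your per-block claim $\E[N^-_b]\ge cH$ genuinely fails for that $b$. You also cannot recover by charging the exploration elsewhere, because your decomposition throws away all regret except $2\epsilon N^-_b$ (``residual pulls only add regret''). In your construction, residual pulls of good pair arms cost nothing against $\OPT^{\mathrm{frac}}$ anyway.

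The missing idea is the Markov selection step the paper uses. Let $M_b$ be the number of pulls of pair $b$ (over the horizon, or through block $b$), and set $q_b=2^{-B}\sum_\sigma\E_\sigma[M_b]$. Disjointness gives $\sum_b M_b\le BH$ pathwise, so $\sum_b q_b\le BH$. By Markov's inequality, at least $3B/4$ coordinates satisfy $q_b\le 4H$.

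For such a $b$, run your two-point argument and then average over $\sigma_{-b}$. The averaged divergence is at most $16\epsilon^2\cdot 2q_b\le 128c_0^2$. Jensen's inequality carries the constant through the average: $\E\sqrt{X}\le\sqrt{\E X}$ for your Pinsker-on-a-bounded-functional version, or $\E e^{-X}\ge e^{-\E X}$ for Bretagnolle--Huber. Summing over those $3B/4$ blocks gives $\Omega(B\sqrt H)$.

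With this step added, the rest of your route works, including the Pinsker bound on $f=N_{(b,1)}/H$ restricted to the $\sigma$-field at the end of block $b$. Your star mean of $1/2$, against the paper's $3/4$, is immaterial. Drop the event-based sentence (``constant probability under both signs''), which is unjustified as stated; the functional version replaces it.

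One minor point: for the fixed-plan comparison, the plan needs unit slack ($H/2+1$ on the good arm) to satisfy the hypothesis of Corollary~\ref{cor:groupbdq}. This gives $R=B(H/2-1)$ rather than $BH/2$, which does not change the rate.
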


Each block introduces a fresh pair of arms whose within-pair ordering is unknown and must absorb half the block as forced exposure; an Assouad-type averaging over independent sign patterns, with a Bretagnolle--Huber two-point bound per block (cf.\ \citealp{lattimore2020bandit}), gives the result, and a Markov selection step keeps the per-block information cost bounded without restricting the regime (proof in the appendix). Because the groups are disjoint, the cost is attributable entirely to learning the plan, not to the rounding.

For disjoint systems, in fact, the gap closes entirely. No rounding is needed there, the per-block optimum decomposes across groups, and a per-pull optimistic covering rule turns each group's forced mass into its own embedded bandit. D-BDQ-UCB executes, in each block \(b\), exactly \(f_{b,g}\) covering pulls for every group \(g\) (each selected as the index-maximizing member \(\argmax_{i\in g}U_i\), with indices updated after every pull and computed from all observations) and spends the surplus \(H_b-\sum_g f_{b,g}\) rounds by the global UCB rule.

\begin{theorem}[Disjoint groups: the gap closes]
\label{thm:disjoint}
Let the groups be pairwise disjoint and let every block satisfy \(\sum_g f_{b,g}\le H_b\). D-BDQ-UCB meets every group floor in every block exactly and deterministically, and with probability at least \(1-\eta\) its realized mean value \(V\) satisfies
\[
\begin{aligned}
    \OPT^{\mathrm{frac}}-V
    &\ \le\ 4K+4\sqrt{2L}\Bigl(\sum_{g\in\calG}\sqrt{|g|F_g}+\sqrt{KR'}\Bigr)\\
    &\ \le\ 4K+8\sqrt{KTL},
\end{aligned}
\]
where \(F_g=\sum_b f_{b,g}\) is the group's total floor and \(R'=T-\sum_g F_g\) the total surplus. On the family of Proposition~\ref{prop:adaptivity-lower} the bound is \(O(\sqrt L\cdot B\sqrt H)\), matching the lower bound up to \(\sqrt L\) in every regime; with singleton groups it recovers Theorem~\ref{thm:blockwise}, whose strong and plan-matched comparators coincide.
\end{theorem}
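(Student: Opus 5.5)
Feasibility is immediate and I will dispose of it first. In block \(b\) the rule executes exactly \(f_{b,g}\) covering pulls whose arms lie in \(g\). Since the groups are disjoint, each covering pull credits exactly one group. Because \(\sum_g f_{b,g}\le H_b\), the covering phase fits in the block. Hence \(\sum_{i\in g}N_{b,i}\ge f_{b,g}\) deterministically.

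For regret I first identify the benchmark. Because the groups are disjoint, the per-block fractional optimum decomposes. Group \(g\) receives its forced mass \(f_{b,g}\) on its best member \(\mu^\star_g=\max_{i\in g}\mu_i\). The surplus \(H_b-\sum_g f_{b,g}\) goes to \(\mu_\star\). The only other option is to push surplus into a group, and that never beats \(\mu_\star\) since \(\mu^\star_g\le\mu_\star\). So
\[
    \OPT^{\mathrm{frac}}=\sum_g F_g\,\mu^\star_g+R'\mu_\star .
\]
Summing over blocks, the realized value splits the same way. This gives the decomposition
\[
    \OPT^{\mathrm{frac}}-V=\sum_g\sum_{i\in g}(\mu^\star_g-\mu_i)\,N^{\mathrm{cov}}_{g,i}+\sum_i(\mu_\star-\mu_i)\,N^{\mathrm{sur}}_i,
\]
where \(N^{\mathrm{cov}}_{g,i}\) counts covering pulls of \(i\) made on behalf of \(g\), and \(N^{\mathrm{sur}}_i\) counts surplus pulls. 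Each group term is the regret of an embedded \(|g|\)-armed bandit with horizon \(F_g\). The surplus term is a \(K\)-armed bandit with horizon \(R'\), exactly as in the residual argument of Theorem~\ref{thm:blockwise}.

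Next I bound each term on the event \(\calE\) that all empirical means satisfy \(|\widehat\mu_i-\mu_i|\le\sqrt{2L/N_i}\) at every time. By a union bound over arms and counts, \(\Prob(\calE)\ge1-\eta\). The key point is that indices use all observations, so an arm pulled in another phase only shrinks its confidence width. On \(\calE\), a covering pull of \(i\in g\) chosen as \(\argmax_{j\in g}U_j\) satisfies
\[
    \mu^\star_g-\mu_i\le U_i-\mu_i\le2\sqrt{2L/N_i},
\]
where \(N_i\) is the total count, which is at least the covering count so far. Summing over the covering pulls of \(i\) gives
\[
    \sum_{\text{covering pulls of }i}(\mu^\star_g-\mu_i)\le 1+4\sqrt{2L\,N^{\mathrm{cov}}_{g,i}}.
\]
The additive 1 is the optimistic \(+\infty\) initialization. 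Cauchy--Schwarz over \(i\in g\) then gives \(|g|+4\sqrt{2L|g|F_g}\). The surplus pulls are handled identically with comparator \(\mu_\star\), giving \(K+4\sqrt{2LKR'}\).

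The main obstacle is this cross-phase bookkeeping. A pull of \(i\) may be charged against two different comparators, \(\mu^\star_g\) in the covering phase and \(\mu_\star\) in the surplus phase, while sharing one count \(N_i\). I will handle it by bounding each phase's sum with \(\sum_{n}\sqrt{1/n}\) over that phase's own pull indices, using the fact that the total count at the \(k\)-th phase pull is at least \(k\). Initialization costs are paid at most once per arm per phase, which gives the crude constant \(4K\) after summing \(\sum_g|g|=K\) with slack.

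Finally, the second inequality follows from Cauchy--Schwarz. Using \(\sum_g|g|\le K\),
\[
    \sum_g\sqrt{|g|F_g}\le\sqrt{K\sum_gF_g}\le\sqrt{KT},
\qquad
    \sqrt{KR'}\le\sqrt{KT}.
\]
On the family of Proposition~\ref{prop:adaptivity-lower} the bound evaluates as follows. There \(|g|=2\) and \(F_g=H/2\) for the \(B\) floored groups, the remaining group is empty of floor, and \(R'=BH/2\). So the bound is \(O(\sqrt L(B\sqrt H+\sqrt{KBH}))=O(\sqrt L\,B\sqrt H)\) using \(K=2B+1\). With singleton groups every covering term vanishes, because \(\mu^\star_g=\mu_i\), and what remains is Theorem~\ref{thm:blockwise}.
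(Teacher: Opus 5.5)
Your proof is correct up to a constant slip in the final step. The slip is the second inequality. You bound \(\sum_g\sqrt{|g|F_g}\le\sqrt{KT}\) and \(\sqrt{KR'}\le\sqrt{KT}\) separately, so the bracket is at most \(2\sqrt{KT}\). That yields \(4K+8\sqrt{2KTL}\), a factor \(\sqrt2\) above the claimed \(4K+8\sqrt{KTL}\). The fix is to use \(\sum_gF_g+R'=T\) jointly, as the paper does:
\[
\sum_g\sqrt{|g|F_g}+\sqrt{KR'}\le\sqrt{\Bigl(\sum_g|g|+K\Bigr)\Bigl(\sum_gF_g+R'\Bigr)}\le\sqrt{2KT}.
\]
This gives exactly \(4\sqrt{2L}\cdot\sqrt{2KT}=8\sqrt{KTL}\).

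Most of your argument matches the paper's proof. The shared pieces are:
\begin{itemize}
\item deterministic feasibility from the exact per-group covering counts;
\item the decomposition of \(\OPT^{\mathrm{frac}}\) using \(\mu^\star_g\le\mu_\star\);
\item the identity \(\OPT^{\mathrm{frac}}-V=\sum_g\sum_{i\in g}\widetilde\Delta_iN^{\mathrm{cov}}_i+\sum_i\Delta_iN^{\mathrm{sur}}_i\);
\item a per-phase analysis on \(\calE\), using that samples from the other phase only increase \(N_i\).
\end{itemize}

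Where you genuinely differ is the per-group bound. The paper turns the selection inequality into a count bound \(N^{\mathrm{cov}}_i\le1+8L/\widetilde\Delta_i^2\), then splits arms at \(\varepsilon_g=\sqrt{8|g|L/F_g}\). You instead sum the confidence radii \(2\sqrt{2L/n}\) over each phase's own pulls and apply Cauchy--Schwarz inside the group. Both routes give \(O(|g|)+4\sqrt{2|g|F_gL}\). Yours has additive \(|g|\) rather than the paper's \(2|g|\), so \(2K\le4K\) overall. The count route also gives a gap-dependent bound of the form \(\sum_{i\in g}(1+8L/\widetilde\Delta_i)\) as a by-product. Your radius-sum route avoids the \(\varepsilon\)-split, and it is the same mechanism the paper later uses for the \(\sum_\tau\rho_{A_\tau}\) term of P-BDQ-UCB.
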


The price of plan adaptivity is therefore \(\widetilde\Theta(\sqrt{KT})\) whenever the groups are disjoint, attained by a general algorithm rather than one tailored to the lower bound's family, and with singleton groups it recovers the blockwise guarantee of Theorem~\ref{thm:blockwise}. What remains genuinely open is the overlapping case \(t\ge2\), where a single pull credits several groups at once and the per-block optimum no longer decomposes across groups; there the one-shot planner of Theorem~\ref{thm:ogbdq} matches the lower bound only when \(H=\Theta(B)\).

The obstruction is algorithm design, not accounting, and linear-programming duality makes it precise: weak duality turns blockwise group-fair regret into an exact per-pull ledger whose two nonnegative columns charge suboptimal pulls and pulls that overshoot an already-satisfied group (Proposition~\ref{prop:dual-ledger}). The disjoint covering rule controls both columns, but under overlap it provably fails, incurring \(\Omega(T)\) regret when one arm lies in two groups whose floors a single pull can discharge at once, a multi-coverage value the dual prices encode but a within-group index cannot see (Proposition~\ref{prop:greedy-fails}); any correct overlap algorithm must be LP-aware. Our plan-sampling algorithm P-BDQ-UCB re-solves the optimistic residual program after every pull and adds a terminal cover-rounding guard that makes feasibility pathwise under an initial cover-slack condition; under two explicit one-step descent inequalities, which hold automatically on disjoint systems, it attains \(\widetilde O(\sqrt{KT})+\widetilde O(B(1/\sigma^2+K/\sigma))\) against the fractional optimum. Removing that descent condition for arbitrary overlap is the paper's one open problem. The ledger, the algorithm, the guard, the descent condition, and the conditional theorem are developed in full in Appendix~\ref{app:learned-plan}.

\section{Experiments}
\label{sec:experiments}

The experiments validate three claims on synthetic and semi-real data: exact feasibility by construction, the \(R\)-parametrized regret envelope, and the advantage of parameter-free quota construction over penalty-based methods. Every reported run lies below the explicit regret cap of Theorem~\ref{thm:blockwise} (a loose sanity check, not a rate verification; all runs use \(\eta=1/T\)), while the rate evidence is the normalized-regret trend of Figure~\ref{fig:regret-vs-T} in the appendix.

The static single-floor case is a clean rederivation; its experiments are in Appendix~\ref{app:static-exp} (Table~\ref{tab:static-all}, Figure~\ref{fig:pareto}), where DQ-UCB meets the floor at every horizon inside the \(O(\sqrt{KT\log T})\) envelope while every floor-feasible Lagrangian penalty incurs strictly positive regret. The main text focuses on the blockwise and group settings.

\begin{table}[t]
\centering\small
\resizebox{\columnwidth}{!}{%
\begin{tabular}{lrrr}
\toprule
method & blockwise regret \(\pm\) s.e.m. & block viol. & min margin \\
\midrule
BDQ-UCB & \(55.7\pm3.9\) & \(0.0\) & \(0.0\) \\
Global quota+UCB & \(2.2\pm1.2\) & \(2174.4\) & \(-120.0\) \\
Lagrangian, \(\lambda=0.5\) & \(48.0\pm2.9\) & \(6.2\) & \(-3.0\) \\
Lagrangian, \(\lambda=2\) & \(56.0\pm3.4\) & \(0.0\) & \(0.0\) \\
Lagrangian, \(\lambda=8\) & \(50.9\pm3.5\) & \(0.0\) & \(0.0\) \\
BDQ-MOSS & \(67.5\pm4.3\) & \(0.0\) & \(0.0\) \\
Queue pacing, \(V=1\) & \(590.8\pm4.4\) & \(0.0\) & -- \\
\bottomrule
\end{tabular}%
}
\caption{Synthetic blockwise quotas (\(K{=}6\), \(B{=}12\), \(H{=}600\), \(16\) seeds). Block violation \(=\) total missing mandatory pulls; min margin \(=\) smallest \(N_{b,i}-m_{b,i}\); s.e.m.\ \(=\) standard error of the mean. Only BDQ-UCB, BDQ-MOSS, and the \(\lambda\ge2\) Lagrangian are blockwise feasible; see text. Semi-real MovieLens-100k counterpart: Table~\ref{tab:blockwise-movielens} (Appendix~\ref{app:experiments-to-add}).}
\label{tab:blockwise-all}
\vspace{-5mm}
\end{table}

\subsection{Blockwise Time-Varying Quotas}

We use \(K=6\), \(B=12\) blocks, block length \(H=600\), and Bernoulli means \((0.70,0.57,0.53,0.49,0.45,0.41)\). Each block has a nonuniform rotating floor: every arm receives a base quota of \(0.03H\); one nonbest arm receives an additional \(0.17H\); another receives an additional \(0.07H\). The deadline-aware Lagrangian baseline selects, at each round of block \(b\), the arm maximizing the UCB index plus \(\lambda\,(m_{b,i}-N_{b,i}(t))_+/(H_b-\tau_b(t)+1)\), where \(N_{b,i}(t)\) is the pull count of arm \(i\) inside the current block and \(\tau_b(t)\) is the round's position within the block, so the penalty scales the remaining per-arm deficit by the block's remaining rounds.
Table~\ref{tab:blockwise-all} compares BDQ-UCB with a global-quota method that satisfies only the aggregate floor and with this Lagrangian (block violation counts the total missing mandatory pulls); Figure~\ref{fig:blockwise} in Appendix~\ref{app:experiments-to-add} visualizes the same comparison.

The blockwise experiment shows the modeling difference and the value of parameter-free feasibility. The global guaranteed-pull method meets aggregate exposure but violates the block constraints by \(2174\) pulls, so its low apparent regret is an artifact of infeasibility. A deadline-aware Lagrangian, once its penalty is large enough, is both feasible and competitive with BDQ-UCB on regret; we do not claim BDQ-UCB dominates it. BDQ-UCB's advantage is that exact blockwise feasibility holds deterministically on every instance with no penalty to tune, whereas the Lagrangian's feasibility depends on a per-instance penalty search, with small penalties still violating the floor. We tested the regret claim directly: a sweep over the mandatory fraction found \emph{no} regime where BDQ-UCB strictly beats the best feasible penalty, the tuned penalty being modestly ahead with the gap shrinking as the floor tightens (Table~\ref{tab:regret-regime}, Appendix~\ref{app:experiments-to-add}). The contribution is parameter-free exact feasibility, not a regret advantage. The remaining two rows are consistent: BDQ-MOSS is exactly feasible but pays a premium for discarding mandatory observations, and a virtual-queue pacing baseline in the style of \citet{li2019combinatorial} is feasible only at its most conservative setting \(V=1\) at much higher regret, its \(V\)-sweep lowering regret only by breaking feasibility, the same tuning trap as the Lagrangian.

\subsection{Semi-Real MovieLens Genre Benchmark}
\label{sec:movielens-protocol}

We instantiate one semi-real benchmark based on MovieLens-100k \citep{harper2015movielens} genre exposure. Arms are the \(K=18\) high-level genre providers, each with a reward mean \(\widehat\mu_i\) estimated from historical ratings (protocol in Appendix~\ref{app:escale-detail}). Blocks represent recommendation periods (\(B=12\), \(H=600\)); a rotating contract elevates one or two genres per block for higher minimum exposure while the rest receive a small base floor. Each method is run with Bernoulli rewards calibrated by \(\widehat\mu_i\), and the comparator is the best blockwise-fair allocation under those calibrated means (\(16\) seeds): the genre means are real, the reward draws calibrated Bernoulli. Results appear in Table~\ref{tab:blockwise-movielens} (Appendix~\ref{app:experiments-to-add}) and match the synthetic pattern.

To drop the calibrated-Bernoulli assumption entirely, we rerun the \(K{=}18\) MovieLens benchmark with rewards drawn non-parametrically from each genre's \emph{actual} empirical rating distribution: at each pull of genre \(i\) we sample a real observed normalized rating, making the reward process the true one rather than a two-point calibration (\(16\) seeds). The picture is unchanged: BDQ-UCB is exactly feasible (\(0\) block violation) at blockwise regret \(102.4\pm1.3\), the small penalty \(\lambda{=}0.5\) violates the floors (\(34.6\) missing pulls), and the feasible penalties \(\lambda\in\{2,8\}\) tie it on regret (\(101.6\)--\(101.8\); Table~\ref{tab:realreplay}, Appendix~\ref{app:experiments-to-add}). Exact feasibility and the feasible-Lagrangian pattern therefore hold under the real reward distribution, not only calibrated draws.

\subsection{Stress Test: Penalty Tuning at Scale}
\label{sec:stress}

The blockwise table above exhibits the tuning problem on two instances (\(\lambda=0.5\) violates, \(\lambda\ge2\) is feasible); the stronger claim is that no single penalty works \emph{across} instances. A \(24\)-instance battery (arm counts \(K\in\{18,64,256\}\), three gap scales, three floor tightnesses, heterogeneous block lengths; \(16\) seeds each) confirms it: over the grid \(\lambda\in\{0.25,\dots,32\}\) feasibility climbs monotonically (\(3,4,6,8,9,16,18,22\) of \(24\)) but no bounded penalty clears more than \(22/24\), the smallest feasible \(\lambda\) spans a \(128\times\) range, and two tight small-gap large-\(K\) instances admit no feasible \(\lambda\le32\). Only the \(\lambda\to\infty\) hard schedule and BDQ-UCB reach all \(24\) (regret \(59.0\) and \(58.5\)); on the oracle-feasible subset the tuned per-instance oracle edges BDQ-UCB on regret (\(60.6\) vs \(61.3\)) at the cost of a \(192\)-run sweep, so the advantage is again tuning-free exact feasibility, not lower regret. Full setup, Table~\ref{tab:escale}, and Figures~\ref{fig:escale-lambda} and~\ref{fig:escale-intervals} are in Appendix~\ref{app:experiments-to-add}.

A group-floor arm exercises the planning layer on real structure: the \(54\) MovieLens genre\(\times\)popularity cells (\(18\) genres \(\times\) \(3\) tiers), with the \(18\) genre unions and \(3\) tier unions as overlapping groups (arm degree \(t=2\)). Learning the plan pays: OG-BDQ-UCB improves on the fixed-plan Group-BDQ-UCB by about \(8\%\) (\(1083\) vs \(1174\) against the per-block fractional optimum), while a tuned group-Lagrangian is again lower-regret when feasible; the mild overlap keeps all three feasible, so this instance isolates the value of \emph{learning} the plan rather than the Beck--Fiala necessity of the adversarial construction. A disjoint control (genre-only floors, \(t=1\)) runs D-BDQ-UCB, exactly feasible with no rounding as Theorem~\ref{thm:disjoint} predicts; a scaling sweep on the lower-bound family of Proposition~\ref{prop:adaptivity-lower} confirms its sharpest prediction, D-BDQ-UCB's normalized regret staying flat while the one-shot planner's grows and is \(2.5\times\) larger in the block-sparse regime (Table~\ref{tab:egroup}, Figure~\ref{fig:dbdq-scaling}, Appendix~\ref{app:experiments-to-add}).

\subsection{Deployment-Facing Stress Tests}
\label{sec:deployment}

Six further studies (Appendix~\ref{app:experiments-to-add}, Table~\ref{tab:deployment}), all over \(16\) seeds and reusing the same primitives, stress deployment-relevant axes: off-policy value recovery (N1), overlapping group floors (N2), an adaptively tuned online penalty (N3), a LinUCB contextual layer (N4), delayed feedback (N5), and a mandatory-sample placement ablation (N6). Because feasibility is a scheduling property it survives every one, with only regret ever degrading: the off-policy value is recovered to within \(0.002\) mean absolute error by IPS, SNIPS, and doubly-robust estimators, a group-blind policy misses group floors by \(34.3\) pulls per block where group-aware rounding stays exact, and even the adaptively tuned online penalty still pays \(1.25\) block violations while it ramps up.


\section{Conclusion}

We treated fair exposure as rounding: the gap between a fractional fair schedule and an integral pull sequence is a discrepancy vector, feasibility is its control, and reward loss its weighted form. This single identity organizes the paper. It recovers the static Fair-MAB guarantee from rounding alone, but pays off in the blockwise model, where BDQ-UCB meets every period's floor deterministically and incurs regret only on the nonmandatory budget \(R\), the right complexity parameter in both the minimax (\(\Theta(\sqrt{KR})\)) and instance-dependent senses. Under overlap, Beck--Fiala null-space rounding meets every group floor within the block budget where per-arm rounding fails, and learning the plan exposes a new lower bound, settled at \(\widetilde\Theta(\sqrt{KT})\) for disjoint systems and attained conditionally under overlap. Experiments on synthetic, MovieLens, and deployment settings confirm exact feasibility without penalty tuning; the overlapping learned-plan rate without our descent condition is the main open problem, with adversarially adaptive floors and matroid-like feasibility systems the natural next targets for this discrepancy bridge.

\bibliography{aaai2027}

\appendix
\setcounter{secnumdepth}{2}
\renewcommand{\thesection}{\AlphAlph{\value{section}}}

\section{Notation}
\label{app:notation}

\begin{table}[h]
\centering
\small
\begin{tabular}{cl}
\toprule
Symbol & Meaning \\
\midrule
\(K\) & number of arms \\
\(T\) & time horizon \\
\(\mu_i\) & mean reward of arm \(i\) \\
\(i^\star\) & index of a best arm \\
\(\Delta_i\) & suboptimality gap \(\mu_\star - \mu_i\) \\
\(m\) & integral fairness floor (static case) \\
\(B,\,H_b\) & number of blocks; length of block \(b\) \\
\(m_{b,i}\) & floor for arm \(i\) in block \(b\) \\
\(N_i(T)\) & total pull count of arm \(i\) through round \(T\) \\
\(R\) & total nonmandatory rounds \(\sum_b(H_b - \sum_i m_{b,i})\) \\
\(S(T)\) & cumulative fractional exposure vector \\
\(D_T\) & discrepancy vector \(N(T) - S(T)\) \\
\(\calG,\,t\) & group system; maximum arm degree \\
\(f_g\) & aggregate floor of group \(g\) \\
\(\OPT_m(T)\) & value of the best fair allocation \\
\(\Reg_m(T)\) & expected fair pseudo-regret \\
\(\calE\) & high-probability concentration event \\
\(L\) & \(\log(2KT/\eta)\) \\
\(\eta\) & failure probability \\
\bottomrule
\end{tabular}
\caption{Notation used throughout the paper.}
\label{tab:notation}
\end{table}

\section{Summary of Results}
\label{app:results-summary}

Table~\ref{tab:glance} collects the paper's guarantees in one place, grouping each setting with its upper bound, matching lower bound, and the algorithm that attains the rate. Reading down the table traces the paper's arc: the residual budget \(R\) governs the blockwise settings and the horizon \(T\) governs the learned-plan settings, and in every row the upper and lower bounds match up to logarithmic factors except where noted.

\begin{table}[h]
\centering\footnotesize
\setlength{\tabcolsep}{4pt}
\begin{tabular}{@{}p{0.20\linewidth}p{0.31\linewidth}p{0.12\linewidth}p{0.21\linewidth}@{}}
\toprule
Setting & Upper bound & Lower bound & Algorithm \\
\midrule
Static, plan-matched & \(O(\sqrt{KT\log KT})\) & (Fair-MAB) & DQ-UCB (Thm.~\ref{thm:main}) \\
Blockwise, minimax & \(O(\sqrt{KR})\) & \(\Omega(\sqrt{KR})\) & BDQ-MOSS (Thm.~\ref{thm:bdqmoss}) \\
Blockwise, instance & \((1{+}o(1))\log R\) \(\textstyle\sum_i \Delta_i/\mathrm{kl}_i\) & matching & BDQ-KL (Cor.~\ref{cor:bdqkl}) \\
Groups, fixed plan & \(O(\sqrt{KR})\), exact floors & \(-\) & Group-BDQ-UCB (Cor.~\ref{cor:groupbdq}) \\
Learned plan, disjoint & \(\widetilde{O}(\sqrt{KT})\) & \(\Omega(\sqrt{KT})\) & D-BDQ-UCB (Thm.~\ref{thm:disjoint}) \\
Learned plan, any \(\calG\) & \(\widetilde{O}(\sqrt{KT})\) \({+}\,O_\sigma(BK{+}BL)\)\textsuperscript{\dag} & \(\Omega(\sqrt{KT})\) & P-BDQ-UCB (Thm.~\ref{thm:pbdq}) \\
\bottomrule
\end{tabular}
\caption{Complexity picture across settings; upper and lower bounds match except where noted. Here \(\mathrm{kl}_i=\mathrm{kl}(\mu_i,\mu_\star)\) and \(R\) is the nonmandatory budget. Plan-matched benchmarks execute the same mandatory schedule; learned-plan benchmarks target the per-block fractional group-fair optimum. \textsuperscript{\dag}Conditional on the sampled-plan descent condition and initial cover slack; feasibility of P-BDQ-UCB needs only the slack condition.}
\label{tab:glance}
\end{table}

\section{Static Exposure Floors and Pareto Frontier}
\label{app:static}

\subsection{Static Setup: Floor, Best-Fair Allocation, and the Fair-Regret Identity}
\label{sec:static-setup}

The static single-floor objects specialized in the main text are as follows.

\begin{definition}[Integral fairness floor]
For a target exposure fraction \(\delta\in[0,1/K]\), define the integral floor
\[
    m = \lfloor \delta T \rfloor.
\]
A policy is \(m\)-fair if its final pull counts \(N_i(T)\) satisfy
\[
    N_i(T)\ge m
    \qquad\text{for every } i\in[K].
\]
Equivalently, the empirical exposure of every arm is at least \(m/T\ge\delta-1/T\). If \(K\lceil \delta T\rceil\le T\), one may replace \(m=\lfloor\delta T\rfloor\) by \(m=\lceil\delta T\rceil\) everywhere below to obtain the literal floor \(N_i(T)\ge\delta T\).
\end{definition}

The best fair allocation is the allocation that maximizes expected reward subject to the same integral floor. Since all rewards are stationary and independent, the best fair allocation gives exactly \(m\) pulls to every suboptimal arm and all remaining pulls to a best arm; a formal proof is in the appendix.

\begin{definition}[Best fair allocation and fair pseudo-regret]
Let
\[
    n_i^\star(m)=
    \begin{cases}
        T-(K-1)m, & i=i^\star,\\
        m, & i\neq i^\star.
    \end{cases}
\]
The value of the best fair allocation is
\[
    \OPT_m(T)
    = \sum_{i=1}^K n_i^\star(m)\mu_i
    = T\mu_\star - m\sum_{i\neq i^\star}\Delta_i.
\]
For a policy \(\pi\), its fair pseudo-regret is
\[
    \Reg_m^\pi(T)
    =
    \OPT_m(T)
    -
    \E_\pi\!\left[\sum_{t=1}^T \mu_{A_t}\right],
\]
where \(A_t\) is the arm pulled at time \(t\). The realized fair pseudo-regret is
\[
    \widehat{\Reg}_m^\pi(T)
    =
    \OPT_m(T)
    -
    \sum_{t=1}^T \mu_{A_t}.
\]
\end{definition}

\begin{lemma}[Fair-regret identity]
\label{lem:static-id}
For every pull sequence \(A_1,\dots,A_T\),
\[
    \widehat{\Reg}_m(T)
    =
    \sum_{i\neq i^\star} \Delta_i\bigl(N_i(T)-m\bigr).
\]
Consequently, if the policy is \(m\)-fair, then \(\widehat{\Reg}_m(T)\ge 0\).
\end{lemma}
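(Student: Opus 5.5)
Since every round pulls exactly one arm, I would write the realized reward as a sum over arms and then use that the pull counts add up to \(T\). First,
\[
    \sum_{t=1}^T \mu_{A_t} = \sum_{i=1}^K N_i(T)\,\mu_i ,
\]
which holds by grouping the rounds according to the arm pulled. Next, I would substitute \(\mu_i=\mu_\star-\Delta_i\) and use \(\sum_i N_i(T)=T\) to obtain
\[
    \sum_{t=1}^T \mu_{A_t} = T\mu_\star - \sum_{i\neq i^\star}\Delta_i N_i(T).
\]
The term for \(i^\star\) drops out because \(\Delta_{i^\star}=0\).

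Subtracting this from the closed form \(\OPT_m(T)=T\mu_\star-m\sum_{i\neq i^\star}\Delta_i\) (the last equality in the definition of the best fair allocation) cancels the \(T\mu_\star\) terms. What remains is
\[
    \widehat\Reg_m(T)=\sum_{i\neq i^\star}\Delta_i\bigl(N_i(T)-m\bigr).
\]
That closed form is itself a one-line check from \(n^\star_{i^\star}=T-(K-1)m\), so I would verify it inline rather than invoke anything.

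For the consequence, suppose the policy is \(m\)-fair, so \(N_i(T)\ge m\) for every arm. Then each summand is a product of the nonnegative factors \(\Delta_i\) and \(N_i(T)-m\), and so \(\widehat\Reg_m(T)\ge0\). There is no real obstacle here. The only point needing care is that \(i^\star\) may not be the unique best arm. I would note that any other optimal arm has \(\Delta_i=0\) and contributes nothing, so the identity holds for whichever best arm \(i^\star\) fixes, and the definition of \(\OPT_m\) does not depend on that choice.
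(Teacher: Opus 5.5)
Your proposal is correct and follows the paper's proof essentially line for line: rewrite \(\sum_t\mu_{A_t}\) as \(T\mu_\star-\sum_{i\neq i^\star}\Delta_iN_i(T)\) using \(\sum_iN_i(T)=T\) and \(\Delta_{i^\star}=0\), subtract it from the closed form of \(\OPT_m(T)\), and read off nonnegativity termwise under \(m\)-fairness. Your added remark that ties for the best arm are harmless, since any other optimal arm has zero gap, is a valid point that the paper leaves implicit.
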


\subsection{DQ-UCB Algorithm and Guarantees}
\label{sec:dqucb-all}

The mandatory part of a fair bandit schedule requires \(m\) pulls of every arm, and the following elementary discrepancy construction gives an integral schedule with exactly those counts and uniformly small prefix imbalance.

\begin{lemma}[Balanced quota rounding]
\label{lem:balanced}
Consider the length-\(Km\) sequence that pulls arms cyclically,
\[
    1,2,\dots,K,\;1,2,\dots,K,\;\ldots,\;1,2,\dots,K,
\]
with exactly \(m\) full cycles, and let \(N_i(t)\) be the number of pulls of arm \(i\) among the first \(t\le Km\) pulls. Then \(N_i(Km)=m\) for every \(i\), and for every prefix \(t\le Km\),
\[
    \left|N_i(t)-\frac{t}{K}\right|\le 1
    \qquad\text{for every } i.
\]
\end{lemma}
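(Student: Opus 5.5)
The plan is to describe the cyclic sequence explicitly and count pulls in every prefix directly. Write a prefix length as \(t=qK+r\) with \(0\le q\le m\) and \(0\le r<K\). The first \(qK\) positions make up \(q\) complete cycles, and each cycle pulls every arm exactly once. The remaining \(r\) positions are the first \(r\) entries of cycle \(q+1\), namely arms \(1,\dots,r\). This gives the closed form
\[
    N_i(t)=q+\mathbf{1}\{i\le r\}.
\]
Setting \(t=Km\) means \(q=m\) and \(r=0\), so \(N_i(Km)=m\) for every arm. This is the exact-count claim.

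For the prefix bound, we compare the closed form with \(t/K=q+r/K\). The difference is
\[
    N_i(t)-\frac{t}{K}=\mathbf{1}\{i\le r\}-\frac{r}{K}.
\]
We split into two cases.
\begin{itemize}
\item If \(i\le r\), the difference is \(1-r/K\in(0,1]\).
\item If \(i>r\), the difference is \(-r/K\in(-1,0]\).
\end{itemize}
In both cases the absolute value is at most one, which proves the stated prefix inequality. In fact the bound is strict except at \(r=0\), where the difference vanishes. So the constant one is slightly loose, and the true bound is \(1-1/K\).

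There is no real obstacle here. The only step requiring care is the bookkeeping of the partial cycle, which is handled by the decomposition \(t=qK+r\). With it, the claim reduces to checking an indicator against a fraction in \([0,1)\).

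In the discrepancy language of Proposition~\ref{thm:reduce}, this says the cyclic schedule rounds the uniform fractional plan \(x_t=\tfrac1K\mathbf{1}\) with prefix discrepancy \(\|D_t\|_\infty\le1\) and terminal discrepancy \(D_{Km}=0\).
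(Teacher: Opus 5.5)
Your proof is correct and is essentially the paper's argument: the paper uses the same decomposition \(t=qK+r\), notes that each arm has been pulled \(q\) or \(q+1\) times, and bounds the deviation by \(\max\{r/K,\,1-r/K\}\le 1\). Your explicit form \(N_i(t)=q+\mathbf{1}\{i\le r\}\) pins down which arms get the extra pull and gives the sharper constant \(1-1/K\). That bound is strict at every prefix, including \(r=0\), where the deviation is exactly zero.
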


Lemma~\ref{lem:balanced} is a constructive discrepancy bound: the integral sequence has exact terminal quota and at most one unit of prefix discrepancy relative to the uniform fractional quota schedule. DQ-UCB builds on it in two phases. The discrepancy quota phase uses the balanced schedule to pull every arm exactly \(m\) times, enforcing the fairness floor deterministically. The optimistic residual phase then runs UCB \citep{auer2002finite} on the remaining \(T-Km\) pulls; since the quota is already satisfied, this phase can focus purely on reward learning. When \(m=0\) the fairness constraint is vacuous, and the algorithm pulls each arm once for initialization before running UCB; we assume \(T\ge K\) in that case. Pseudocode is given as Algorithm~\ref{alg:dqucb}, with ties in the index broken toward the lowest arm index.

\begin{algorithm}
\caption{DQ-UCB}
\label{alg:dqucb}
\begin{algorithmic}[1]
\STATE \textbf{Input:} arms $K$, horizon $T$, floor $m\in\{0,\dots,\lfloor T/K\rfloor\}$, failure prob.\ $\eta\in(0,1)$
\STATE \textbf{Output:} pull sequence $A_1,\dots,A_T$
\STATE $s \leftarrow \max(m,\,1)$; \quad $L \leftarrow \log(2KT/\eta)$
\FOR{$j = 1,\dots,s$}
    \FOR{$i = 1,\dots,K$}
        \STATE Pull arm $i$
    \ENDFOR
\ENDFOR
\FOR{$t = Ks+1,\dots,T$}
    \STATE Pull $A_t \in \displaystyle\argmax_{i\in[K]}\!\left\{\widehat{\mu}_i(t) + \sqrt{2L/N_i(t)}\right\}$
\ENDFOR
\end{algorithmic}
\end{algorithm}

\begin{theorem}[Exact fairness and fair regret]
\label{thm:main}
Assume rewards are independent and supported in \([0,1]\). Let \(m\in\{0,1,\dots,\lfloor T/K\rfloor\}\), and if \(m=0\) assume \(T\ge K\). Then DQ-UCB satisfies the following.
\begin{enumerate}[label=(\roman*),leftmargin=*]
\item \emph{Exact fairness.} Deterministically,
\[
    N_i(T)\ge m
    \qquad\text{for every } i\in[K].
\]
\item \emph{Gap-dependent high-probability bound.} With probability at least \(1-\eta\),
\begin{align}
      \widehat{\Reg}_m(T)
      &\le
      K
      +
      \sum_{i:\Delta_i>0}
      \min\left\{
          T\Delta_i,\,
          \frac{8L}{\Delta_i}
      \right\}, \notag\\
      &\qquad L=\log\!\left(\frac{2KT}{\eta}\right).
  \end{align}
\item \emph{Gap-free high-probability bound.} With probability at least \(1-\eta\),
\[
    \widehat{\Reg}_m(T)
    \le
    K
    +
    4\sqrt{2KT L}.
\]
\item \emph{Expected regret.} Taking \(\eta=1/T\),
\[
    \Reg_m(T)
    =
    O\!\left(\sqrt{KT\log(KT)}+K\right).
\]
\end{enumerate}
\end{theorem}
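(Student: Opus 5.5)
The plan is to split the argument into a deterministic counting part and a probabilistic part, with the fair-regret identity (Lemma~\ref{lem:static-id}) linking them.

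\emph{Part (i).} This is immediate from the quota phase. Algorithm~\ref{alg:dqucb} pulls every arm \(s=\max(m,1)\ge m\) times before any data-dependent choice is made, by Lemma~\ref{lem:balanced}. This needs \(Ks\le T\), which holds because \(m\le\lfloor T/K\rfloor\), or \(T\ge K\) when \(m=0\).

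\emph{Part (ii).} I will work on the concentration event \(\calE\). On \(\calE\), for every arm \(i\) and every count \(n\le T\), the empirical mean after \(n\) samples satisfies
\[
|\widehat\mu_{i,n}-\mu_i|\le\sqrt{2L/n}.
\]
By Hoeffding's inequality, each fixed pair \((i,n)\) fails with probability at most \(2e^{-4L}\le\eta/(KT)\). A union bound over the \(KT\) pairs gives \(\Prob(\calE)\ge1-\eta\). I index by sample count rather than by time to avoid the adaptivity issue.

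On \(\calE\), the standard UCB argument applies. If a suboptimal arm \(i\) is pulled in the residual phase at a time when \(N_i=n\), then
\[
\mu_\star\le U_{i^\star}\le U_i\le \mu_i+2\sqrt{2L/n},
\]
so \(n<8L/\Delta_i^2\). Hence the total count obeys \(N_i(T)\le\max\{s,\lceil 8L/\Delta_i^2\rceil\}\).

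Substituting into Lemma~\ref{lem:static-id}, the regret contribution of arm \(i\) is \(\Delta_i(N_i(T)-m)\), and I bound it two ways.
\begin{itemize}
\item First, \(N_i(T)-m\le(s-m)+\) (number of residual pulls). The term \(s-m\) is at most \(1\), and equals \(1\) only when \(m=0\). The residual pulls number at most \(8L/\Delta_i^2+1\). This gives a contribution of at most \(8L/\Delta_i+2\Delta_i\), and the additive constants, summed over arms, are absorbed into \(K\).
\item Second, trivially \(N_i(T)-m\le T\), giving \(T\Delta_i\).
\end{itemize}
Taking the minimum and summing gives (ii). The one step needing care is the constant bookkeeping: I would sharpen the ceiling to \(\lfloor 8L/\Delta_i^2\rfloor+1\) counting, so that the total additive overhead is at most \(\Delta_i\le1\) per arm, hence at most \(K\).

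\emph{Part (iii).} Still on \(\calE\), I split the arms at a threshold \(\Delta\). Write \(r_i\) for the residual pulls of arm \(i\), so that \(\sum_i r_i\le T\).
\begin{itemize}
\item Arms with \(\Delta_i\le\Delta\) contribute at most \(\Delta T\) in total.
\item For each remaining arm, \(\Delta_i r_i\le\sqrt{8L\,r_i}\), since \(r_i\le8L/\Delta_i^2+1\).
\end{itemize}
By Cauchy--Schwarz, the second group sums to at most \(\sqrt{8LKT}\). Alternatively, choose \(\Delta=\sqrt{8LK/T}\), which gives \(\sqrt{8KTL}+\sqrt{8KTL}=4\sqrt{2KTL}\), plus the initialization term \(K\).

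\emph{Part (iv).} Set \(\eta=1/T\). On the complement of \(\calE\), regret is at most \(T\), which contributes at most \(T\cdot\eta=1\) in expectation. Since \(L=O(\log KT)\), the bound follows.

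The main obstacle is the exact accounting of the additive \(K\) term, specifically the off-by-one in the residual pull count together with the extra initialization pull when \(m=0\). The concentration argument itself is routine.
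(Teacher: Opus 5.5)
Your proposal is correct and follows the paper's proof step for step: deterministic fairness from the quota phase, and a Hoeffding event indexed by (arm, sample count) with a union bound over \(KT\) pairs. It then uses the UCB count bound \(n\le 8L/\Delta_i^2\) fed through Lemma~\ref{lem:static-id}, with at most one extra pull per arm absorbed into \(K\), the \(\varepsilon\)-split at \(\varepsilon=\sqrt{8KL/T}\), and a charge of \(T\eta=1\) to \(\calE^c\). The bookkeeping repair you flag, bounding \(N_i(T)-m\le\max\{s,\lfloor 8L/\Delta_i^2\rfloor+1\}-m\le 1+8L/\Delta_i^2\) instead of summing quota excess and residual pulls, is exactly the paper's \(M_i\le 1+8L/\Delta_i^2\); note that \(n\le 8L/\Delta_i^2\) is non-strict, and your Cauchy--Schwarz variant needs the same \(+1\) peeled off before it gives \(\sqrt{8LKT}\).
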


\begin{proof}[Proof sketch]
Fairness is deterministic: the quota phase pulls every arm exactly \(m\) times. For regret, the standard UCB concentration event \(\calE\) (Hoeffding's inequality, union bound over \(KT\) pairs) holds with probability \(1-\eta\). On \(\calE\), any suboptimal arm \(i\) with gap \(\Delta_i\) can be selected at most \(1+8L/\Delta_i^2\) times during the residual phase. Summing via Lemma~\ref{lem:static-id} gives the gap-dependent bound; an \(\varepsilon\)-split optimized at \(\varepsilon=\sqrt{8KL/T}\) gives the gap-free bound. The full proof is in Appendix~\ref{app:main-proof}.
\end{proof}

\begin{remark}[Anytime fairness]
\label{rem:anytime}
Because the quota is front-loaded, DQ-UCB is fair at every round, not only at the horizon: for all \(t\le T\) and all \(i\), \(N_i(t)\ge \delta t-1\). During the quota phase, Lemma~\ref{lem:balanced} gives \(N_i(t)\ge t/K-1\ge\delta t-1\) since \(\delta\le1/K\); afterwards \(N_i(t)\ge m=\lfloor\delta T\rfloor\ge\delta t-1\). DQ-UCB therefore also meets the anytime fairness requirement of \citet{patil2021fair} with tolerance one.
\end{remark}

\begin{corollary}[Realized reward regret]
\label{cor:realized}
Let \(R_T=\sum_{t=1}^T X_{A_t}\) be the realized reward. With probability at least \(1-2\eta\),
\[
    \OPT_m(T)-R_T
    \le
    K+4\sqrt{2KT\log(2KT/\eta)}
    +
    \sqrt{2T\log(1/\eta)}.
\]
\end{corollary}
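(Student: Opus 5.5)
We decompose the realized reward loss into a pseudo-regret term, controlled by Theorem~\ref{thm:main}(iii), and a martingale noise term, controlled by Azuma--Hoeffding; a union bound then combines the two failure probabilities. Write
\[
    \OPT_m(T)-R_T
    =
    \Bigl(\OPT_m(T)-\sum_{t=1}^T\mu_{A_t}\Bigr)
    +
    \sum_{t=1}^T\bigl(\mu_{A_t}-X_{A_t}\bigr)
    =
    \widehat\Reg_m(T)+M_T,
\]
where \(X_{A_t}\) denotes the reward observed at round \(t\).

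For the first term, Theorem~\ref{thm:main}(iii) applies directly: on an event of probability at least \(1-\eta\),
\[
    \widehat\Reg_m(T)\le K+4\sqrt{2KT\log(2KT/\eta)}.
\]
No new argument is needed here.

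For the second term, let \(\mathcal F_t\) be the sigma-field generated by the first \(t\) actions and rewards. The action \(A_t\) is \(\mathcal F_{t-1}\)-measurable. This holds for both phases of DQ-UCB: the quota phase is deterministic, and in the residual phase the UCB index depends only on past data, with deterministic tie-breaking. Since rewards are independent across pulls, conditionally on \(\mathcal F_{t-1}\) the reward \(X_{A_t}\) has mean \(\mu_{A_t}\). Hence \(Z_t=\mu_{A_t}-X_{A_t}\) is a martingale difference sequence. Because rewards lie in \([0,1]\), each \(Z_t\) lies in an interval of length one. The one-sided Azuma--Hoeffding inequality (Hoeffding's lemma applied conditionally) gives
\[
    \Prob(M_T\ge \varepsilon)\le \exp(-2\varepsilon^2/T).
\]
Setting \(\varepsilon=\sqrt{T\log(1/\eta)/2}\le\sqrt{2T\log(1/\eta)}\) makes the failure probability \(\eta\). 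The stated constant is therefore slightly loose, and the loose form also covers the cruder bound one gets from the range \([-1,1]\).

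The only point needing care is the step that makes this a genuine martingale rather than a sum of i.i.d.\ terms: the pulls are chosen adaptively, so \(X_{A_t}\) must be viewed as a fresh draw from arm \(A_t\)'s reward stream given the past. This is handled by the standard reward-table model, in which arm \(i\)'s \(k\)-th pull returns an independent draw \(Y_{i,k}\). Under that model the conditional-mean property holds. A union bound over the two events then gives the claim with probability at least \(1-2\eta\).
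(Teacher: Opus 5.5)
Your proof is correct and follows the paper's own argument: split \(\OPT_m(T)-R_T\) into the pseudo-regret, bounded by Theorem~\ref{thm:main}(iii), plus the noise term \(\sum_t(\mu_{A_t}-X_{A_t})\), bounded by Azuma--Hoeffding, then take a union bound over the two failure events. The only difference is that you use the conditional range of length one, which gives the sharper deviation \(\sqrt{T\log(1/\eta)/2}\). The paper's constant \(\sqrt{2T\log(1/\eta)}\) is what the cruder \(|Z_t|\le1\) form of Azuma yields.
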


\subsection{Fairness--Regret Pareto Frontier}
\label{sec:pareto}

The tradeoff between the floor level and achievable reward is a property of the underlying allocation problem, independent of the learning algorithm, and it admits a closed form. For a probability vector \(p\in\R^K\) (\(p_i\ge0\), \(\sum_i p_i=1\)), define the fairness level \(\phi(p)=\min_i p_i\) and the expected reward \(V(p)=\sum_i p_i\mu_i\); an allocation \(p\) \emph{dominates} \(q\) if \(\phi(p)\ge\phi(q)\) and \(V(p)\ge V(q)\) with at least one strict inequality.

\begin{theorem}[Closed-form static Pareto frontier]
\label{thm:pareto}
Assume the best arm \(i^\star\) is unique. For every fairness level \(\beta\in[0,1/K]\), define
\[
    p_i^\beta
    =
    \begin{cases}
        1-(K-1)\beta, & i=i^\star,\\
        \beta, & i\neq i^\star.
    \end{cases}
\]
Then the Pareto frontier is exactly the curve
\(\{(\phi(p^\beta),V(p^\beta)):\beta\in[0,1/K]\}\).
Moreover \(\phi(p^\beta)=\beta\) and
\[
    V(p^\beta)
    =
    \mu_\star
    -
    \beta\sum_{i\neq i^\star}\Delta_i,
\]
so the unconstrained regret per round at fairness level \(\beta\) is \(\beta\sum_{i\neq i^\star}\Delta_i\).
\end{theorem}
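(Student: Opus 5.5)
The statement to prove is the closed-form Pareto frontier of Theorem~\ref{thm:pareto}. The plan has three steps: compute \(\phi\) and \(V\) on the candidate curve, show every probability vector is weakly dominated by a curve point of the same fairness level, and then show curve points are not dominated.

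\textbf{Step 1: the curve itself.} Since \(\beta\le 1/K\), we have \(1-(K-1)\beta\ge\beta\). So \(p^\beta\) is a probability vector with \(\min_i p^\beta_i=\beta\), giving \(\phi(p^\beta)=\beta\). Direct expansion gives
\[
V(p^\beta)=\mu_\star(1-(K-1)\beta)+\beta\sum_{i\neq i^\star}\mu_i=\mu_\star-\beta\sum_{i\neq i^\star}\Delta_i .
\]
The per-round regret claim is this identity rearranged.

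\textbf{Step 2: the curve weakly dominates everything.} Fix any \(p\) and let \(\beta=\phi(p)\). Note \(\beta\le 1/K\), since the minimum of \(K\) nonnegative numbers summing to one is at most their average. Every \(p_i\ge\beta\), so \(p\) is feasible for the linear program
\[
\max\Bigl\{V(q):\ q\ \text{a probability vector},\ q_i\ge\beta\ \forall i\Bigr\}.
\]
Write \(q_i=\beta+r_i\) with \(r\ge0\) and \(\sum_i r_i=1-K\beta\). Then \(V(q)=\beta\sum_i\mu_i+\sum_i r_i\mu_i\), which is maximized exactly by putting all of \(r\) on \(i^\star\). This uses uniqueness of the best arm, and gives the maximizer \(p^\beta\). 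Hence \(V(p)\le V(p^\beta)\) with equality iff \(p=p^\beta\) (when \(1-K\beta>0\); when \(\beta=1/K\), \(p\) is forced to be uniform, which equals \(p^\beta\)). Consequently, any \(p\) off the curve is dominated by \(p^{\phi(p)}\): the fairness levels tie and the value is strictly larger. So the frontier is contained in the curve.

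\textbf{Step 3: no curve point is dominated.} Suppose some \(q\) dominates \(p^\beta\). Then \(\phi(q)=\beta'\ge\beta\), and Step 2 gives \(V(q)\le V(p^{\beta'})\). The map \(\beta\mapsto V(p^\beta)\) is nonincreasing, and strictly decreasing when \(\sum_{i\neq i^\star}\Delta_i>0\), which holds because the best arm is unique and \(K\ge2\). Thus \(V(q)\le V(p^\beta)\), with strict inequality if \(\beta'>\beta\). That rules out \(\beta'>\beta\), so \(\beta'=\beta\). Then Step 2's equality case forces \(q=p^\beta\), and a vector cannot strictly improve on itself, a contradiction. So every curve point is Pareto optimal, and the frontier is exactly the curve.

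The main obstacle is the bookkeeping in Steps 2 and 3, specifically the strictness cases. Uniqueness of \(i^\star\) is what makes the LP maximizer unique and the frontier curve strictly decreasing. Without it, ties among best arms would still leave the frontier as the same set of \((\phi,V)\) pairs, but not the same vectors. I would state that caveat in a remark.
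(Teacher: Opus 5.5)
Your proof is correct and follows the paper's argument. Both identify \(p^\beta\) as the unique maximizer of \(V\) subject to \(\phi\ge\beta\) by placing all residual mass on \(i^\star\), then use that \(\beta\mapsto V(p^\beta)\) is strictly decreasing to show curve points are undominated. Your version only spells out the edge cases (\(\beta=1/K\), \(K\ge2\), the strictness bookkeeping) that the paper's short proof leaves implicit.

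One qualification for your proposed closing remark: if every arm is a best arm, \(V\) is constant and the frontier collapses to the single point \((1/K,\mu_\star)\). So the ``same set of \((\phi,V)\) pairs'' claim needs at least one strictly suboptimal arm.
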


\begin{corollary}[Algorithmic approach to the frontier]
\label{cor:frontier}
For any integral floor \(m\), DQ-UCB targets the point \(\beta=m/T\) on the Pareto frontier; by Theorem~\ref{thm:main}, its per-round reward gap relative to \(V(p^{m/T})=\mu_\star-(m/T)\sum_{i\neq i^\star}\Delta_i\) vanishes at rate \(O(\sqrt{K\log(KT)/T})\).
\end{corollary}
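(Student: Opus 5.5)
The plan is to identify the frontier point at \(\beta=m/T\) with the static comparator \(\OPT_m(T)\), and then read the rate directly off Theorem~\ref{thm:main}(iv).

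\textbf{Step 1: the frontier value equals the comparator.} Note first that \(m\in\{0,\dots,\lfloor T/K\rfloor\}\) gives \(\beta=m/T\in[0,1/K]\), so \(p^{m/T}\) is a legitimate frontier point of Theorem~\ref{thm:pareto}. Multiplying the closed form \(V(p^\beta)=\mu_\star-\beta\sum_{i\neq i^\star}\Delta_i\) by \(T\) at \(\beta=m/T\) gives
\[
    T\,V(p^{m/T})=T\mu_\star-m\sum_{i\neq i^\star}\Delta_i=\OPT_m(T).
\]
This is exactly the value of the best fair allocation, and equivalently \(n^\star(m)/T=p^{m/T}\) coordinatewise. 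This identity is purely algebraic. It does not itself need uniqueness of \(i^\star\); uniqueness is only invoked to call \(p^{m/T}\) "the" frontier point.

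\textbf{Step 2: DQ-UCB sits at fairness level \(m/T\).} By Theorem~\ref{thm:main}(i), the empirical allocation \(\widehat p=N(T)/T\) satisfies \(\phi(\widehat p)=\min_i N_i(T)/T\ge m/T\) deterministically. So DQ-UCB is always at fairness level at least \(\beta=m/T\), and comparing its reward with \(V(p^{m/T})\) is the right benchmark. Lemma~\ref{lem:static-id} then gives
\[
    T\,V(p^{m/T})-\sum_{t}\mu_{A_t}=\widehat\Reg_m(T)\ge0.
\]
Hence the per-round gap is a nonnegative quantity, and it equals \(\widehat\Reg_m(T)/T\). In expectation this per-round gap is \(\Reg_m(T)/T\).

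\textbf{Step 3: the rate.} Theorem~\ref{thm:main}(iv) with \(\eta=1/T\) gives \(\Reg_m(T)=O(\sqrt{KT\log(KT)}+K)\). Dividing by \(T\) yields
\[
    O\bigl(\sqrt{K\log(KT)/T}+K/T\bigr).
\]
The \(K/T\) term is absorbed using \(T\ge K\): since \(K/T\le 1\), we have \(K/T\le\sqrt{K/T}\). The same conclusion also holds with high probability via part (iii).

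The only point needing care is the bookkeeping that the comparator in Theorem~\ref{thm:main} coincides exactly with \(T\) times the frontier value. This is Step 1, and it is where the argument is anchored. There is no analytic obstacle beyond this.
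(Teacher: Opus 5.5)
Your proposal is correct and follows the paper's own argument: identify \(T\,V(p^{m/T})\) with \(\OPT_m(T)\), invoke Theorem~\ref{thm:main}, and divide by \(T\). Your additional bookkeeping (checking \(m/T\le 1/K\), nonnegativity of the gap via Lemma~\ref{lem:static-id}, and absorbing the \(K/T\) term using \(T\ge K\)) makes explicit what the paper's two-line proof leaves implicit.
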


\subsection{Static Experiments}
\label{app:static-exp}

We evaluate the exact DQ-UCB policy: a deterministic round-robin quota of \(m=\lfloor\delta T\rfloor\) pulls per arm, followed by pure UCB on the residual budget. Because the quota phase meets the floor deterministically, realized fair regret is always nonnegative, and no slack convention or terminal repair is needed. All static experiments use synthetic Bernoulli fair-bandit instances on CPU, reporting means over \(16\) seeds; the comparator is the best fair allocation (\(m\) pulls on every arm and the remaining budget on the best arm), and the baseline is the Lagrangian-penalty policy that selects the arm maximizing the UCB index plus \(\lambda(\delta-N_i(t)/t)\), swept over \(\lambda\in\{0.5,2,8\}\). We report fair pseudo-regret \(\widehat{\Reg}_m(T)\) and the realized minimum exposure fraction \(\min_i N_i(T)/T\). We also ran the Fair-MAB algorithm of \citet{patil2021fair} on the same instances: it is anytime-feasible throughout and statistically ties DQ-UCB, with both at approximately zero fair regret on these easy instances, consistent with Remark~\ref{rem:anytime}; we report it in prose rather than as additional table rows.

The first experiment (E1) fixes \(K=5\) and \(\delta=0.1\) and varies the horizon; Table~\ref{tab:static-all}(a) reports fair regret against the best-fair oracle and the realized minimum exposure, and Figure~\ref{fig:regret-vs-T} in the appendix shows the trend across horizons. The second (E2) fixes \(K=5\) and \(T=3.2\times10^4\) and sweeps the floor \(\delta\); Table~\ref{tab:static-all}(b) and Figure~\ref{fig:pareto} report the resulting frontier, where the best feasible Lagrangian is the lowest-regret penalty setting that still meets the floor. The zero-regret entries for DQ-UCB mean that on this particular instance the residual UCB phase matched the best fair comparator after the mandatory quota was satisfied (after \(m\) quota samples per arm, the best arm's index remains above every suboptimal index for the rest of the horizon) and should be read as an easy-instance diagnostic rather than a universal theorem. The third (E3) fixes \(T=3.2\times10^4\) and \(\delta=0.1\) and varies \(K\) and the reward-gap structure (Table~\ref{tab:static-all}(c)). The \(K=10\) rows report zero fair regret because \(\delta=1/K\) at \(K=10\): the floor \(m=\lfloor\delta T\rfloor\) makes the mandatory quota \(Km=T\) exhaust the horizon, leaving residual budget \(R=0\), so no optimistic phase runs and the allocation coincides with the best fair comparator. The \(K=5\) rows have \(R=T-Km=\tfrac12 T>0\), and the hard instance (gap \(0.05\)) accrues the largest normalized regret, still within the \(O(\sqrt{KT\log T})\) envelope.

\begin{figure}[t]
\centering
\placeholdergraphic[width=\columnwidth]{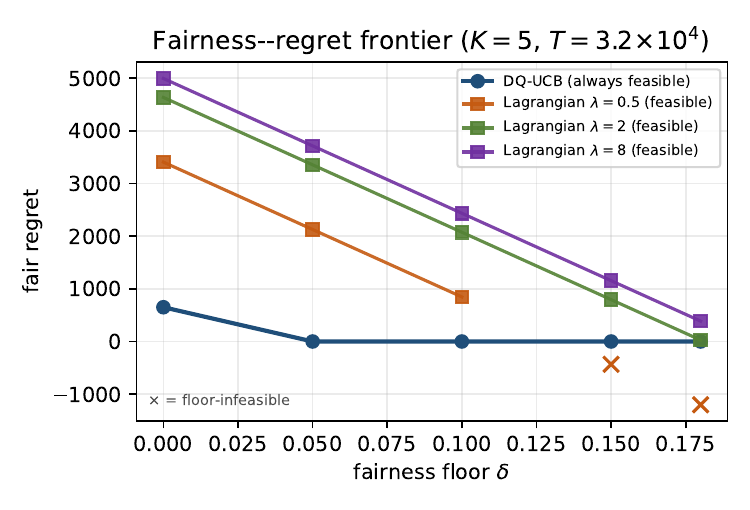}
\caption{E2: Fairness--regret Pareto frontier (\(K=5\), \(T=3.2\times10^4\)). DQ-UCB achieves zero fair regret at every floor level \(\delta\) on this instance; the best feasible Lagrangian incurs positive regret ranging from \(2126\) at \(\delta=0.05\) to \(26.8\) at \(\delta=0.18\).}
\label{fig:pareto}
\end{figure}

\begin{table*}[t]
\centering
\small
\textit{(a) E1: Fair regret vs.\ horizon (\(K=5\), \(\delta=0.1\))}\\[2pt]
\begin{tabular}{rrrrr}
\toprule
\(T\) & fair regret \(\pm\) s.e.m. & \(\min_i N_i/T\) & \(\Reg/T\) &
\(\Reg/\sqrt{KT\log T}\) \\
\midrule
\(2{,}000\)   & \(31.5 \pm 3.9\) & \(0.104\) & \(0.0157\) & \(0.114\) \\
\(8{,}000\)   & \(0.0 \pm 0.0\)  & \(0.100\) & \(0.000\)  & \(0.000\) \\
\(32{,}000\)  & \(0.0 \pm 0.0\)  & \(0.100\) & \(0.000\)  & \(0.000\) \\
\(128{,}000\) & \(0.0 \pm 0.0\)  & \(0.100\) & \(0.000\)  & \(0.000\) \\
\bottomrule
\end{tabular}
\par\vspace{4pt}
\textit{(b) E2: Fairness--regret frontier (\(K=5\), \(T=3.2\times10^4\))}\\[2pt]
\begin{tabular}{rrrr}
\toprule
\(\delta\) & DQ-UCB regret & \(\min_i N_i/T\) & best feasible Lagrangian regret \\
\midrule
\(0.05\) & \(0.0\) & \(0.050\) & \(2126.4\) \\
\(0.10\) & \(0.0\) & \(0.100\) & \(846.4\)  \\
\(0.15\) & \(0.0\) & \(0.150\) & \(794.8\)  \\
\(0.18\) & \(0.0\) & \(0.180\) & \(26.8\)   \\
\bottomrule
\end{tabular}
\par\vspace{4pt}
\textit{(c) E3: Robustness (\(T=3.2\times10^4\), \(\delta=0.1\))}\\[2pt]
\begin{tabular}{rlrrr}
\toprule
\(K\) & instance & gap & fair regret \(\pm\) s.e.m. & \(\min_i N_i/T\) \\
\midrule
\(5\)  & easy & \(0.20\) & \(0.0 \pm 0.0\)     & \(0.100\) \\
\(5\)  & hard & \(0.05\) & \(197.7 \pm 12.2\)  & \(0.114\) \\
\(10\) & easy & \(0.20\) & \(0.0 \pm 0.0\)     & \(0.100\) \\
\(10\) & hard & \(0.05\) & \(0.0 \pm 0.0\)     & \(0.100\) \\
\bottomrule
\end{tabular}
\caption{Static DQ-UCB experiments (\(16\) seeds; s.e.m.\ denotes the standard error of the mean). (a) Floor met at every horizon; fair regret per round diminishes to zero by \(T=8{,}000\), where the exact zeros arise because, after the \(m\) quota samples per arm, the best arm's index remains above every suboptimal index for the remainder of the horizon on this instance. (b) At every \(\delta>0\), DQ-UCB attains zero fair regret on this instance (an easy-instance diagnostic, not a universal claim) while all floor-feasible Lagrangian settings incur positive regret. (c) Floor met at all \((K,\text{gap})\); the largest normalized regret stays within the \(O(\sqrt{KT\log T})\) envelope.}
\label{tab:static-all}
\end{table*}

\section{Additional Blockwise Complexity Results}
\label{app:blockwise-extra}

\subsection{BDQ-MOSS: Closing the Minimax Rate}

Replace the residual selection rule of BDQ-UCB by the MOSS index of \citet{audibert2009minimax}, computed on residual observations only: writing \(N^{\mathrm{res}}_i\) and \(\widehat\mu^{\mathrm{res}}_i\) for the pull count and empirical mean of arm \(i\) over residual rounds alone, the rule pulls each arm once during the first \(K\) residual rounds and thereafter selects
\[
\argmax_{i\in[K]}\left\{\widehat\mu^{\mathrm{res}}_i+\sqrt{\frac{\max\bigl(\log\bigl(R/(K N^{\mathrm{res}}_i)\bigr),\,0\bigr)}{N^{\mathrm{res}}_i}}\right\}.
\]
We call the resulting algorithm BDQ-MOSS.

\begin{theorem}[BDQ-MOSS closes the minimax rate]
\label{thm:bdqmoss}
BDQ-MOSS satisfies the deterministic blockwise fairness guarantee of Theorem~\ref{thm:blockwise}(i), and its expected blockwise fair regret satisfies
\[
    \Reg_{\mathbf m}(T)\ \le\ C\sqrt{KR}+K
\]
for a universal constant \(C\). Combined with Theorem~\ref{thm:block-lower}, the minimax blockwise fair regret is \(\Theta(\sqrt{KR})\) for \(R\ge K\), and BDQ-MOSS attains it up to a universal constant.
\end{theorem}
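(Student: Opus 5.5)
Fairness is immediate and independent of the residual rule: each block opens by pulling arm \(i\) exactly \(m_{b,i}\) times, so \(N_{b,i}\ge m_{b,i}\) deterministically, as in Theorem~\ref{thm:blockwise}(i). For regret, Lemma~\ref{lem:block-id} gives
\[
\widehat\Reg_{\mathbf m}(T)=\sum_{i\neq i^\star}\Delta_i\Bigl(N_i(T)-\sum_b m_{b,i}\Bigr)=\sum_{i\neq i^\star}\Delta_i N^{\mathrm{res}}_i(T),
\]
where \(N^{\mathrm{res}}_i(T)\) counts residual pulls of arm \(i\). So only the residual pulls are charged.

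The key step is an embedded-game reduction. Because BDQ-MOSS builds its index only from residual observations, the residual rounds, taken in order across all blocks, form a standard \(K\)-armed stochastic bandit game of horizon exactly \(R\). The mandatory pulls interleave with this game, but their outcomes are never read by the residual rule. They are also independent of the residual reward draws: the schedule is deterministic, or depends only on the revealed floors. To make this precise, I would use a stack-of-rewards model in which each arm has a separate i.i.d.\ reward sequence reserved for residual pulls. Under this model the residual pull sequence has the same law as MOSS run with horizon \(R\) on the same instance.

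The residual regret \(\sum_i\Delta_i\E[N^{\mathrm{res}}_i(T)]\) is then exactly the pseudo-regret of MOSS at horizon \(R\). The theorem of \citet{audibert2009minimax} bounds this by \(C\sqrt{KR}\), and the forced initial pull of each arm contributes at most \(K\). This gives \(\Reg_{\mathbf m}(T)\le C\sqrt{KR}+K\).

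One point needs care: MOSS requires its horizon to be known, and here it uses \(R\) in the index. If floors are revealed online and \(R\) is unknown, I would substitute the anytime MOSS variant or a doubling trick, which changes only the constant. I would also check the degenerate case \(R<K\). There the bound is trivially at most \(K\), because each residual pull costs at most \(1\).

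Finally, combining the upper bound with Theorem~\ref{thm:block-lower} gives \(\Theta(\sqrt{KR})\) for \(R\ge K\), since \(K\le\sqrt{KR}\) in that range. The main obstacle is stating the embedding cleanly, namely showing that conditioning on the interleaved mandatory pulls does not alter the law of the residual game. The independent reward-stack coupling handles this.
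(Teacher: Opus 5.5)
Your proposal is correct and follows the paper's proof step for step: the unchanged mandatory schedule gives fairness, Lemma~\ref{lem:block-id} charges only residual pulls, the residual rounds form an embedded horizon-\(R\) MOSS game (your separate reward-stack coupling is a concrete form of the paper's conditional-law argument), \(R<K\) is trivial, a doubling trick covers unknown \(R\), and Theorem~\ref{thm:block-lower} supplies the matching lower bound. Your remark that doubling changes only the constant is also justified, since restarted phases shorter than \(K\) cost under \(2K\) in total and the longer phases sum geometrically; this is marginally tighter than the paper's additive \(K\lceil\log_2(2R)\rceil\).
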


\begin{proof}[Proof idea]
Because the residual rule reads only residual history, the \(R\) residual rounds form a standard \(K\)-armed bandit played by MOSS; by Lemma~\ref{lem:block-id} the fair regret equals that embedded pseudo-regret, which MOSS bounds by \(C\sqrt{KR}\) \citep{audibert2009minimax}. A doubling trick handles unknown \(R\) (Appendix~\ref{app:bdqmoss-proof}).
\end{proof}

\subsection{Positive Mandatory Exposure and Instance-Dependent Rates}

The zero-floor reduction invites the objection that the lower bound says nothing about instances with genuine mandatory exposure. It does, once the mandatory mass is placed carefully.

\begin{proposition}[Positive mandatory exposure does not remove the residual lower bound]
\label{prop:positive-floor-lower}
For every \(K\ge2\), \(R\ge K\), and any integer mandatory budget \(M\ge0\), there is a blockwise-fair instance with total mandatory exposure \(M\) and total residual budget \(R\) on which every blockwise-fair policy suffers minimax fair regret \(\Omega(\sqrt{KR})\).
\end{proposition}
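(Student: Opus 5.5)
Place the mandatory exposure so it is \emph{uninformative}: it is matched by the comparator and reveals nothing about which arm is best. The construction uses a dedicated mandatory block, then reduces to Theorem~\ref{thm:block-lower}.

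\textbf{Construction.} Take two blocks. Block 1 has length \(H_1=M\) and floor vector \(m_1\) with \(\sum_i m_{1,i}=M\), so block 1 has zero residual. Block 2 has length \(H_2=R\) and floor \(m_2=0\). The total mandatory exposure is \(M\) and the total residual budget is \(R\).

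\textbf{Making the forced pulls carry no information.} Put all of block 1's floor on an extra dummy arm only if \(K\) allows; otherwise spread it as \(m_{1,i}\) so that every hard instance sees the same forced pulls. I would use the standard hard family of Theorem~\ref{thm:block-lower}: base means \(1/2\) for all arms, with one arm \(j\) raised to \(1/2+\varepsilon\) and \(\varepsilon\asymp\sqrt{K/R}\).

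The forced pulls in block 1 do reveal samples, and this is the main obstacle. Two fixes are possible:
\begin{enumerate}[label=(\alph*),leftmargin=*]
\item Place all \(M\) mandatory pulls on a single arm, say arm \(1\), and restrict the hard family to perturbations \(j\neq1\). Arm 1's distribution is then identical across the family, so block 1 is pure noise independent of \(j\). The policy could simulate it, so it adds no information.
\item Alternatively, choose \(\varepsilon\) small enough that the information from \(M\) forced samples is also controlled. This fails for large \(M\), so I will use fix (a).
\end{enumerate}

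\textbf{Regret accounting.} By Lemma~\ref{lem:block-id}, fair regret equals \(\sum_{i\neq i^\star}\Delta_i(N_i(T)-\sum_b m_{b,i})\). For \(j\neq1\), arm 1 is suboptimal, but its \(M\) mandatory pulls are subtracted. Block 1 therefore contributes exactly zero, and the regret equals the pseudo-regret of the \(R\)-round game played in block 2. That game uses the history from block 1 as extra input, but this input is independent of the instance.

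\textbf{Lower bound on the embedded game.} Any blockwise-fair policy induces a \(K\)-armed bandit policy on block 2, with internal randomization substituted for the block-1 samples. The embedded game is over \(K-1\) perturbable arms plus arm 1, and \(K-1\ge K/2\). The standard KL chain rule, Pinsker, and averaging argument over \(j\in\{2,\dots,K\}\), as in Theorem~\ref{thm:block-lower}, then give \(\Omega(\sqrt{(K-1)R})=\Omega(\sqrt{KR})\) for \(R\ge K\).

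\textbf{Edge cases.} If \(M=0\), drop block 1 and the statement is Theorem~\ref{thm:block-lower}. Arm 1 having mean exactly \(1/2\) in every instance is consistent with the base family, so no extra case is needed. The only delicate point is verifying that the block-1 observation law does not depend on \(j\), so that the divergence decomposition picks up no block-1 terms; fix (a) guarantees this.
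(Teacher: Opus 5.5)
Your construction and your $K\ge3$ argument are the paper's: block 1 of length $M$ with floor $(M,0,\dots,0)$, and arm 1 fixed at Bernoulli$(1/2)$ so the forced pulls carry no information and cancel via Lemma~\ref{lem:block-id}. The lower bound then comes from the chain-rule/Pinsker averaging over the alternatives $j\in\{2,\dots,K\}$.

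The gap is $K=2$, which the statement covers and which you dismiss with ``no extra case is needed.'' At $K=2$ your hard family is a single instance, the one where arm 2 has mean $1/2+\varepsilon$. (The all-$1/2$ reference $P_0$ is only a reference measure; every policy has zero regret on it.) The policy that pulls arm 2 in every residual round has zero fair regret on that instance, so no lower bound can follow. The averaging inequality shows the same failure:
\[
\frac{1}{K-1}\sum_{j=2}^{K}\Reg_j\ \ge\ \varepsilon R\Bigl(1-\frac{1}{K-1}-\varepsilon\sqrt{\tfrac{2R}{K-1}}\Bigr)
\]
Its leading factor $1-1/(K-1)$ is zero at $K=2$, so the bound is vacuous. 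Bounding that factor below by $1/2$ requires $K\ge3$. The inequality $K-1\ge K/2$ that you cite is not the one that matters.

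To close the case, keep arm 1 at Bernoulli$(1/2)$, so block 1 stays uninformative and comparator-matched. Then perturb arm 2 on both sides: Bernoulli$(1/2+\varepsilon)$ under $P^{+}$ and Bernoulli$(1/2-\varepsilon)$ under $P^{-}$, so the identity of the best arm is genuinely unknown.
\begin{enumerate}[label=(\arabic*),leftmargin=*]
\item With $A=\{N^{\mathrm{res}}_1\ge R/2\}$, you get $\Reg_{P^{+}}\ge\varepsilon(R/2)P^{+}(A)$ and $\Reg_{P^{-}}\ge\varepsilon(R/2)P^{-}(A^c)$.
\item Bretagnolle--Huber gives $P^{+}(A)+P^{-}(A^c)\ge\tfrac12 e^{-\mathrm{KL}}$.
\item By the chain rule, only residual pulls of arm 2 differ between the two laws. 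Using $\mathrm{kl}(1/2+\varepsilon,1/2-\varepsilon)\le16\varepsilon^2$ for $\varepsilon\le1/4$, this gives $\mathrm{KL}\le16\varepsilon^2R$.
\item Choose $\varepsilon=(16R)^{-1/2}$. Then $\Reg_{P^{+}}+\Reg_{P^{-}}\ge e^{-1}\sqrt{R}/16=\Omega(\sqrt{KR})$ at $K=2$.
\end{enumerate}
This is exactly the separate $K=2$ case in the paper's proof.
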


The construction, given in full in Appendix~\ref{app:positive-floor-proof}, places the entire mandatory mass on a reference arm whose reward distribution is identical across all instances in the lower-bound family: a first block of length \(M\) carries the floor vector \((M,0,\ldots,0)\), and arm \(1\) is Bernoulli\((1/2)\) under every instance while the alternatives shift only one of the remaining arms. The forced pulls of arm \(1\) then carry no information (they contribute zero KL divergence between instances) and are matched by the comparator, so by Lemma~\ref{lem:block-id} they cancel from fair regret; the residual block of length \(R\) reproduces the standard construction over the remaining arms, giving \(\Omega(\sqrt{(K-1)R})=\Omega(\sqrt{KR})\). The point of the proposition is that mandatory exposure, however large, cannot buy the learner out of the \(\Omega(\sqrt{KR})\) residual cost.

The minimax results leave open whether easy instances are easier in the residual budget. They are, but only down to the classical logarithmic barrier, and again it is \(R\) that sets the clock. Call a blockwise-fair policy \emph{uniformly fast} on the template family of Proposition~\ref{prop:positive-floor-lower} (arm \(1\) Bernoulli\((1/2)\) in every instance, all means in \((0,1)\) with \(\mu_\star<1\)) if its expected fair regret is \(o(R^a)\) for every \(a\in(0,1]\) on every instance of the family, as \(R\to\infty\) with the mandatory budget \(M=M(R)\) growing at most polynomially in \(R\). BDQ-UCB with \(\eta=1/T\) is uniformly fast by Theorem~\ref{thm:blockwise}(ii), so the class is nonempty.

\begin{theorem}[Instance-dependent lower bound]
\label{thm:instance-lower}
Let \(\pi\) be any uniformly fast blockwise-fair policy. Then for every instance \(\nu\) of the template family and every arm \(i\ne1\) with \(\Delta_i>0\),
\[
    \liminf_{R\to\infty}\ \frac{\E_\nu\bigl[N^{\mathrm{res}}_i\bigr]}{\log R}\ \ge\ \frac{1}{\mathrm{kl}(\mu_i,\mu_\star)},
\]
and consequently
\[
    \liminf_{R\to\infty}\ \frac{\Reg_{\mathbf m}(T)}{\log R}\ \ge\ \sum_{i\ne1:\,\Delta_i>0}\frac{\Delta_i}{\mathrm{kl}(\mu_i,\mu_\star)}.
\]
\end{theorem}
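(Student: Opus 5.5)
The plan is to run the classical Lai--Robbins change-of-measure argument (in the Bretagnolle--Huber / divergence-decomposition form of \citealp{lattimore2020bandit}) inside the residual rounds only, using the template family's structure to make the mandatory pulls informationally inert. Fix an instance \(\nu\) of the template family and a suboptimal arm \(i\ne1\) with \(\Delta_i>0\). First I construct a perturbed instance \(\nu'\) that changes only arm \(i\): its mean becomes \(\mu_i'=\mu_\star+\varepsilon\) for a small \(\varepsilon>0\), using Bernoulli (or a one-parameter exponential family) laws with \(\mu_\star+\varepsilon<1\). This is possible because \(\mu_\star<1\). Arm \(1\) remains Bernoulli\((1/2)\), so \(\nu'\) stays in the template family. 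In particular, if \(i\ne1\) it is the perturbed arm, and arm \(1\) carries the entire mandatory mass \(M\) as in Proposition~\ref{prop:positive-floor-lower}. The floors are identical under \(\nu\) and \(\nu'\).

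Second, I apply the divergence decomposition to the law of the full history over all \(T=M+R\) rounds:
\[
\mathrm{KL}(\Prob_\nu,\Prob_{\nu'})=\sum_j \E_\nu[N_j(T)]\,\mathrm{KL}(\nu_j,\nu'_j)=\E_\nu[N_i(T)]\,\mathrm{kl}(\mu_i,\mu_\star+\varepsilon).
\]
The floor vector \((M,0,\dots,0)\) puts no forced pulls on arm \(i\), so \(N_i(T)=N^{\mathrm{res}}_i\) and the mandatory phase contributes zero divergence. This is the key point at which \(R\), not \(T\), becomes the clock.

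Third, I choose the event \(A=\{N^{\mathrm{res}}_i>R/2\}\) and apply Bretagnolle--Huber:
\[
\Prob_\nu(A)+\Prob_{\nu'}(A^c)\le\tfrac12\exp(-\mathrm{KL}).
\]
Under \(\nu\), Lemma~\ref{lem:block-id} gives \(\Reg_\nu\ge\Delta_i\E_\nu[N_i^{\mathrm{res}}]\ge\Delta_i(R/2)\Prob_\nu(A)\). Under \(\nu'\), arm \(i\) is the unique best arm with gap at least \(\varepsilon\) over every other arm. The comparator puts all residual rounds on \(i\), and forced pulls of arm \(1\) are matched. On \(A^c\), more than \(R/2\) residual rounds go to arms with gap at least \(\varepsilon\), so \(\Reg_{\nu'}\ge\varepsilon(R/2)\Prob_{\nu'}(A^c)\). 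Uniform fastness gives both regrets \(o(R^a)\). Hence
\[
\E_\nu[N_i^{\mathrm{res}}]\,\mathrm{kl}(\mu_i,\mu_\star+\varepsilon)\ge\log\bigl(cR/(o(R^a))\bigr),
\]
and dividing by \(\log R\), then letting \(R\to\infty\), \(a\to0\) and \(\varepsilon\to0\), gives the first display by continuity of \(\mathrm{kl}\) in its second argument. The second display follows by summing \(\Delta_i\E_\nu[N^{\mathrm{res}}_i]\) over \(i\ne1\) through Lemma~\ref{lem:block-id}. Arm-\(1\) residual pulls only add nonnegative terms, and arm \(1\)'s mandatory pulls cancel.

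The main obstacle is making this asymptotic regime well defined. The instance changes with \(R\) (block lengths, \(M(R)\), and therefore \(T\)), so uniform fastness must be applied along the same sequence of floor schedules for both \(\nu\) and \(\nu'\); I will index everything by \(R\) and note that \(\nu'\) lies in the family for each \(R\). I also have to check that the polynomial growth of \(M(R)\) does not interfere. It enters nowhere in the KL term, and it would matter only if regret were normalized by \(\log T\). Since \(\log T=O(\log R)\) under polynomial growth, that distinction is harmless. A second subtlety is arm \(1\) itself: it is fixed at \(1/2\) and cannot be perturbed, which is why the statement excludes \(i=1\). If \(\mu_\star=1/2\) is attained only by arm \(1\), the perturbation still works because only arm \(i\) moves.
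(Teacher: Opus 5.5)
Your proposal is essentially the paper's proof. It uses the same perturbation of arm \(i\) to mean \(\mu_\star+\varepsilon\), and the same observation that the mandatory arm-\(1\) pulls contribute zero divergence, so that \(\mathrm{KL}(\Prob_\nu,\Prob_{\nu'})=\E_\nu[N^{\mathrm{res}}_i]\,\mathrm{kl}(\mu_i,\mu_\star+\varepsilon)\); it then applies Bretagnolle--Huber on an event of the form \(\{N^{\mathrm{res}}_i\ge R/2\}\), uses uniform fastness on both \(\nu\) and \(\nu'\), and takes the limits \(a\downarrow0\), \(\varepsilon\downarrow0\), exactly as the paper does. The one slip is your displayed Bretagnolle--Huber inequality, which should read \(\Prob_\nu(A)+\Prob_{\nu'}(A^c)\ge\tfrac12\exp(-\mathrm{KL})\) rather than \(\le\); your subsequent rearrangement already uses the correct direction.
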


The proof, in Appendix~\ref{app:instance-lower-proof}, is a change of measure via the Bretagnolle--Huber inequality against the instance that raises arm \(i\) just above \(\mu_\star\); the mandatory pulls contribute zero divergence, so the information budget is again \(R\). Together with the per-arm term \(8\log(2KT/\eta)/\Delta_i\) of Theorem~\ref{thm:blockwise}(ii), the theorem shows that logarithmic-in-\(R\) growth with \(1/\Delta_i\) scaling is necessary, up to the constant relating \(\mathrm{kl}(\mu_i,\mu_\star)\) to \(\Delta_i^2\). The constant itself is attainable through the same embedded-game reduction.

\begin{corollary}[BDQ-KL: both regimes simultaneously]
\label{cor:bdqkl}
For Bernoulli rewards, let BDQ-KL replace the residual rule of BDQ-UCB by kl-UCB\(^{++}\) \citep{menard2017minimax}, run on residual observations only with horizon parameter \(R\). BDQ-KL retains the deterministic blockwise fairness guarantee, its expected fair regret is at most \(C\sqrt{KR}+K\) for a universal constant \(C\), and on every fixed instance
\[
    \limsup_{R\to\infty}\ \frac{\Reg_{\mathbf m}(T)}{\log R}\ \le\ \sum_{i\ne i^\star}\frac{\Delta_i}{\mathrm{kl}(\mu_i,\mu_\star)} .
\]
\end{corollary}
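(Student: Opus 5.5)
The plan is the same embedded-game reduction used for Theorem~\ref{thm:bdqmoss}, with the kl-UCB\(^{++}\) guarantees of \citet{menard2017minimax} imported in place of MOSS.

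\emph{Fairness.} The mandatory phase of each block is unchanged: it executes exactly \(m_{b,i}\) pulls of each arm. Blockwise fairness is therefore deterministic, exactly as in Theorem~\ref{thm:blockwise}(i), whatever the residual rule does.

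\emph{Reduction to an ordinary \(R\)-round bandit.} The residual rule reads only residual observations. Those observations are fresh independent draws, independent of the mandatory pulls because rewards are independent across pulls. So the subsequence of residual rounds, concatenated across blocks, is distributed exactly as kl-UCB\(^{++}\) run for \(R\) rounds on the \(K\)-armed Bernoulli instance \(\mu\). The kl-UCB\(^{++}\) index with horizon parameter \(R\) only ever sees this concatenated history, so block boundaries are invisible to it. One technical point: if floors are revealed online and \(R\) is not known in advance, I would handle it as in Theorem~\ref{thm:bdqmoss}, by a doubling trick or by assuming \(R\) is given. I would state the corollary with \(R\) known, as the statement's phrase ``horizon parameter \(R\)'' suggests.

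\emph{Regret identity.} By Lemma~\ref{lem:block-id},
\[
\widehat\Reg_{\mathbf m}(T)=\sum_{i\ne i^\star}\Delta_i\bigl(N_i(T)-\textstyle\sum_b m_{b,i}\bigr)=\sum_{i\ne i^\star}\Delta_i N^{\mathrm{res}}_i .
\]
Taking expectations, \(\Reg_{\mathbf m}(T)\) equals the pseudo-regret of the embedded \(R\)-round game.

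\emph{Importing the two kl-UCB\(^{++}\) bounds.} \citet{menard2017minimax} prove two guarantees for kl-UCB\(^{++}\) with known horizon \(n\). The minimax bound is \(C\sqrt{Kn}\) for \(n\ge K\). The asymptotic bound is \(\limsup_{n\to\infty}\E[N_i(n)]/\log n\le 1/\mathrm{kl}(\mu_i,\mu_\star)\) for every suboptimal arm. Substituting \(n=R\) gives \(\Reg_{\mathbf m}(T)\le C\sqrt{KR}\) when \(R\ge K\). For \(R<K\), the trivial bound \(\Reg_{\mathbf m}(T)\le R\le K\) holds because every residual pull costs at most one. Together these give \(\Reg_{\mathbf m}(T)\le C\sqrt{KR}+K\), and the additive \(K\) also covers any initialization convention.

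For the instance-dependent bound, divide the identity by \(\log R\) and take \(\limsup\) along \(R\to\infty\). This gives \(\sum_{i\ne i^\star}\Delta_i/\mathrm{kl}(\mu_i,\mu_\star)\). The limit is taken for a fixed instance, and the law of the embedded game depends on the block structure only through \(R\), so any sequence of block and floor configurations with \(R\to\infty\) is covered.

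\emph{Main obstacle.} The one step needing care is the claim that the embedded game is exactly a standard bandit. The residual rounds are interleaved with mandatory pulls whose number and timing are deterministic given the floors. Their rewards are independent of the residual reward stream, and the residual policy is measurable with respect to residual history alone. A standard coupling to a reward table, with one i.i.d. stack of residual rewards per arm, makes this rigorous.

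Finally, I would remark that combining this corollary with Theorem~\ref{thm:instance-lower} gives matching constants when \(i^\star\ne1\) on the template family. There the arm-1 term vanishes, since arm 1 is Bernoulli\((1/2)\) with a nonzero gap only in some instances, and the corollary sums over all \(i\ne i^\star\), which is an upper bound that dominates.
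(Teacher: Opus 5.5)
Your proposal is correct and follows the paper's proof step for step. It keeps the mandatory schedule unchanged for fairness, uses the residual-only embedded $R$-round Bernoulli game exactly as in Theorem~\ref{thm:bdqmoss}, applies Lemma~\ref{lem:block-id} to identify fair regret with the embedded pseudo-regret, and imports the minimax and asymptotic guarantees of kl-UCB$^{++}$ with known $R$.

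One caveat concerns your closing remark, which lies outside the statement. The match with Theorem~\ref{thm:instance-lower} holds arm by arm for every $i\ne1$, as the paper says, but it holds for the total constant only when arm $1$ is optimal. When $i^\star\ne1$ and $\mu_\star>1/2$, your upper bound still carries the term $\Delta_1/\mathrm{kl}(1/2,\mu_\star)>0$, which the lower bound omits, so the arm-1 term does not vanish there.
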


BDQ-KL is uniformly fast, and its per-arm residual counts meet the bound of Theorem~\ref{thm:instance-lower} with equality for every arm the theorem constrains, so the plan-matched blockwise complexity is closed in both regimes, with the exact constant in the instance-dependent one (proof in Appendix~\ref{app:bdqkl-proof}). The lower bounds, Theorem~\ref{thm:bdqmoss}, and Corollary~\ref{cor:bdqkl} together say that blockwise fairness costs nothing beyond the classical price of learning on the residual budget: \(\Theta(\sqrt{KR})\) in the worst case, \((1+o(1))\log R\sum_i\Delta_i/\mathrm{kl}(\mu_i,\mu_\star)\) on fixed instances, with the mandatory budget \(M\) appearing in neither.

\section{Online Block-Start Floors}
\label{app:online-floors-proof}

\begin{theorem}[Online block-start floors]
\label{thm:online-floors}
Suppose that before block \(b\) begins, the learner observes \(m_b\), with \(\sum_i m_{b,i}\le H_b\), but does not know future floor vectors \(m_{b+1},\ldots,m_B\). Online-BDQ-UCB, which executes the observed mandatory schedule for block \(b\) and then runs UCB on the residual rounds of that block, satisfies all realized block constraints exactly. Conditional on the realized floor sequence, the regret bounds of Theorem~\ref{thm:blockwise} hold with
\[
    R=\sum_{b=1}^B\left(H_b-\sum_i m_{b,i}\right).
\]
\end{theorem}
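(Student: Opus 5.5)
The plan is to reduce Theorem~\ref{thm:online-floors} to the proof of Theorem~\ref{thm:blockwise}. That proof never uses future floor vectors, so it goes through unchanged once we condition on the realized floor sequence.

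\emph{Feasibility.} This is immediate. At the start of block \(b\) the learner has seen \(m_b\) with \(\sum_i m_{b,i}\le H_b\), so the mandatory schedule containing exactly \(m_{b,i}\) copies of each arm fits inside the block. Hence \(N_{b,i}\ge m_{b,i}\) on every sample path and for every realized floor sequence, exactly as in Theorem~\ref{thm:blockwise}(i).

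\emph{Regret.} First I would fix the information structure. The floor sequence may be chosen obliviously, or more generally may depend on past actions and rewards, provided it does not depend on future reward draws. Either way, the reward of the \(k\)-th pull of arm \(i\) is an i.i.d.\ stack variable \(X_{i,k}\), independent of which floors are realized. The concentration event
\[
\calE=\bigl\{|\widehat\mu_{i,n}-\mu_i|\le\sqrt{2L/n}\ \text{for all } i\in[K],\ n\le T\bigr\}
\]
is defined on the reward stacks alone. Hoeffding's inequality and a union bound over the \(KT\) pairs give \(\Prob(\calE)\ge1-\eta\). If the floors are oblivious, this probability is unchanged by conditioning on them.

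Next, on \(\calE\) I would rerun the residual argument. By Lemma~\ref{lem:block-id}, applied to the realized \(\mathbf m\), the regret equals \(\sum_{i\ne i^\star}\Delta_i N^{\mathrm{res}}_i\), where \(N^{\mathrm{res}}_i\) counts residual pulls of arm \(i\). The bound on this sum uses only two facts:
\begin{itemize}
\item a suboptimal arm is chosen in a residual round only if its index exceeds that of \(i^\star\), which on \(\calE\) forces \(N_i\le 8L/\Delta_i^2\) or forces it to be an unseen arm's initialization pull;
\item \(\sum_i N^{\mathrm{res}}_i=R\).
\end{itemize}
Neither fact involves unobserved floors, so the per-arm bound \(\min\{R\Delta_i,\,8L/\Delta_i\}\) plus \(K\) for initialization follows, with \(R\) computed from the realized floors. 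The \(\varepsilon\)-split then gives the bound \(K+4\sqrt{2KRL}\).

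\emph{Main obstacle.} The delicate step is making ``conditional on the realized floor sequence'' rigorous when the floors are adaptive, since conditioning on an adaptively chosen \(\mathbf m\) could bias the reward stacks. I would handle this by stating the guarantee as a pathwise one: on the single event \(\calE\), which has probability at least \(1-\eta\) unconditionally, the bound holds simultaneously for every realized \(\mathbf m\) and its induced \(R\). For oblivious floors this is literally the conditional statement. One more point needs checking: \(L=\log(2KT/\eta)\) requires knowing \(T\), and if the block lengths are not known in advance I would use an upper bound on \(T\), which only weakens the constants.
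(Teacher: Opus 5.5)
Your proposal is correct and follows the paper's own argument. Feasibility is block-local, and the concentration event \(\calE\) depends only on the reward samples and never on future floors, so the proof of Theorem~\ref{thm:blockwise} reruns unchanged with \(R\) computed from the realized floors. Your pathwise formulation on the single event \(\calE\) is a somewhat more careful reading of the paper's ``conditional on the realized floor sequence'', and it also covers floors chosen adaptively from past observations.
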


\begin{proof}
Feasibility is block-local: once \(m_b\) is revealed, the algorithm schedules exactly \(m_{b,i}\) pulls of each arm before the block's residual UCB rounds, meeting the block constraint. The UCB concentration event depends only on sample counts and does not require knowledge of future floors. Conditional on the realized sequence \((m_1,\ldots,m_B)\), the proof of Theorem~\ref{thm:blockwise} applies verbatim.
\end{proof}

\section{Learning the Group Plan: Extended Analysis}
\label{app:learned-plan}

This appendix develops in full the learned-plan material summarized in Section~\ref{sec:learned-plan}: the one-shot planner OG-BDQ-UCB, the benchmark-slack calibration, the dual ledger and its overlap counterexample, and the plan-sampling algorithm P-BDQ-UCB together with its feasibility guarantee, its descent condition, and the conditional regret theorem. Proofs of the statements collected here appear in the proof sections that follow.

\subsection{The One-Shot Planner OG-BDQ-UCB}

OG-BDQ-UCB (optimistic-plan Group-BDQ-UCB) runs in each block \(b\) as follows: pull every arm once (\(K\) initialization rounds); form the clipped indices \(U_i=\min\{1,\,\widehat\mu_i+\sqrt{2L/N_i}\}\) from all observations so far; solve the linear program that maximizes \(\sum_i U_iy_i\) over the \emph{plan polytope}
\[
\begin{aligned}
    P_b=\Bigl\{y\ge0:\ &\textstyle\sum_{i\in g}y_i\ge f_{b,g}+t\ \ \forall g\in\calG,\\
    &\textstyle\sum_i y_i=H_b-2K\Bigr\},
\end{aligned}
\]
giving the plan \(y_b\); round \(y_b\) by Beck--Fiala to \(n_b\) and execute it; and spend the at most \(2K\) leftover rounds of the block by the UCB rule. The plan polytope is assumed nonempty, that is, the slack floors are coverable within \(H_b-2K\); the \(2K\) reserve absorbs the ceiling budget of the rounding and the initialization. The benchmark is the strongest one available to any per-block policy with the same rounding slack, \(\OPT^{\mathrm{ad}}=\sum_{b=1}^B\bigl[\max_{y\in P_b}\langle\mu,y\rangle+2K\mu_\star\bigr]\), the best slack-feasible fractional plan per block with its reserve spent on a best arm.

\begin{theorem}[OG-BDQ-UCB: learning the plan]
\label{thm:ogbdq}
OG-BDQ-UCB meets every group floor in every block deterministically, and with probability at least \(1-\eta\) its realized mean value \(V\) satisfies
\[
    \OPT^{\mathrm{ad}}-V\ \le\ 3KB+4H_{\max}\sqrt{2LB},
\]
where \(H_{\max}=\max_b H_b\) and \(L=\log(2KT/\eta)\). With equal blocks \(H_b=T/B\), the bound reads \(3KB+4\sqrt2\,T\sqrt{L/B}\).
\end{theorem}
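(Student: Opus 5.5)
The proof has two parts: a deterministic feasibility argument, and a regret decomposition that is carried out block by block on the standard concentration event \(\calE\) of Theorem~\ref{thm:main}.

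\emph{Feasibility.} Fix a block \(b\) and let \(y_b\in P_b\) be the plan. Every group satisfies \(\sum_{i\in g}y_{b,i}\ge f_{b,g}+t\). The ceiling budget obeys \(\sum_i\lceil y_{b,i}\rceil\le \sum_i y_{b,i}+K=H_b-K\), which is at most the rounds left after the \(K\) initialization pulls. Theorem~\ref{thm:groupbf} therefore gives an integral \(n_b\) that fits in the block and has \(\sum_{i\in g}n_{b,i}>\sum_{i\in g}y_{b,i}-t\ge f_{b,g}\). The initialization and leftover pulls only add exposure, so every group floor is met deterministically.

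\emph{Regret decomposition.} Write the realized mean value of block \(b\) as the sum of three parts: the initialization value (at least \(K\mu_\star-K\)), the rounded-plan value \(\langle\mu,n_b\rangle\), and the leftover value. Since \(|n_{b,i}-y_{b,i}|<1\) coordinatewise, \(\langle\mu,n_b\rangle\ge\langle\mu,y_b\rangle-K\). The leftover rounds number \(H_b-K-\sum_i n_{b,i}\), and this count differs from the \(2K\) reserve credited to \(\OPT^{\mathrm{ad}}\) by at most \(K\) rounds of value at most \(1\). So, up to a \(3K\) loss per block, the block regret is bounded by the planning loss \(\max_{y\in P_b}\langle\mu,y\rangle-\langle\mu,y_b\rangle\). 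I will not try to exploit the UCB behaviour of the leftover rounds; charging them crudely to the \(3KB\) term is enough.

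\emph{Planning loss.} Let \(y^\star_b\) be the maximizer of \(\langle\mu,y\rangle\) over \(P_b\). On \(\calE\), clipping keeps optimism, so \(\mu_i\le U_i\) for all \(i\). Because \(y_b\) maximizes \(\langle U,y\rangle\) over \(P_b\),
\[
\langle\mu,y^\star_b\rangle-\langle\mu,y_b\rangle\le\langle U,y^\star_b\rangle-\langle\mu,y_b\rangle\le\langle U-\mu,y_b\rangle\le\sum_i y_{b,i}\,2\sqrt{2L/N_i^{(b)}},
\]
where \(N_i^{(b)}\ge b\) is the count at the start of block \(b\); this uses \(U_i-\mu_i\le 2\sqrt{2L/N_i}\) on \(\calE\). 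The initialization step gives every arm one pull per block, so \(N_i^{(b)}\ge b\). Using \(\sum_i y_{b,i}\le H_{\max}\), the planning loss of block \(b\) is at most \(2H_{\max}\sqrt{2L/b}\). Summing over blocks with \(\sum_{b\le B}b^{-1/2}\le 2\sqrt B\) gives \(4H_{\max}\sqrt{2LB}\). Adding the \(3KB\) accounting term yields the stated bound, and substituting \(H_b=T/B\) gives the equal-block form.

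\emph{Main obstacle.} The delicate step is the per-block budget bookkeeping. I need to confirm that the ceiling budget of the rounded plan, the \(K\) initialization pulls, and the leftover count together fit inside \(H_b\), and that the gap between the realized leftover count and the \(2K\) reserve in \(\OPT^{\mathrm{ad}}\) really costs at most \(3K\) per block. If the constant turns out tighter than my accounting allows, I would re-examine the sign of \(\sum_i(n_{b,i}-y_{b,i})\) coming from Theorem~\ref{thm:groupbf}.
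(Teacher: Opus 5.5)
Your proof is correct and follows the paper's argument step for step: feasibility via Theorem~\ref{thm:groupbf} with ceiling budget \(H_b-K\), then optimism plus LP optimality of \(y_b\) with \(N_i\ge b\), then \(\sum_{b\le B}b^{-1/2}\le 2\sqrt B\). The count \(N_i\ge b\) holds at the planning step, after block \(b\)'s initialization pulls, not at the block's start as you wrote. The bookkeeping you flag as delicate closes more simply than your pairing of leftover rounds with the reserve: the realized block value is at least \(\langle\mu,n_b\rangle\), because initialization and leftover pulls contribute nonnegatively. So the reserve costs at most \(2K\mu_\star\le 2K\) and rounding costs at most \(K\), and no sign information on \(\sum_i(n_{b,i}-y_{b,i})\) is needed.
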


The proof is in Appendix~\ref{app:ogbdq-proof}: feasibility is Theorem~\ref{thm:groupbf} applied to \(y_b\), and for regret, block-\(b\) planning has at least \(b\) samples per arm, so optimism and LP optimality bound the per-block gap by \(\min(1,2\sqrt{2L/b})H_b\); summing \(\sum_b b^{-1/2}\le2\sqrt B\) with \(3K\) per block for integrality and initialization gives the bound.

We first record that the rounding slack built into the plan polytope of OG-BDQ-UCB costs little whenever the floors admit a Slater-type interior.

\begin{proposition}[Comparator gap under a margin]
\label{prop:slater-gap}
Write \(V^{\mathrm{frac}}_b\) for the unslacked fractional group-fair optimum of block \(b\) (floors \(f_{b,g}\), budget exactly \(H_b\)) and \(V^\star_b=\max_{y\in P_b}\langle\mu,y\rangle+2K\mu_\star\) for the block-\(b\) term of \(\OPT^{\mathrm{ad}}\). Suppose block \(b\) admits a plan \(z\ge0\) with \(\sum_{i\in g}z_i\ge f_{b,g}+\sigma_b H_b\) for every group and \(\sum_i z_i\le(1-\kappa_b)H_b\), for margins \(\sigma_b,\kappa_b\in(0,1)\). Then
\[
    V^{\mathrm{frac}}_b-V^\star_b\ \le\ \max\Bigl\{\frac{t}{\sigma_b},\,\frac{2K}{\kappa_b}\Bigr\},
\]
so under uniform margins the total benchmark gap is \(O(B)\), independent of the block lengths.
\end{proposition}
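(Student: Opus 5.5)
The plan is to build, from the unslacked optimum, an explicit point of the slacked polytope \(P_b\) whose value is close to \(V^{\mathrm{frac}}_b\). Concretely, mix the unslacked optimum with the margin plan \(z\), which pushes group totals up and the budget down, and then pad with best-arm mass to restore the budget equality. The loss from mixing is a fraction \(\lambda\) of a quantity at most \(H_b\), and \(\lambda\) only has to be of order \(\max\{t/(\sigma_bH_b),\,2K/(\kappa_bH_b)\}\).

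\emph{Step 1: the mixed plan.} Let \(y^\ast\ge0\) attain \(V^{\mathrm{frac}}_b\). It satisfies \(\sum_{i\in g}y^\ast_i\ge f_{b,g}\) for every \(g\) and \(\sum_i y^\ast_i=H_b\). Set
\[
    \lambda=\max\Bigl\{\frac{t}{\sigma_bH_b},\,\frac{2K}{\kappa_bH_b}\Bigr\}.
\]
If \(\lambda>1\), the claimed bound exceeds \(H_b\). Since rewards lie in \([0,1]\), we have \(V^{\mathrm{frac}}_b\le H_b\mu_\star\le H_b\), and since \(V^\star_b\ge0\) (assuming \(P_b\) is nonempty, as in the definition of OG-BDQ-UCB), the inequality is trivial in that case. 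So assume \(\lambda\le1\) and put \(w=(1-\lambda)y^\ast+\lambda z\ge0\).

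For every group \(g\),
\[
    \sum_{i\in g}w_i\ \ge\ (1-\lambda)f_{b,g}+\lambda\bigl(f_{b,g}+\sigma_bH_b\bigr)\ =\ f_{b,g}+\lambda\sigma_bH_b\ \ge\ f_{b,g}+t.
\]
For the budget,
\[
    \sum_i w_i\ \le\ (1-\lambda)H_b+\lambda(1-\kappa_b)H_b\ =\ H_b-\lambda\kappa_bH_b\ \le\ H_b-2K.
\]

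\emph{Step 2: restore the budget equality.} \(P_b\) requires \(\sum_i y_i=H_b-2K\) exactly. Define \(y=w+\bigl(H_b-2K-\sum_i w_i\bigr)e_{i^\star}\). The added coefficient is nonnegative by Step 1, so \(y\ge0\), group totals can only increase, and the budget now holds with equality. Hence \(y\in P_b\).

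\emph{Step 3: compare values.} Since \(\mu\ge0\), both \(\lambda\langle\mu,z\rangle\) and the padding term are nonnegative. Using \(\langle\mu,y^\ast\rangle\le H_b\), we get
\[
    \langle\mu,y\rangle\ \ge\ (1-\lambda)V^{\mathrm{frac}}_b\ \ge\ V^{\mathrm{frac}}_b-\lambda H_b .
\]
Adding the nonnegative reserve term \(2K\mu_\star\) gives
\[
    V^\star_b\ \ge\ V^{\mathrm{frac}}_b-\lambda H_b\ =\ V^{\mathrm{frac}}_b-\max\{t/\sigma_b,\,2K/\kappa_b\},
\]
which is the claim.

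\emph{Step 4: sum over blocks.} With uniform margins \(\sigma_b\ge\sigma\) and \(\kappa_b\ge\kappa\), summing gives a total gap of at most \(B\max\{t/\sigma,2K/\kappa\}=O(B)\), with no dependence on \(H_b\).

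The only delicate point is the exact budget equality in \(P_b\) together with the edge case \(\lambda>1\). Both are handled above, the first by padding on \(i^\star\) and the second by the trivial bound. The rest is linear bookkeeping.
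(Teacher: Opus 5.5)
Your proof is correct and follows the paper's own argument. Both mix the unslacked optimum with the margin plan \(z\) at weight \(\max\{t/(\sigma_bH_b),\,2K/(\kappa_bH_b)\}\), pad with best-arm mass to restore \(\sum_i y_i=H_b-2K\), and lose at most that weight times \(H_b\); both also dismiss the weight-exceeds-one case because the two values lie in \([0,H_b]\). Your remark that this trivial case needs \(P_b\neq\emptyset\) (assumed in the definition of OG-BDQ-UCB) is a point the paper leaves implicit.
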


The remaining development addresses the open overlapping case. Write \(d_g(\tau)\) for group \(g\)'s remaining deficit in its block just before round \(\tau\). Linear-programming duality turns blockwise group-fair regret into an exact per-pull ledger, for every group system.

\begin{proposition}[Dual ledger]
\label{prop:dual-ledger}
Fix a block \(b\) and any \(\omega_b\in\R\), \(\lambda_b\ge0\) that are dual feasible for the block's fractional program, meaning \(\omega_b-\sum_{g\ni i}\lambda_{b,g}\ge\mu_i\) for every arm \(i\), and define the reduced cost \(r_i=\omega_b-\mu_i-\sum_{g\ni i}\lambda_{b,g}\ge0\). Every blockwise-group-fair pull sequence satisfies
\[
    V^{\mathrm{frac}}_b-\sum_{\tau\in\calB_b}\mu_{A_\tau}
    \ \le\
    \sum_{\tau\in\calB_b}\Bigl[\,r_{A_\tau}\ +\!\!\sum_{g\ni A_\tau:\,d_g(\tau)=0}\!\!\lambda_{b,g}\Bigr],
\]
with equality when \((\omega_b,\lambda_b)\) is dual optimal, in which case arms in the support of an optimal fractional plan have zero reduced cost. Every pull's contribution is nonnegative.
\end{proposition}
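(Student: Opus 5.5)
The plan is to prove the ledger by LP weak duality plus a per-pull rewriting of the dual objective. First I write the block-\(b\) fractional program explicitly: maximize \(\sum_i\mu_iy_i\) subject to \(\sum_{i\in g}y_i\ge f_{b,g}\) for every \(g\in\calG\), \(\sum_iy_i=H_b\), and \(y\ge0\). Its dual has a free variable \(\omega_b\) for the budget equality and multipliers \(\lambda_{b,g}\ge0\) for the group covers. The dual objective is \(\omega_bH_b-\sum_g\lambda_{b,g}f_{b,g}\), and the dual constraints are exactly \(\omega_b-\sum_{g\ni i}\lambda_{b,g}\ge\mu_i\), i.e.\ \(r_i\ge0\). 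Weak duality then gives \(V^{\mathrm{frac}}_b\le\omega_bH_b-\sum_g\lambda_{b,g}f_{b,g}\) for any dual-feasible pair.

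Next I distribute the dual objective over the \(H_b\) rounds of the block. For each round \(\tau\in\calB_b\), the definition of the reduced cost gives the identity \(\mu_{A_\tau}=\omega_b-r_{A_\tau}-\sum_{g\ni A_\tau}\lambda_{b,g}\). Summing over the block yields
\[
\omega_bH_b-\sum_{\tau\in\calB_b}\mu_{A_\tau}=\sum_{\tau\in\calB_b}r_{A_\tau}+\sum_{g}\lambda_{b,g}N_{b,g},
\]
where \(N_{b,g}=\sum_{i\in g}N_{b,i}\) is the number of in-block pulls crediting \(g\). Subtracting \(\sum_g\lambda_{b,g}f_{b,g}\) leaves the group term \(\sum_g\lambda_{b,g}(N_{b,g}-f_{b,g})\).

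The key step is to identify \(N_{b,g}-f_{b,g}\) with the count of overshooting pulls. I take the deficit to be \(d_g(\tau)=\max\{f_{b,g}-(\text{pulls crediting } g \text{ in block } b \text{ before } \tau),0\}\). Each pull crediting \(g\) while \(d_g(\tau)>0\) lowers the deficit by exactly one, and pulls made when \(d_g(\tau)=0\) leave it at zero. Blockwise group feasibility means the deficit reaches zero by the end of the block. Hence exactly \(f_{b,g}\) crediting pulls occur while \(d_g>0\), and the remaining \(N_{b,g}-f_{b,g}\) occur while \(d_g(\tau)=0\). Reindexing the double sum by pull gives
\[
\sum_g\lambda_{b,g}(N_{b,g}-f_{b,g})=\sum_\tau\sum_{g\ni A_\tau:\,d_g(\tau)=0}\lambda_{b,g}.
\]
Combined with weak duality, this gives the stated inequality. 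This counting argument is the only point where feasibility is used, and it is the step I would check most carefully. In particular it relies on \(d_g\) being clipped at zero and on the count being taken within the block.

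For the equality claim, I note that the block program is feasible and bounded, since the policy is feasible and the objective is bounded on the simplex scaled by \(H_b\). Strong duality therefore holds: at a dual optimum, \(V^{\mathrm{frac}}_b\) equals the dual objective, so the inequality is an equality. Complementary slackness then forces \(r_i=0\) for any arm with \(y_i>0\) in an optimal primal plan. Finally, each pull's contribution is nonnegative because \(r_i\ge0\) by dual feasibility and \(\lambda_{b,g}\ge0\).
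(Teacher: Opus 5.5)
Your proposal is correct and follows the paper's proof essentially step for step. It uses the same four ingredients: weak duality giving \(V^{\mathrm{frac}}_b\le\omega_bH_b-\sum_g\lambda_{b,g}f_{b,g}\); the per-pull splitting \(\omega_b-\mu_{A_\tau}=r_{A_\tau}+\sum_{g\ni A_\tau}\lambda_{b,g}\); the exact-crediting count, in which each group is credited on exactly \(f_{b,g}\) rounds under blockwise feasibility (integer floors are implicitly assumed here, as in the paper); and strong duality plus complementary slackness for the equality case.

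The only difference is bookkeeping. You count the \(N_{b,g}-f_{b,g}\) overshooting pulls directly, whereas the paper subtracts the total credit \(\sum_\tau c_\tau=\sum_g f_{b,g}\lambda_{b,g}\). Your remark that the realized in-block counts witness primal feasibility is a slightly more explicit justification of the strong-duality hypothesis than the paper's ``feasible by assumption.''
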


The proof, in the appendix, is weak duality plus an exact-crediting identity: blockwise feasibility drives every deficit from \(f_{b,g}\) to zero, so each group is credited its price on exactly \(f_{b,g}\) rounds. Theorem~\ref{thm:disjoint} is the disjoint instantiation: there the optimal dual is explicit (\(\omega_b=\mu_\star\) and \(\lambda_{b,g}=\mu_\star-\mu^\star_g\)), the reduced cost of \(i\in g\) is the within-group gap \(\widetilde\Delta_i\), and D-BDQ-UCB controls both ledger columns, bounding reduced costs through the covering selection inequality and zeroing the waste column by stopping each group's covering pulls at exactly \(f_{b,g}\). That covering rule does not survive overlap:

\begin{proposition}[The disjoint rule fails under overlap]
\label{prop:greedy-fails}
There is a four-arm instance with two overlapping groups (arm degree \(t=2\)) on which any rule whose covering pulls select an index-maximizing member of some deficient group and whose surplus pulls select the global index argmax (D-BDQ-UCB applied verbatim) is exactly feasible yet, on the concentration event, suffers regret at least \(T/20-CL\) against \(\OPT^{\mathrm{frac}}\), for a universal constant \(C\).
\end{proposition}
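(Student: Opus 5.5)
The plan is to build an explicit four-arm instance on which covering pulls, chosen by a within-group index, provably waste the multi-coverage offered by a shared arm. Take arms \(a,b,c,d\) and groups \(g_1=\{a,b\}\), \(g_2=\{a,c\}\). Arm \(a\) has degree \(t=2\), and \(d\) lies in no group. Fix means \(\mu_d=0.9\), \(\mu_b=\mu_c=0.5\), \(\mu_a=0.4\). Use blocks of common length \(H\) with floors \(f_{b,g_1}=f_{b,g_2}=f=H/4\), so \(\sum_g f_{b,g}=H/2\le H\) and the hypothesis of Theorem~\ref{thm:disjoint}'s rule is met. Exact feasibility of the verbatim rule is immediate: every covering pull goes to a member of a deficient group, and covering pulls continue until each deficit reaches zero.

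First I compute \(\OPT^{\mathrm{frac}}\). In each block the fractional program maximizes \(\langle\mu,y\rangle\) subject to \(y_a+y_b\ge f\), \(y_a+y_c\ge f\), \(\sum_i y_i=H\) and \(y\ge0\). The candidate plan \(y_a=f\), \(y_d=H-f\) has value \(0.9H-0.5f\). Its optimality follows from weak duality with the certificate \(\omega_b=0.9\), \(\lambda_{b,g_1}=\lambda_{b,g_2}=0.25\). This certificate is dual feasible in the sense of Proposition~\ref{prop:dual-ledger}: the reduced costs are \(r_a=r_d=0\) and \(r_b=r_c=0.15\), and its dual value \(0.9H-0.25\cdot2f\) equals the primal value. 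So \(V^{\mathrm{frac}}_b=0.9H-0.5f\) per block.

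Next I trace the rule on the concentration event \(\calE\) of Theorem~\ref{thm:blockwise}.

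\emph{Covering pulls.} While \(g_1\) is deficient and \(U_b>U_a\), the rule pulls \(b\); this credits \(g_1\) only. Likewise it pulls \(c\) for \(g_2\). The within-group gap is \(\mu_b-\mu_a=0.1\), so on \(\calE\) the ordering \(U_a\ge U_b\) (or \(U_a\ge U_c\)) can occur only while \(a\), \(b\) or \(c\) has fewer than \(O(L)\) samples. Hence across the whole horizon at most \(C_1L\) covering pulls go to \(a\). This is the same counting as the \(8L/\Delta_i^2\) bound in the proof of Theorem~\ref{thm:blockwise}. Each such pull removes at most one unit of doubled coverage, which can save at most a constant in value.

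\emph{Surplus pulls.} These follow the global UCB rule. On \(\calE\), arms other than \(d\) receive at most \(C_2L\) of them in total.

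Consequently, in every block except for the \(O(L)\) exceptional pulls, the rule spends \(f\) pulls on \(b\), \(f\) pulls on \(c\), and \(H-2f\) pulls on \(d\). Its block value is therefore at most \(0.5\cdot2f+0.9(H-2f)+O(\cdot)=0.9H-0.8f\) up to the exceptional pulls.

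Subtracting, the per-block gap is \(0.3f=0.075H\), and summing over blocks gives \(0.075\,T-CL\ge T/20-CL\). Equivalently, in ledger terms (Proposition~\ref{prop:dual-ledger} with the optimal dual), every \(b\)- or \(c\)-covering pull pays reduced cost \(0.15\), and there are about \(2f\) of them per block.

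The main obstacle is the bookkeeping of the exceptional pulls. I need to show, uniformly over blocks and in any interleaving of the covering order that the rule may choose, that mis-ordered indices and early initialization cost only \(O(L)\) additively. They must not compound per block, since such a per-block cost would give an \(O(BL)\) term. I would handle this by charging each exceptional pull to a sample count of \(a\), \(b\) or \(c\) below \(C L\); these counts are global across blocks because the indices use all observations. I would also note that the "some deficient group" freedom never helps the rule: whichever deficient group it chooses, its argmax on \(\calE\) is \(b\) or \(c\), never \(a\).
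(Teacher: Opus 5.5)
Your proposal is correct and follows essentially the paper's proof. Both constructions use a shared arm whose mean sits slightly below its groupmates', together with an ungrouped surplus arm. The concentration event limits within-group selections of the shared arm, and surplus mistakes, to \(O(L)\) pulls over the whole horizon, so each block pays a constant fraction of \(H\): your \(0.075H\) with floors \(H/4\), versus the paper's \(H/20\) with means \(0.70,0.70,0.65,0.90\) and floors \(H/3\). Your dual certificate for \(V^{\mathrm{frac}}_b\) is valid but not needed, because a regret lower bound only requires the feasible plan \(y_a=f\), \(y_d=H-f\) to lower-bound \(\OPT^{\mathrm{frac}}\).
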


The construction, proved in the appendix, is two floors sharing one arm whose mean sits slightly below its groupmates': the optimal cover accepts the lower mean because one unit of that arm's mass satisfies both floors at once, a multi-coverage value that the dual prices \(
\sum_{g\ni i}\lambda_g
\) encode and that a within-group index comparison cannot see. On per-arm floors there is no multi-coverage, which is why hard mandatory-first schedules remain competitive there (cf.\ the E-Scale stress battery in the experiments); under overlap, any correct algorithm must be LP-aware. LP-awareness alone is still not enough for our proof: the sampled plan must also satisfy explicit one-step cover and value-descent inequalities. We state these conditions openly rather than hiding them inside an invalid concavity argument.

The algorithm, P-BDQ-UCB (plan-sampling BDQ-UCB), re-solves the optimistic residual program after every pull and lets the plan choose the arm. Fix a measurable tie-breaking rule once and for all: among multiple LP optima or minimum covers, choose the lexicographically first extreme-point solution. At round \(\tau\) of block \(b\), with \(r(\tau)\) rounds and integer deficits \(d_g(\tau)\) remaining, the algorithm first computes the fractional cover slack
\[
\begin{aligned}
    S(\tau)&=r(\tau)-\mathrm{mc}(d(\tau)),\\
    \mathrm{mc}(d)&=\min\Bigl\{\textstyle\sum_i y_i:
    y\ge0,\ \sum_{i\in g}y_i\ge d_g\ \forall g\in\calG\Bigr\}.
\end{aligned}
\]
The terminal guard fires at most once per block: at the first round with \(S(\tau)\le 2K\), P-BDQ-UCB commits to the lexicographically first minimum fractional cover \(w(\tau)\) of \(d(\tau)\), executes the rounded cover \(\lceil w(\tau)\rceil\) over the following rounds, and spends any leftover rounds of the block by the index argmax. At every earlier round it solves
\[
\begin{aligned}
    \widehat y(\tau)\in\argmax\Bigl\{\textstyle\sum_iU_i(\tau)y_i:\ y\ge0,\
    &\sum_{i\in g}y_i\ge d_g(\tau)\ \forall g\in\calG,\\
    &\sum_iy_i=r(\tau)\Bigr\}
\end{aligned}
\]
with clipped indices \(U_i=\min\{1,\widehat\mu_i+\sqrt{2L/(N_i\vee1)}\}\) on all observations, and pulls \(A_\tau\sim\widehat y(\tau)/r(\tau)\). Thus the algorithm is defined independently of the analytical condition below: it samples from the optimistic plan until the guard fires. Proposition~\ref{prop:pbdq-feas} shows that the block-start slack condition \(H_b-\mathrm{mc}(f_b)>2K\) alone makes the guard's rounded cover fit deterministically; without it, P-BDQ-UCB is a heuristic and no guarantee is claimed. This is not a Beck--Fiala discrepancy layer; it is a terminal cover-rounding guard. The guard is motivated by a simple integrality obstruction: with groups \(\{a,b\},\{b,c\},\{c,a\}\), unit floors, two remaining rounds, and a surplus arm, the fractional program has slack \(0.5\) that no integral sequence can safely spend, since a surplus pull leaves one round against a fractional cover of \(1.5\). This example motivates the terminal guard; it does not rule out an unconditional \(\widetilde O(\sqrt{KT})\) plus block-additive integrality guarantee for overlap, which remains open.

The guard alone already yields an unconditional feasibility guarantee for arbitrary overlapping systems, with no discrepancy-rounding layer and no descent assumption.

\begin{proposition}[Pathwise feasibility of plan sampling]
\label{prop:pbdq-feas}
Suppose every block satisfies the initial cover-slack condition \(H_b-\mathrm{mc}(f_b)>2K\). Then P-BDQ-UCB satisfies every group floor in every block, pathwise and deterministically: each pull lowers \(\mathrm{mc}\) by at most one, so the guard fires no later than \(r=2K\) and at a state with \(S(\tau_c)>2K-1\), where the rounded minimum cover \(\lceil w(\tau_c)\rceil\) has size at most \(\mathrm{mc}(d(\tau_c))+K\le r(\tau_c)-K+1\) and fits in the remaining budget.
\end{proposition}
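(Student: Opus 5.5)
The proof is an invariant argument on the cover slack \(S(\tau)=r(\tau)-\mathrm{mc}(d(\tau))\) within a single block. Blocks are independent, because deficits reset to \(f_b\) and \(r\) resets to \(H_b\) at each block start. So I fix a block \(b\) and argue pathwise, with no probabilistic ingredient: the sampled arm \(A_\tau\) is arbitrary in the analysis.

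\textbf{Step 1: \(\mathrm{mc}\) drops by at most one per pull.} If the pull of arm \(i\) moves the deficits from \(d\) to \(d'\), then \(d'_g\ge d_g-1\) for \(g\ni i\) and \(d'_g=d_g\) otherwise. Given any fractional cover \(y'\) of \(d'\), the vector \(y'+e_i\) covers \(d\). Hence
\[
\mathrm{mc}(d)\le\mathrm{mc}(d')+1 .
\]
Since \(r\) drops by exactly one per round, \(S(\tau+1)\ge S(\tau)-1\) along every pre-guard round. This is the only place the LP structure enters, and it holds for arbitrary overlap.

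\textbf{Step 2: when the guard fires.} At block start \(S=H_b-\mathrm{mc}(f_b)>2K\), so the guard does not fire immediately. By Step 1, at the first round \(\tau_c\) with \(S(\tau_c)\le 2K\), the previous slack exceeded \(2K\), so
\[
S(\tau_c)>2K-1 .
\]
The guard must fire by the time \(r\) reaches \(2K\): \(\mathrm{mc}\ge0\) gives \(S\le r\), so \(r\le 2K\) forces \(S\le 2K\). Hence it fires at some \(\tau_c\) with \(r(\tau_c)\ge S(\tau_c)>2K-1\). If no earlier round triggers it, it fires with rounds still remaining.

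\textbf{Step 3: the rounded cover fits and meets the floors.} Let \(w=w(\tau_c)\) be the chosen minimum fractional cover. Since \(\lceil w_i\rceil\le w_i+1\) only on coordinates with \(w_i>0\), we get \(\sum_i\lceil w_i\rceil\le \mathrm{mc}(d(\tau_c))+K\). Writing \(\mathrm{mc}=r-S\) and using \(S>2K-1\),
\[
\sum_i\lceil w_i\rceil< r(\tau_c)-K+1\le r(\tau_c).
\]
So the rounded cover is executable in the remaining rounds, with the leftover spent by the index argmax. Because \(\lceil w\rceil\ge w\) componentwise and \(w\) covers \(d(\tau_c)\) fractionally, each group \(g\) receives \(\sum_{i\in g}\lceil w_i\rceil\ge d_g(\tau_c)\) further pulls, which drives every remaining deficit to zero. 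Leftover pulls can only add exposure, so every floor \(f_{b,g}\) is met. Pre-guard pulls never need to be feasible on their own, since deficits are only tracked and closed at the guard.

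\textbf{Main obstacle.} The delicate point is Step 2's off-by-one bookkeeping. I need the slack to cross the \(2K\) threshold while staying above \(2K-1\), which relies on the unit-decrement bound of Step 1, and I need the first-crossing round to exist before the block ends. The integer-versus-fractional distinction, that is, that \(d\) is integral while \(\mathrm{mc}\) is fractional, is harmless because Step 1 only uses the additive \(e_i\) shift.
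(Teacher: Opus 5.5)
Your architecture matches the paper's proof: track the cover slack, take the first crossing of \(2K\), and round a minimum fractional cover up coordinatewise. Steps 2 and 3 are correct \emph{given} the one-step bound \(S(\tau+1)\ge S(\tau)-1\). But Step 1 does not deliver that bound; it proves the opposite direction. From \(\mathrm{mc}(d)\le\mathrm{mc}(d')+1\) and \(r(\tau+1)=r(\tau)-1\) you get
\[
    S(\tau+1)=r(\tau)-1-\mathrm{mc}(d')\ \le\ r(\tau)-1-\mathrm{mc}(d)+1=S(\tau).
\]
That is, the slack never increases, which is irrelevant to feasibility. Since \(S(\tau+1)-S(\tau)=-1+\bigl(\mathrm{mc}(d)-\mathrm{mc}(d')\bigr)\), the lower bound you use in Step 2 (and invoke in your \emph{Main obstacle} paragraph) holds exactly when \(\mathrm{mc}\) does not \emph{increase} under a pull. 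An upper bound of one on the drop of \(\mathrm{mc}\) says nothing about this. As written, \(S(\tau_c)>2K-1\) is therefore unsupported, and so is the fit \(\sum_i\lceil w_i\rceil\le r(\tau_c)\) that depends on it.

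The repair is one line, and it is the first half of the paper's argument. Since \(d^+(A)\le d\) componentwise, every fractional cover of \(d\) also covers \(d^+(A)\). Hence \(\mathrm{mc}(d^+(A))\le\mathrm{mc}(d)\), which gives \(S(\tau+1)\ge S(\tau)-1\). The paper proves both directions: monotonicity for the lower side, and a covering-dual argument for \(\mathrm{mc}(d)-\mathrm{mc}(d^+(A))\le1\), concluding \(S(\tau+1)-S(\tau)\in[-1,0]\). The phrase \emph{lowers \(\mathrm{mc}\) by at most one} in the statement should be read as \emph{the drop lies in \([0,1]\)}. Only the \(0\) end of that range matters for feasibility; the \(1\) end feeds the bounded-increment (Azuma) step in the proof of Theorem~\ref{thm:pbdq}. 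Your primal argument (\(y'+e_i\) covers \(d\)) is a valid and more elementary substitute for the paper's dual argument for that second direction. With monotonicity added, your proof is complete and coincides with the paper's.
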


Turning to regret, for a deficit vector \(d\) write
\[
    d^+(A)_g = \max\{d_g-\mathbf 1\{A\in g\},0\}.
\]
For \(r\ge1\), define the true residual fractional value
\[
    \Psi_\mu(r,d)=\max\Bigl\{\langle\mu,y\rangle:
    y\ge0,
    \textstyle\sum_{i\in g}y_i\ge d_g\ \forall g,
    \sum_i y_i=r\Bigr\}.
\]
The following condition is the exact place where overlap is hard. It replaces the invalid argument that \(\mathrm{mc}\) is concave: \(\mathrm{mc}\) is convex as a function of deficits, since it is the pointwise maximum of feasible covering-dual linear forms.

\begin{definition}[Sampled-plan descent condition]
\label{def:sampled-cover-margin}
Fix \(\sigma\in(0,1)\) and \(\eta\in(0,1)\), set \(L=\log(2KT/\eta)\), and let \(\calE_L\) be the concentration event of the proof of Theorem~\ref{thm:main}, on which every index used by the algorithm satisfies
\[
\begin{aligned}
    \mu_i&\le U_i(\tau)\le \mu_i+\rho_i(\tau),\\
    \rho_i(\tau)&=\min\{1,2\sqrt{2L/(N_i(\tau)\vee1)}\}.
\end{aligned}
\]
A block satisfies the sampled-plan descent condition with margin \(\sigma\) at level \(\eta\) if its initial deficits satisfy \(H_b-\mathrm{mc}(f_b)\ge\sigma H_b\), and, on \(\calE_L\), at every pre-guard state generated by P-BDQ-UCB with remaining budget \(r\), deficits \(d\), sampling distribution \(p_i=\widehat y_i/r\), and radii \(\rho_i\), the following two one-step inequalities hold:
\begin{align}
    \E_{A\sim p}\bigl[\mathrm{mc}(d^+(A))\bigr]
    &\le \frac{r-1}{r}\,\mathrm{mc}(d),
    \label{eq:cert-cover}\\
    \E_{A\sim p}\bigl[\Psi_\mu(r-1,d^+(A))+\mu_A\bigr]
    &\ge \Psi_\mu(r,d)-\E_{A\sim p}[\rho_A].
    \label{eq:cert-value}
\end{align}
\end{definition}

Condition~\eqref{eq:cert-cover} is an online-cover contraction condition; Condition~\eqref{eq:cert-value} is an analytical true-value descent condition and is not computable without the unknown means. The definition is therefore a trajectory-level sufficient condition, not an implementable certificate. It is nonetheless non-vacuous:

\begin{proposition}[The descent condition holds for disjoint systems]
\label{prop:pbdq-disjoint}
Let the groups be pairwise disjoint. Then at every pre-guard state, inequality~\eqref{eq:cert-cover} holds unconditionally, and inequality~\eqref{eq:cert-value} holds on \(\calE_L\). Consequently, every disjoint instance whose blocks satisfy \(\sum_gf_{b,g}\le(1-\sigma)H_b\) and \(\sigma H_b>2K\) meets all hypotheses of Theorem~\ref{thm:pbdq}.
\end{proposition}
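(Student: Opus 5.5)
Because the groups are disjoint, both quantities in Definition~\ref{def:sampled-cover-margin} have closed forms, so I would check the two inequalities by direct computation rather than through a general argument. Write \(g(i)\) for the group containing arm \(i\) (if any), \(\mu^\star_g=\max_{i\in g}\mu_i\), and \(\lambda_g=\mu_\star-\mu^\star_g\ge0\). The plan has three steps: closed forms, then a one-line proof of~\eqref{eq:cert-cover}, then a pointwise-in-\(A\) proof of~\eqref{eq:cert-value}.

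\emph{Step 1 (closed forms).} First I would show \(\mathrm{mc}(d)=\sum_g d_g\). For the upper bound, put \(d_g\) units on any one member of each group. For the lower bound, summing the group constraints counts each coordinate \(y_i\) at most once, because the groups are disjoint. Next I would show
\[
\Psi_\mu(r,d)=r\mu_\star-\sum_g d_g\lambda_g .
\]
The primal plan places \(d_g\) on a best member of each \(g\) and the surplus \(r-\sum_g d_g\) on a best arm. Optimality follows from the explicit dual \(\omega=\mu_\star\), \(\lambda_g\) as above, exactly the disjoint dual quoted after Proposition~\ref{prop:dual-ledger}. This formula applies whenever the program is feasible, which holds pre-guard since the slack exceeds \(2K\). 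The same computation with \(U\) in place of \(\mu\) describes the optimistic LP solved by P-BDQ-UCB.

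\emph{Step 2 (cover contraction, unconditional).} A pull of \(A\) lowers \(\mathrm{mc}\) by exactly \(\mathbf 1\{d_{g(A)}\ge1\}\). Therefore
\[
\E_{A\sim p}[\mathrm{mc}(d^+(A))]=\mathrm{mc}(d)-\sum_{g:\,d_g\ge1}\frac{1}{r}\sum_{i\in g}\widehat y_i .
\]
Feasibility of \(\widehat y\) gives \(\sum_{i\in g}\widehat y_i\ge d_g\) for each such group. Hence the subtracted term is at least \(\sum_g d_g/r=\mathrm{mc}(d)/r\), which is~\eqref{eq:cert-cover}. This step does not use optimality of \(\widehat y\), the indices, or \(\calE_L\).

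\emph{Step 3 (value descent on \(\calE_L\)).} Using the closed form, for every arm \(A\),
\[
\Psi_\mu(r,d)-\Psi_\mu(r-1,d^+(A))-\mu_A=\mu_\star-\mu_A-\lambda_{g(A)}\mathbf 1\{d_{g(A)}\ge1\}.
\]
This is the per-pull ledger entry of Proposition~\ref{prop:dual-ledger}. It therefore suffices to bound this quantity by \(\rho_A\) for every \(A\) in the support of \(\widehat y\); taking the expectation under \(p\) then gives~\eqref{eq:cert-value}.

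The main obstacle is characterizing the support of an optimal \(\widehat y\), in particular under ties. I would use an exchange argument. Let \(i\) be a support arm that is neither a \(U\)-maximizer within its group nor a global \(U\)-maximizer. Then shifting \(\epsilon\) mass from \(i\) to the group's \(U\)-maximizer preserves every group constraint and the budget, and does not decrease the objective. For the lexicographic extreme-point choice, I would either argue this directly or simply note that any support arm with strictly smaller \(U\) would strictly improve the objective, contradicting optimality. This leaves two cases for a support arm \(A\).

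If \(d_{g(A)}\ge1\) and \(A\) maximizes \(U\) within its group, then on \(\calE_L\)
\[
\mu^\star_{g(A)}\le U_{i^\star_{g(A)}}\le U_A\le\mu_A+\rho_A ,
\]
so the entry is \(\mu^\star_{g(A)}-\mu_A\le\rho_A\). Clipping at \(1\) is harmless here because \(\mu\le1\).

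Otherwise \(A\) carries surplus mass, which includes every support arm in a satisfied group. Moving that surplus to a global \(U\)-maximizer is feasible, so \(A\) must itself be a global \(U\)-maximizer. Then \(\mu_\star\le U_A\le\mu_A+\rho_A\), and the entry is at most \(\mu_\star-\mu_A\le\rho_A\), since \(\lambda_g\ge0\).

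Finally, for the consequence: if \(\sum_gf_{b,g}\le(1-\sigma)H_b\), then \(H_b-\mathrm{mc}(f_b)\ge\sigma H_b\). Combined with \(\sigma H_b>2K\), this gives the initial cover-slack condition of Proposition~\ref{prop:pbdq-feas}. The descent inequalities hold by Steps 2 and 3, so every hypothesis of Theorem~\ref{thm:pbdq} is met.
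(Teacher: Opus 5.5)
Your proposal is correct and follows essentially the same route as the paper: the closed forms \(\mathrm{mc}(d)=\sum_g d_g\) and \(\Psi_\mu(r,d)=r\mu_\star-\sum_g\lambda_g d_g\), a feasibility-only proof of~\eqref{eq:cert-cover}, and a reduction of~\eqref{eq:cert-value} to bounding the per-pull reduced cost \(\mu_\star-\mu_A-\lambda_{g(A)}\mathbf 1\{d_{g(A)}\ge1\}\) by \(\rho_A\) on the support of \(\widehat y\). The only difference is how that support is characterized: you use a direct mass-exchange argument, whereas the paper writes down the optimistic dual \(\omega^U=\max_jU_j\), \(\lambda^U_g=\omega^U-\max_{j\in g}U_j\) and invokes complementary slackness; your version makes the satisfied-group case (support arms there must be global \(U\)-maximizers) a bit more explicit.
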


Under arbitrary overlap, by contrast, the two inequalities are real assumptions rather than consequences of LP optimism, as the following examples show.

\begin{remark}[Why the descent condition is nontrivial]
\label{rem:pbdq-counterexamples}
Two small examples explain why the condition is stated explicitly rather than proved. First, take groups \(g_1=\{a,c\}\) and \(g_2=\{b,c\}\), deficits \(d=(1,1)\), and remaining budget \(r=2\). Then \(\mathrm{mc}(d)=1\), because arm \(c\) covers both groups; yet a plan that loads \(a\) and \(b\) (exactly what optimism produces when \(a\) and \(b\) are fresh and \(c\) is not) is feasible, and sampling from it leaves \(\E[\mathrm{mc}(d^+(A))]=1\) in either branch, whereas \eqref{eq:cert-cover} demands \(1/2\). The cover-contraction inequality is thus a genuine restriction on which optimistic plans arise, not a consequence of Jensen's inequality: \(\mathrm{mc}\) is convex in \(d\), so Jensen runs the wrong way. Second, the radius-free descent \(\E[\Psi_\mu(r-1,d^+(A))]\ge\frac{r-1}{r}\Psi_\mu(r,d)\), which a homogeneity-plus-concavity argument would try to prove and which would deliver \eqref{eq:cert-value} through optimism, fails at basis-change kinks of the residual program: with the same two groups, a fourth arm \(x\) outside both, means \((0.6,0.6,0.3,1.0)\) for \((a,b,c,x)\), deficits \((5,5)\), and \(r=10\), the covering plan that loads \(a\) and \(b\) gives \(\E[\Psi_\mu(9,d^+(A))]=5.80\) while the homogeneity target is \(0.9\cdot6.5=5.85\). The failure occurs at a change of the optimal fractional basis, not from statistical error, so \eqref{eq:cert-value} cannot be derived from convexity or homogeneity of \(\Psi_\mu\) alone and is assumed with its optimism allowance \(\E_{A\sim p}[\rho_A]\).
\end{remark}

\begin{theorem}[Plan sampling under the descent condition]
\label{thm:pbdq}
Fix \(\eta\in(0,1)\) and \(L=\log(2KT/\eta)\). Suppose every block satisfies the sampled-plan descent condition with margin \(\sigma\) at level \(\eta\) (Definition~\ref{def:sampled-cover-margin}) together with the block-start slack condition
\[
    \sigma H_b>2K
    \qquad\text{for every block } b.
\]
Then, in addition to the pathwise feasibility guaranteed by Proposition~\ref{prop:pbdq-feas}, the expected regret of P-BDQ-UCB against the per-block fractional group-fair optimum satisfies
\[
    \OPT^{\mathrm{frac}}-\E[V]
    \le
    K+4\sqrt{2KTL}+2\eta T
    +CB\Bigl(\frac{L}{\sigma^2}+\frac{K}{\sigma}\Bigr),
\]
for a universal constant \(C\). With \(\eta=1/T\), the expected regret is
\[
    \widetilde O(\sqrt{KT})+
    \widetilde O\!\left(B\left(\frac{1}{\sigma^2}+\frac{K}{\sigma}\right)\right).
\]
\end{theorem}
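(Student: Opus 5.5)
Feasibility is already given by Proposition~\ref{prop:pbdq-feas}, since \(\sigma H_b>2K\) together with \(H_b-\mathrm{mc}(f_b)\ge\sigma H_b\) implies the initial cover-slack condition. So only the regret bound needs proof. I would work on the concentration event \(\calE_L\) and charge its complement by \(\eta T\) per failure mode, which gives the \(2\eta T\) term. Inside each block, the plan is a potential argument using the value function \(\Psi_\mu(r,d)\). At block start, \(\Psi_\mu(H_b,f_b)=V^{\mathrm{frac}}_b\). At the guard time \(\tau_c\) (or the block end), the potential should equal the remaining fractional value.

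\textbf{Pre-guard phase.} Condition~\eqref{eq:cert-value} says that the one-step expected loss satisfies
\[
\Psi_\mu(r,d)-\E\bigl[\mu_A+\Psi_\mu(r-1,d^+(A))\bigr]\le\E_{A\sim p}[\rho_A].
\]
Telescoping over pre-guard rounds with the optional stopping theorem gives
\[
V^{\mathrm{frac}}_b-\E\Bigl[\sum_{\tau<\tau_c}\mu_{A_\tau}+\Psi_\mu(r(\tau_c),d(\tau_c))\Bigr]\le\E\Bigl[\sum_{\tau<\tau_c}\rho_{A_\tau}(\tau)\Bigr].
\]
Summing over all blocks, the radii \(\rho_{A_\tau}\le2\sqrt{2L/N_{A_\tau}}\) add up, by the usual pigeonhole over arms, to at most \(K+4\sqrt{2KTL}\). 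This is the leading term.

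\textbf{Guard phase.} I must bound \(\Psi_\mu(r_c,d_c)\) minus the value collected after the guard. The guard executes \(\lceil w\rceil\) and then index-argmax pulls, and it has only \(r_c\le 2K+\mathrm{mc}(d_c)\) rounds, since \(S(\tau_c)\le2K\). With rewards in \([0,1]\), the loss is at most the cover length plus \(2K\) from integrality and leftover pulls. So the key quantity is \(\E[\mathrm{mc}(d(\tau_c))]\).

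\textbf{Controlling the cover at the guard.} Here condition~\eqref{eq:cert-cover} enters. Set \(M_\tau=\mathrm{mc}(d(\tau))/r(\tau)\). The contraction says \(\E[\mathrm{mc}(d^+)]\le\frac{r-1}{r}\mathrm{mc}(d)\), so \(M_\tau\) is a supermartingale. It starts at \(M_0\le 1-\sigma\) because of the margin. The guard fires when \(\mathrm{mc}\ge r-2K\), which means \(M\ge 1-2K/r\). While \(r\) is large compared with \(2K/\sigma\), this threshold exceeds \(1-\sigma/2\), so an upward deviation of \(\sigma/2\) is required. The one-step increments of \(M\) are \(O(1/r)\). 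Azuma/Freedman then says that firing while \(r\gtrsim L/\sigma^2\) happens with probability at most \(1/T\). On that good event, firing occurs with \(r_c=O(L/\sigma^2+K/\sigma)\), so the guard-phase loss is \(O(L/\sigma^2+K/\sigma)\) per block. Summing over \(B\) blocks gives the \(CB(\cdot)\) term.

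\textbf{Main obstacle.} I expect the concentration step for \(M_\tau\) to be the hardest. The increments scale like \(1/r\), and \(r\) shrinks over time, so the martingale bound needs a careful time change. Summing the variance as \(\sum 1/r^2\) from a level \(r_0\) downward gives roughly \(1/r_0\). The deviation probability is therefore about \(\exp(-\sigma^2 r_0)\), and requiring this to be small forces \(r_0\approx L/\sigma^2\). A union bound over blocks and starting levels fits within the \(\eta\) budget. A secondary difficulty is that \(\Psi\) can be discontinuous near infeasibility. I would avoid this by noting that pre-guard states always have slack greater than \(2K\), so \(\Psi\) stays finite, and that \(\Psi_\mu(r_c,d_c)\le r_c\).
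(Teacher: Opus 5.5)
Your proposal is correct and follows essentially the same route as the paper's proof. The shared steps are: feasibility from Proposition~\ref{prop:pbdq-feas}; telescoping~\eqref{eq:cert-value} up to the guard time and summing the confidence radii to get $K+4\sqrt{2KTL}$; bounding the committed tail by $\Psi_\mu(r(\tau_c),d(\tau_c))\le r(\tau_c)$; an Azuma bound with $O(1/r)$ increments showing the guard fires only once $r(\tau_c)=O(L/\sigma^2+K/\sigma)$ (your supermartingale $\mathrm{mc}(d)/r$ is one minus the paper's submartingale $S/r$); and charging the two failure events $2\eta T$.

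One point to make explicit: the one-step inequalities are assumed only on $\calE_L$, which is not adapted. So the telescoping and Azuma steps should be applied to the process stopped at the first pre-guard state that violates either inequality, as the paper does.
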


\begin{proof}[Proof sketch]
The proof is in the appendix. Feasibility is Proposition~\ref{prop:pbdq-feas}, whose hypothesis follows from the margin clause and \(\sigma H_b>2K\); in particular the guard fires strictly after the block start and its rounded cover fits deterministically. For regret, on the concentration event \(\calE_L\), \eqref{eq:cert-cover} makes \(S(\tau)/r(\tau)\) a bounded-increment submartingale starting at least \(\sigma\); Azuma's inequality keeps the guard from firing until \(r=O(L/\sigma^2+K/\sigma)\). On the same event, \eqref{eq:cert-value} gives the per-pull inequality
\[
    \E[\Psi_\mu(\tau)-\Psi_\mu(\tau{+}1)-\mu_{A_\tau}\mid\mathcal F_\tau]
    \le
    \E[\rho_{A_\tau}\mid\mathcal F_\tau],
\]
which telescopes across sampled pulls. Summing the confidence radii over actually pulled arms gives \(K+4\sqrt{2KTL}\), and the committed tail contributes only the guard length. The concentration failure and slack-concentration failure contribute \(2\eta T\).
\end{proof}

Theorem~\ref{thm:pbdq} should be read as a conditional overlap theorem. It improves over the one-shot rate of Theorem~\ref{thm:ogbdq} only on trajectories where the sampled-plan descent condition holds (automatic in the disjoint case by Proposition~\ref{prop:pbdq-disjoint}) and it identifies the exact per-step inequalities needed to make LP-aware plan sampling work. D-BDQ-UCB remains the LP-free deterministic rule for disjoint systems, where no overlap pricing is needed. The overlapping case without the descent condition remains open: the triangle-with-surplus example shows why a terminal guard is needed, while Remark~\ref{rem:pbdq-counterexamples} shows why the natural optimistic-plan proof cannot be closed by a generic convexity or basis-stability argument.

\section{Extensions and Discussion}
\label{app:extensions}

\subsection{Terminal Repair for Slack-Rounded Policies}
\label{app:repair}

The main algorithms in the paper are exactly fair. However, some discrepancy rounding methods are designed to maintain small discrepancy at every prefix and may end with a small terminal deficit. The following lemma shows how to repair such a sequence.

\begin{lemma}[Terminal repair]
\label{lem:repair}
Let \(A_1,\dots,A_T\) be any preliminary pull sequence with counts \(N_i(T)\), and suppose \(N_i(T)\ge m-B\) for every \(i\). Then there exists another pull sequence \(\widetilde{A}_1,\dots,\widetilde{A}_T\) with counts \(\widetilde{N}_i(T)\ge m\) for every \(i\) such that the two sequences differ in at most \(KB\) positions. Consequently, because all rewards have means in \([0,1]\), the expected reward changes by at most \(KB\).
\end{lemma}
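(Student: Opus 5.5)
The plan is a direct greedy exchange argument on the pull counts. Define the deficit of arm \(i\) as \(d_i=\max\{m-N_i(T),0\}\). The hypothesis \(N_i(T)\ge m-B\) gives \(d_i\le B\). Hence the total deficit satisfies \(D:=\sum_i d_i\le KB\). The repaired sequence will be obtained by overwriting exactly \(D\) positions of the original sequence. Each overwritten position is reassigned to a deficient arm, so that arm \(i\) gains exactly \(d_i\) pulls. The whole task is to choose which positions to overwrite without creating new deficits.

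The key step is to locate enough donor positions. Call arm \(j\) a surplus arm if \(N_j(T)>m\), and let its surplus be \(s_j=N_j(T)-m\). We need \(\sum_j s_j\ge D\). This follows from feasibility of the floor, \(Km\le T\), which is implicit in the setting because \(m\le\lfloor T/K\rfloor\), or \(m=\lfloor\delta T\rfloor\) with \(\delta\le1/K\). Since \(\sum_i N_i(T)=T\ge Km\), we get
\[
\sum_{j:N_j>m}s_j-\sum_{i:N_i<m}d_i=\sum_i\bigl(N_i(T)-m\bigr)=T-Km\ge0 .
\]
This bound on the total surplus is the only point where a hypothesis beyond \(N_i\ge m-B\) enters. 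It is the step I would state explicitly, since without \(Km\le T\) no repair exists at all.

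The construction then proceeds greedily. Pick any \(D\) positions \(t\) whose original arm \(A_t\) is a surplus arm, taking at most \(s_j\) positions from each surplus arm \(j\). Relabel these positions so that each deficient arm \(i\) receives exactly \(d_i\) of them, and leave all other positions unchanged. Three things must then be checked.
\begin{enumerate}[label=(\roman*),leftmargin=*]
\item Each deficient arm now has exactly \(m\) pulls.
\item Each surplus arm keeps at least \(N_j(T)-s_j=m\) pulls.
\item Arms with \(N_i(T)=m\) are untouched.
\end{enumerate}
Together these give \(\widetilde N_i(T)\ge m\) for every \(i\). The sequences differ in exactly \(D\le KB\) positions, and a sharper count of \(\le(K-1)B\) also holds, since some arm must have \(N_i\ge m\).

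For the reward claim, observe that at each changed position the expected reward changes by \(|\mu_{\widetilde A_t}-\mu_{A_t}|\le1\), because all means lie in \([0,1]\). Summing over at most \(KB\) positions gives a change of at most \(KB\) in total expected reward. For an adaptive policy, the repair can be applied pathwise to the realized sequence, so the same bound holds in expectation. No step is a genuine obstacle. The only subtlety is the implicit feasibility assumption \(Km\le T\) identified in the donor-counting step.
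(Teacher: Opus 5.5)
Your proof is correct and follows the paper's argument essentially step for step: deficits are bounded by \(B\), total surplus covers total deficit because \(T\ge Km\), and surplus positions are reassigned to deficient arms at a cost of at most one unit of expected reward per changed position. Your explicit statement of the \(Km\le T\) assumption, which the paper uses only in passing, and your sharper \((K-1)B\) count are both valid refinements.
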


\begin{proof}
Define the deficit of arm \(i\) by \(d_i=(m-N_i(T))_+\). Since \(N_i(T)\ge m-B\), we have \(d_i\le B\) for every \(i\), and therefore \(d=\sum_{i=1}^K d_i\le KB\). The total number of pulls is fixed at \(T\ge Km\), so the total deficit among arms below \(m\) is matched by at least \(d\) surplus pulls among arms strictly above their required levels. Choose \(d_i\) positions currently assigned to surplus arms (never reducing a surplus arm below \(m\)) and change those positions to arm \(i\), for each deficient arm \(i\). After all changes, every arm has count at least \(m\). The number of changed positions is exactly \(d\le KB\), and changing one pull can change the expected reward by at most \(1\), so the total expected reward changes by at most \(KB\).
\end{proof}

\subsection{Alternative View: Fractional Frontier and Integral Rounding}
\label{app:fractional}

The continuous Pareto frontier in Theorem~\ref{thm:pareto} is expressed in terms of exposure fractions. For finite \(T\), the integral floor \(m\) induces the fraction \(\beta_T=m/T\). The best fair integral allocation is
\[
    n_i^\star(m)
    =
    \begin{cases}
        T-(K-1)m, & i=i^\star,\\
        m, & i\neq i^\star,
    \end{cases}
\]
and dividing by \(T\) gives the exposure vector
\[
    \frac{n_i^\star(m)}{T}
    =
    \begin{cases}
        1-(K-1)\beta_T, & i=i^\star,\\
        \beta_T, & i\neq i^\star.
    \end{cases}
\]
Thus the finite-horizon fair comparator lies exactly on the continuous Pareto frontier at \(\beta=\beta_T\), and the only difference between a desired fraction \(\delta\) and the implemented fraction \(\beta_T\) is the unavoidable integer-rounding error \(0\le \delta-\beta_T < 1/T\).

\subsection{Avoiding Penalty Tuning}
\label{app:penalty}

A Lagrangian approach would choose actions using an objective of the form
\[
    \text{reward estimate}
    -
    \lambda\cdot \text{fairness violation}.
\]
The difficulty is that the correct \(\lambda\) depends on the unknown reward gaps, the horizon, and the desired floor. If \(\lambda\) is too small, the floor may be violated; if it is too large, the policy may over-explore low-reward arms. DQ-UCB and BDQ-UCB avoid this tuning problem by combinatorially enforcing the quota before reward optimization begins; the only statistical tuning parameter is the usual UCB confidence radius.

\section{Details of the Deployment-Facing Studies}
\label{app:experiments-to-add}

The six studies summarized in the main text (Table~\ref{tab:deployment}) extend the clean stochastic and genre-exposure validation toward deployment-level coverage. We record their exact configurations here; all are reproduced over \(16\) seeds on CPU, reusing the same BDQ-UCB, blockwise-Lagrangian, global-quota, and block-floor routines as the main experiments.

\emph{Logged off-policy replay with propensities (N1).} A soft logging policy (arm \(0\) favored, \(\varepsilon=0.5\)) generates \(40{,}000\) pulls with known propensities; we evaluate the blockwise-fair target allocation off-policy via IPS, SNIPS, and a doubly-robust estimator (per-arm empirical model). All three match the on-policy value \(0.595\) (MAE \(0.0021\), \(0.0021\), \(0.0019\)).

\emph{Overlapping group floors (N2).} \(K=8\) arms, four overlapping groups (sliding arm windows), a per-block floor of \(0.20H\) on each group; a single pull credits every group its arm belongs to. The max-group-deficit rounding rule is feasible (\(0\) group violation); a group-blind per-arm/UCB policy violates by \(34.3\) pulls per block. This is the regime where genuine discrepancy rounding matters beyond cyclic quota scheduling. Table~\ref{tab:groupdisc} isolates the same separation on a controlled adversarial family: the half-integral plan of Proposition~\ref{prop:naive-group} with fractional part \(0.48\), over eight instances differing only in coordinate permutation and seed, where Beck--Fiala stays below \(t=2\) (\(1.04\to1.48\)) while naive rounding grows linearly in \(s\) (\(1.92\to11.52\), slope \(0.48\), correlation \(1.00\)); Figure~\ref{fig:groupdisc} plots the same separation. Because the plan is adversarial the table is a worst-case demonstration; a random-fractional-part variant shows the same qualitative separation with smaller constants: Beck--Fiala stays flat below \(t\) (\(0.93\to1.55\)) while naive rounding grows with \(s\) (\(0.86\to2.45\), correlation \(0.99\)).

\begin{table}[t]
\centering
\small
\begin{tabular}{lrrrr}
\toprule
grid & \(K\) & \(s\) & Beck--Fiala viol. & naive viol. \\
\midrule
\(2{\times}4\)  & \(8\)   & \(4\)  & \(1.04\) & \(1.92\)  \\
\(3{\times}6\)  & \(18\)  & \(6\)  & \(1.12\) & \(2.88\)  \\
\(4{\times}10\) & \(40\)  & \(10\) & \(1.20\) & \(4.80\)  \\
\(6{\times}16\) & \(96\)  & \(16\) & \(1.32\) & \(7.68\)  \\
\(8{\times}24\) & \(192\) & \(24\) & \(1.48\) & \(11.52\) \\
\bottomrule
\end{tabular}
\caption{Overlapping group floors (row/column set system over an \(a\times b\) grid; arm degree \(t=2\); group size \(s=\max(a,b)\)). Maximum group-exposure violation of Beck--Fiala null-space rounding versus naive nearest-integer rounding on the adversarial half-integral plan (fractional part \(0.48\)) of Proposition~\ref{prop:naive-group}, over eight instances differing only in permutation and seed. Beck--Fiala stays below the \(t=2\) bound of Theorem~\ref{thm:groupbf} at every scale, while naive rounding grows linearly in \(s\) (slope \(0.48\), corr.\ \(1.00\)): the discrepancy algorithm is the one that makes group floors achievable within the block budget.}
\label{tab:groupdisc}
\end{table}

\begin{figure}[t]
\centering
\placeholdergraphic[width=0.92\columnwidth]{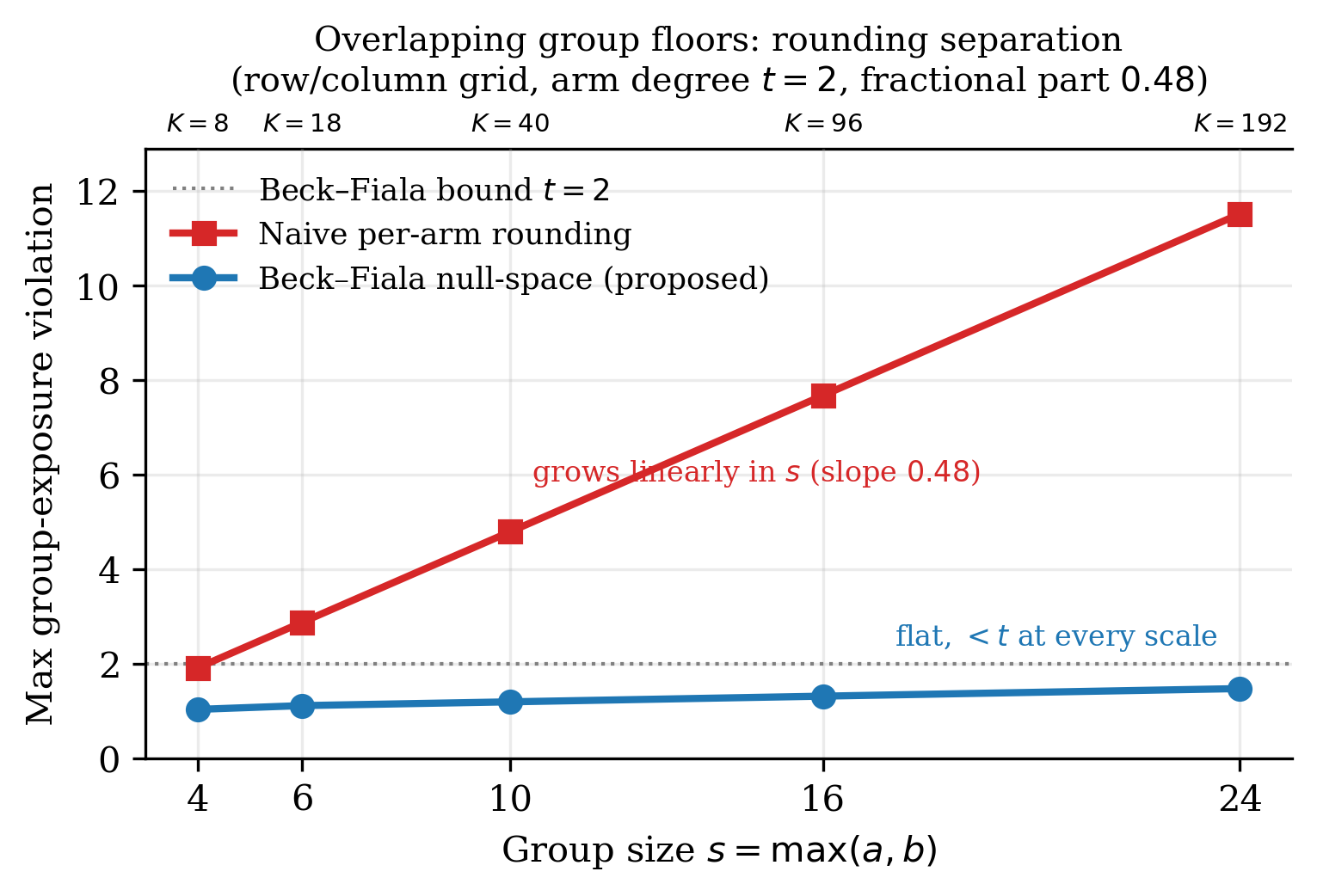}
\caption{Visualization of Table~\ref{tab:groupdisc}: maximum group-exposure violation versus group size \(s\) on the adversarial half-integral plan (row/column set system over an \(a\times b\) grid, arm degree \(t=2\), fractional part \(0.48\)). Beck--Fiala null-space rounding stays flat below the \(t=2\) bound of Theorem~\ref{thm:groupbf} at every scale (top axis reports the arm count \(K\) per grid), while naive per-arm rounding grows linearly in \(s\) (slope \(0.48\)). When \(t\ll s\) the discrepancy algorithm is unboundedly better than per-arm rounding.}
\label{fig:groupdisc}
\end{figure}

\emph{Adaptive online penalty (N3).} A deadline-aware blockwise Lagrangian that doubles \(\lambda\) after any infeasible block, starting from \(\lambda_0=0.5\). It reaches competitive regret (\(52.2\)) but incurs \(1.25\) violations on average during the ramp-up; BDQ-UCB has \(0\). Online tuning removes the offline grid but not the transient infeasibility.

\emph{Contextual provider exposure (N4).} LinUCB over \(d=4\) provider features with blockwise per-provider floors enforced by block-quota rounding; the residual rounds run LinUCB. Block feasibility is exact (\(0\) violation), so feasibility-by-construction composes with contextual learning.

\emph{Delayed and rolling blocks (N5).} First, reward feedback delayed by \(d\in\{0,50,200\}\) rounds: block floors stay exactly met (feasibility is scheduling, independent of feedback timing) while regret grows \(47.4\to69.1\to122.4\) (Table~\ref{tab:n5delay}). Second, rolling-window floors (\(0.08H\) per trailing window of length \(H\)): an eager deadline-aware rule keeps the worst trailing-window violation at most \(1\) pull.

\emph{Mandatory-sample ablation (N6).} At fixed residual budget, mandatory mass placed on hard-to-distinguish near-best arms (informative) versus a dummy worst arm (uninformative) gives residual regret \(188.1\pm7.7\) vs \(189.7\pm3.2\). The direction is consistent with informative mandatory pulls helping, confirming the worst-case \(R\)-only bound is pessimistic when the mandatory exposure happens to be informative; the effect is within seed noise on this instance and is reported as such.

Three quantities summarized in prose above are tabulated here in full: the residual-budget sweep behind the regret comparison of the synthetic blockwise experiment (Table~\ref{tab:regret-regime}), the real-rating MovieLens replay of the genre benchmark (Table~\ref{tab:realreplay}), and the delayed-feedback sweep of study N5 (Table~\ref{tab:n5delay}).

\begin{table}[t]
\centering\small
\begin{tabular}{rrrr}
\toprule
mand.\ frac. & \(R\) & BDQ-UCB & best feasible Lagr.\ (\(\lambda\)) \\
\midrule
\(0.03\) & \(4680\) & \(142.1\) & \(134.6\ (\lambda{=}8)\) \\
\(0.06\) & \(3384\) & \(70.3\)  & \(64.5\ (\lambda{=}4)\)  \\
\(0.10\) & \(1656\) & \(14.4\)  & \(12.8\ (\lambda{=}1)\)  \\
\(0.13\) & \(360\)  & \(0.24\)  & \(0.12\ (\lambda{=}4)\)  \\
\bottomrule
\end{tabular}
\caption{Residual-budget sweep on the synthetic blockwise instance (\(K{=}6\), \(B{=}12\), \(16\) seeds): the mandatory base fraction and the induced total residual budget \(R\) versus blockwise regret of BDQ-UCB and of the best \emph{feasible} Lagrangian over a six-point penalty grid (winning \(\lambda\) in parentheses). The tuned penalty is modestly ahead at every setting and the gap ($7.5\to5.8\to1.6\to0.12$) shrinks as the floor tightens and \(R\) falls; BDQ-UCB never strictly wins on regret, consistent with the parameter-free-feasibility (not lower-regret) scoping.}
\label{tab:regret-regime}
\end{table}

\begin{table}[t]
\centering\small
\begin{tabular}{lrr}
\toprule
method & blockwise regret \(\pm\) s.e.m. & block violation \\
\midrule
BDQ-UCB & \(102.4\pm1.3\) & \(0.0\) \\
Lagrangian, \(\lambda=0.5\) & \(100.7\pm1.3\) & \(34.6\) \\
Lagrangian, \(\lambda=2\) & \(101.6\pm1.3\) & \(0.0\) \\
Lagrangian, \(\lambda=8\) & \(101.8\pm1.1\) & \(0.0\) \\
\bottomrule
\end{tabular}
\caption{Real-rating MovieLens-100k replay (\(K{=}18\), \(B{=}12\), \(H{=}600\), \(16\) seeds): rewards are drawn non-parametrically from each genre's actual normalized \(1\)--\(5\) ratings rather than calibrated Bernoulli draws. BDQ-UCB is exactly feasible; the small penalty \(\lambda{=}0.5\) violates the floors (\(34.6\) missing pulls) while the feasible penalties tie it on regret. The calibrated-Bernoulli pattern of Table~\ref{tab:blockwise-movielens} persists under the true reward distribution.}
\label{tab:realreplay}
\end{table}

\begin{table}[t]
\centering\small
\begin{tabular}{lrr}
\toprule
feedback delay & blockwise regret \(\pm\) s.e.m. & block violation \\
\midrule
\(d=0\)   & \(47.4\pm3.4\)  & \(0.0\) \\
\(d=50\)  & \(69.1\pm3.6\)  & \(0.0\) \\
\(d=200\) & \(122.4\pm6.8\) & \(0.0\) \\
\bottomrule
\end{tabular}
\caption{Delayed-feedback sweep of study N5 (\(16\) seeds): block feasibility is exact at every delay (feasibility is a scheduling property, independent of feedback timing) while only regret degrades. The companion rolling-window floor (trailing window \(H\), floor \(0.08H{=}48\)) is kept to a worst-window violation of at most one pull by the eager deadline-aware rule.}
\label{tab:n5delay}
\end{table}

\begin{table}[t]
\centering
\small
\begin{tabular}{p{0.20\linewidth}p{0.62\linewidth}}
\toprule
Study & Outcome \\
\midrule
N1 Off-policy replay & Fair value recovered: MAE IPS/SNIPS \(0.0021\), DR \(0.0019\) (best) vs \(V_{\text{true}}{=}0.595\). \\
N2 Overlapping groups & Group-aware rounding \(0\) violation; group-blind UCB violates \(34.3\)/block. \\
N3 Adaptive penalty & Removes offline grid but pays \(1.25\) early violations; BDQ-UCB \(0\). \\
N4 Contextual (LinUCB) & Block-quota rounding + LinUCB: \(0\) block violation (exact feasibility). \\
N5 Delayed \& rolling & Feasibility unaffected by delay (\(0\) viol., \(d\le200\)); rolling-window viol.\ \(\le1\). \\
N6 Mandatory ablation & Informative mandatory pulls directionally lower residual regret (\(188.1\) vs \(189.7\), within seed noise): \(R\)-only bound pessimistic. \\
\bottomrule
\end{tabular}
\caption{Six deployment-facing studies (\(16\) seeds, CPU), reusing the BDQ-UCB, blockwise-Lagrangian, and block-floor primitives. N3 is a clean negative for the adaptive penalty baseline (transient infeasibility); N6's effect is directional only and within seed noise.}
\label{tab:deployment}
\end{table}

Two supporting figures accompany the extended experiments. Figure~\ref{fig:escale-lambda} plots the E-Scale feasibility--regret tradeoff against the penalty \(\lambda\), and Figure~\ref{fig:dbdq-scaling} plots the disjoint-family scaling test underlying Theorem~\ref{thm:disjoint}.

\begin{table}[t]
\centering\small
\resizebox{\columnwidth}{!}{%
\begin{tabular}{lrrc}
\toprule
method & regret vs.\ frac.\ opt. & group viol. & feasible \\
\midrule
OG-BDQ-UCB (learns plan)        & \(1083.4\pm7.6\) & \(0.0\) & yes \\
Group-BDQ-UCB (fixed plan)      & \(1174.3\pm0.8\) & \(0.0\) & yes \\
Group-Lagrangian (\(\lambda=4\))  & \(739.1\pm4.9\)  & \(0.0\) & yes (tuned) \\
Group-Lagrangian (\(\lambda=16\)) & \(741.5\pm7.7\)  & \(0.0\) & yes (tuned) \\
\midrule
\multicolumn{4}{l}{\emph{Disjoint control (genre-only floors, \(t=1\)):}}\\
D-BDQ-UCB (Thm.~\ref{thm:disjoint}) & \(1042.8\pm2.8\) & \(0.0\) & yes \\
\bottomrule
\end{tabular}%
}
\caption{E-Group on \(54\) MovieLens genre\(\times\)popularity-tier cells; \(18\) genre-union \(+\) \(3\) tier-union groups, arm degree \(t=2\) (top block); regret against the per-block fractional group-fair optimum. OG-BDQ-UCB learns a better plan than the fixed-plan comparator; the tuned group-Lagrangian is lower-regret when feasible. Beck--Fiala \emph{necessity} is isolated by the adversarial group construction (Table~\ref{tab:groupdisc}), not by this mild-overlap real instance. The disjoint control uses genre-only floors (\(t=1\)); D-BDQ-UCB is exactly feasible with no rounding, matching Theorem~\ref{thm:disjoint}.}
\label{tab:egroup}
\end{table}

\begin{figure*}[t]
\centering
\placeholdergraphic[width=\textwidth]{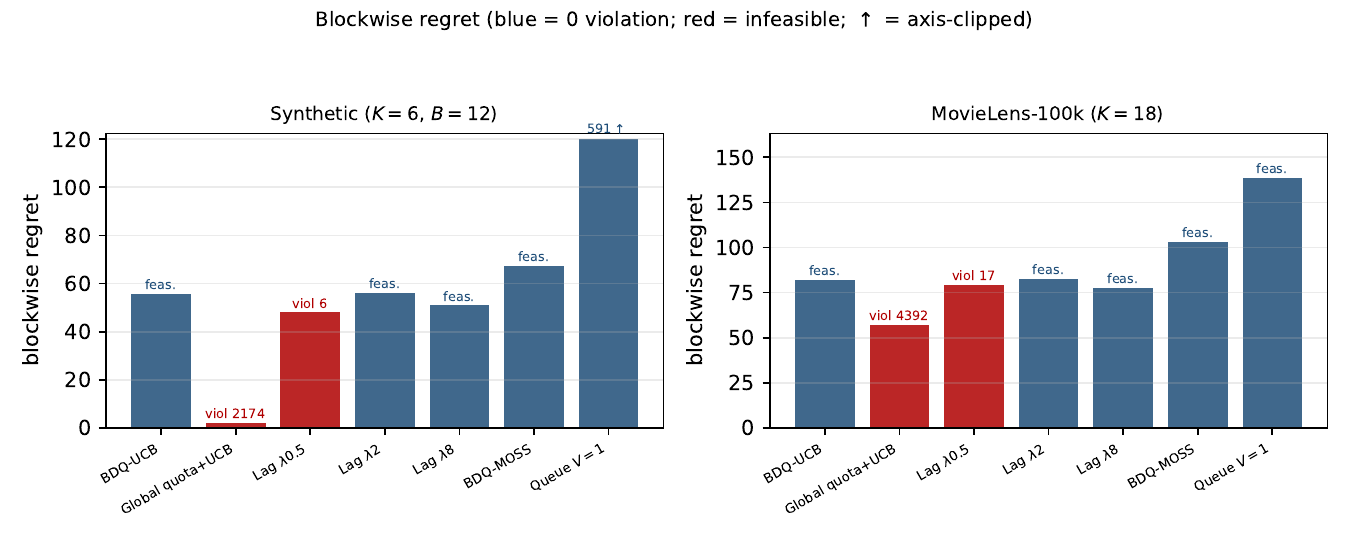}
\caption{Blockwise regret for each method (Table~\ref{tab:blockwise-all}; 16 seeds). Blue bars mark methods with zero block violations; red bars mark infeasible methods, annotated with their total number of missing mandatory pulls. BDQ-UCB achieves exact feasibility in both settings; Global quota+UCB has low apparent regret but violates 2174 (synthetic) and 4392 (MovieLens) mandatory pulls; the Lagrangian is feasible only at \(\lambda\ge2\) and requires per-instance penalty tuning.}
\label{fig:blockwise}
\end{figure*}

\begin{figure}[t]
\centering
\placeholdergraphic[width=0.82\columnwidth]{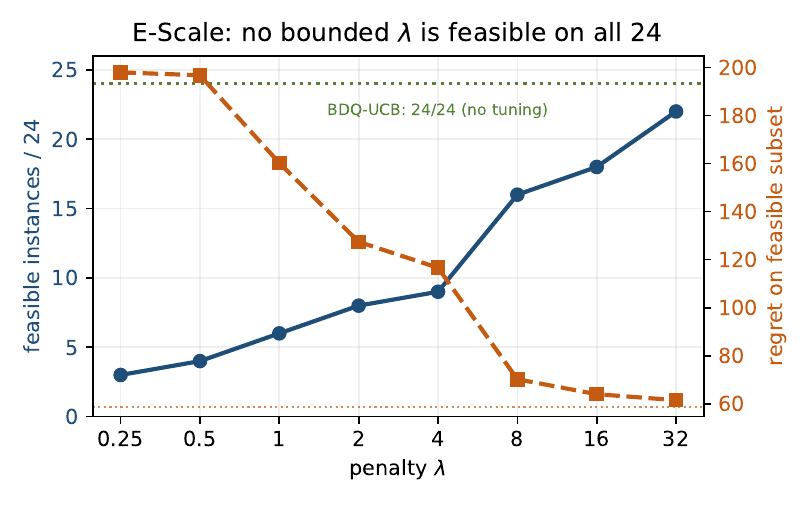}
\caption{E-Scale feasibility--regret tradeoff versus the penalty \(\lambda\) (Table~\ref{tab:escale}). Raising \(\lambda\) buys feasibility (blue, left axis: \(3\to22\) of \(24\)) only by trading away regret on the feasible subset (orange, right axis: \(198\to61\)); no bounded \(\lambda\) reaches all \(24\), which BDQ-UCB attains without tuning (green line).}
\label{fig:escale-lambda}
\end{figure}

\begin{figure}[t]
\centering
\placeholdergraphic[width=0.80\columnwidth]{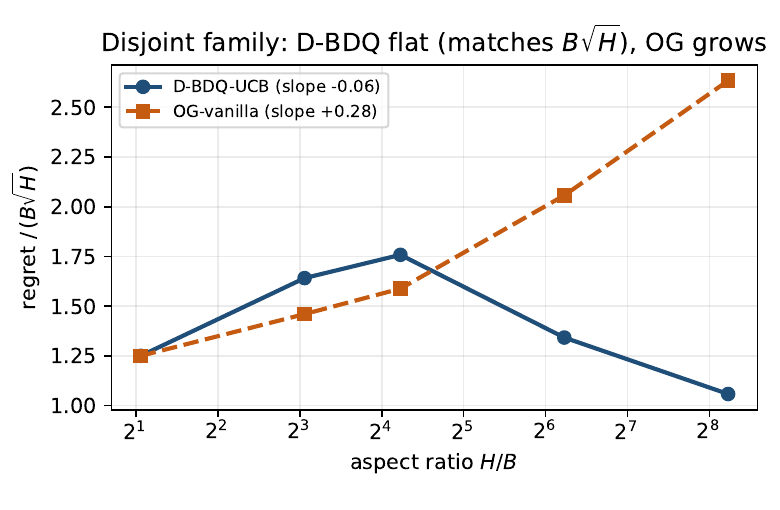}
\caption{Disjoint lower-bound family of Proposition~\ref{prop:adaptivity-lower}: regret normalized by the \(B\sqrt H\) lower-bound rate, versus the aspect ratio \(H/B\) at fixed \(T\). D-BDQ-UCB (Theorem~\ref{thm:disjoint}) stays flat (slope \(-0.06\), coefficient of variation \(0.18\)), matching \(B\sqrt H\) in every regime, while vanilla OG-BDQ-UCB grows (slope \(+0.28\)); the curves cross in the block-sparse regime \(H\gg B\) where OG's one-shot plan is loose.}
\label{fig:dbdq-scaling}
\end{figure}

\subsection{Semi-Real MovieLens Blockwise Table and the E-Scale Battery}
\label{app:escale-detail}

Table~\ref{tab:blockwise-movielens} is the semi-real MovieLens-100k counterpart to the synthetic blockwise Table~\ref{tab:blockwise-all}; the pattern matches in both. Genre reward means \(\widehat\mu_i\) are estimated by normalizing ratings to \([0,1]\) and averaging over the movies carrying each genre, dropping genres with fewer than \(50\) rated movies; the estimated means span \([0.554, 0.730]\), with Film-Noir highest.

\begin{table}[t]
\centering\small
\resizebox{\columnwidth}{!}{%
\begin{tabular}{lrrr}
\toprule
method & blockwise regret \(\pm\) s.e.m. & block viol. & min margin \\
\midrule
BDQ-UCB & \(82.2\pm1.7\) & \(0.0\) & \(0.0\) \\
Global quota+UCB & \(57.3\pm2.6\) & \(4391.8\) & \(-120.0\) \\
Lagrangian, \(\lambda=0.5\) & \(79.3\pm2.3\) & \(16.7\) & \(-4.0\) \\
Lagrangian, \(\lambda=2\) & \(82.8\pm2.9\) & \(0.0\) & \(0.0\) \\
Lagrangian, \(\lambda=8\) & \(77.7\pm1.9\) & \(0.0\) & \(0.0\) \\
BDQ-MOSS & \(103.2\pm5.0\) & \(0.0\) & \(0.0\) \\
Queue pacing, \(V=1\) & \(138.3\pm0.4\) & \(0.0\) & -- \\
\bottomrule
\end{tabular}%
}
\caption{Semi-real MovieLens-100k genre-exposure blockwise experiment (\(K{=}18\), \(B{=}12\), \(H{=}600\), real genre means, \(16\) seeds); companion to the synthetic Table~\ref{tab:blockwise-all}. BDQ-UCB is exactly feasible, Global quota+UCB violates by \(4392\) pulls, the Lagrangian needs \(\lambda\ge2\), and BDQ-MOSS pays a premium for discarding mandatory observations.}
\label{tab:blockwise-movielens}
\end{table}

For the E-Scale battery we generate \(24\) blockwise instances varying jointly in arm count \(K\in\{18,64,256\}\) with Bernoulli means sampled from the MovieLens-calibrated range \([0.554,0.730]\), in gap scale (means compressed toward their median by factors \(\{1,\tfrac12,\tfrac14\}\)), in floor tightness (mandatory fraction \(\{0.3,0.6,0.9\}\) of each block), and in block-length heterogeneity (\(H_b\) drawn log-uniformly from \([200,1800]\), \(B=12\)). For each fixed \(\lambda\) in the grid \(\{0.25,0.5,1,2,4,8\}\) we run the deadline-aware Lagrangian on all \(24\) instances with \(16\) seeds each and report the number of instances with zero block violations, the worst per-instance violation, and mean regret on the feasible subset; an oracle row selects the best feasible \(\lambda\) per instance, with tuning cost counted as grid runs consumed; a BDQ-UCB row reports the same metrics with no tuning. Over the eight-point grid \(\lambda\in\{0.25,\dots,32\}\), feasibility climbs monotonically with \(\lambda\) (\(3,4,6,8,9,16,18,22\) of \(24\)) but no bounded penalty clears more than \(22/24\); the smaller penalties leave worst-case violations in the hundreds to over a thousand missing pulls.

\begin{table}[t]
\centering\small
\resizebox{\columnwidth}{!}{%
\begin{tabular}{lrrr}
\toprule
method & feas.\ / 24 & worst viol. & regret (feas.) \\
\midrule
Lagrangian, \(\lambda=0.25\) & \(3\)  & \(1595\) & \(197.9\) \\
Lagrangian, \(\lambda=0.5\)  & \(4\)  & \(1415\) & \(196.7\) \\
Lagrangian, \(\lambda=1\)    & \(6\)  & \(1103\) & \(160.2\) \\
Lagrangian, \(\lambda=2\)    & \(8\)  & \(851\)  & \(127.4\) \\
Lagrangian, \(\lambda=4\)    & \(9\)  & \(552\)  & \(116.6\) \\
Lagrangian, \(\lambda=8\)    & \(16\) & \(243\)  & \(70.2\)  \\
Lagrangian, \(\lambda=16\)   & \(18\) & \(139\)  & \(64.0\)  \\
Lagrangian, \(\lambda=32\)   & \(22\) & \(96\)   & \(61.5\)  \\
deficit-greedy (\(\lambda\to\infty\)) & \(24\) & \(0.0\) & \(59.0\) \\
oracle \(\lambda\) (\(192\) runs) & \(22\) & \(0.0\) & \(60.6\) \\
BDQ-UCB, oracle-feasible subset & \(22\) & \(0.0\) & \(61.3\) \\
BDQ-UCB (no tuning, all \(24\)) & \(24\) & \(0.0\) & \(58.5\) \\
\bottomrule
\end{tabular}%
}
\caption{E-Scale stress battery (\(24\) instances \(\times\) \(16\) seeds). Over the natural grid no bounded penalty is feasible on more than \(22/24\) (only the \(\lambda\to\infty\) hard schedule reaches all \(24\)), and two instances admit no feasible \(\lambda\le32\). The tuned per-instance oracle edges BDQ-UCB on regret on the common feasible subset (\(60.6\) vs \(61.3\)); BDQ-UCB's advantage is tuning-free exact feasibility on all \(24\), not lower regret. Regret entries average over each method's feasible subset.}
\label{tab:escale}
\end{table}

\begin{figure}[t]
\centering
\placeholdergraphic[width=\columnwidth]{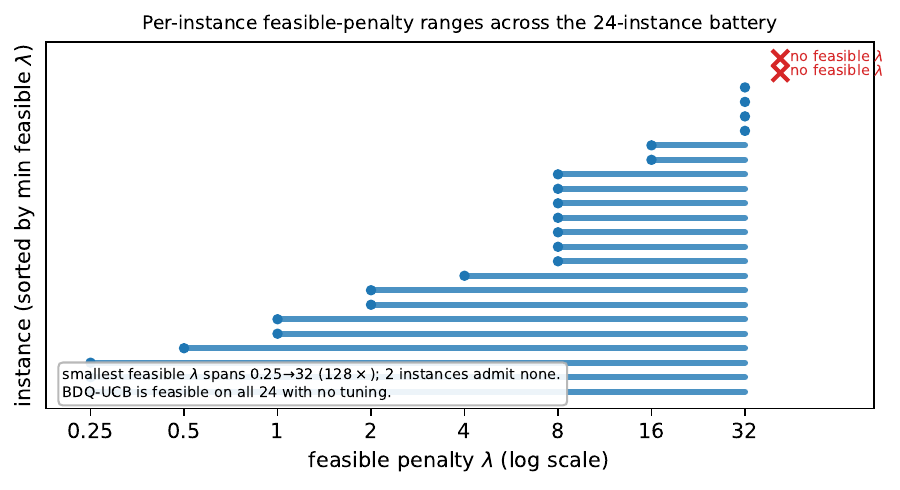}
\caption{Per-instance feasible-penalty ranges across the \(24\)-instance battery (log-\(\lambda\) axis, instances sorted by smallest feasible \(\lambda\)). The smallest feasible penalty spans a \(128\times\) range (\(0.25\) to \(32\)) and two tight, small-gap, large-\(K\) instances (red) admit no feasible \(\lambda\le32\); no single bounded penalty is simultaneously feasible across the battery, whereas BDQ-UCB is feasible on all \(24\) without tuning.}
\label{fig:escale-intervals}
\end{figure}

\section{Proofs}
\label{app:proofs}

\subsection{Proof of Proposition~\ref{thm:reduce}}
\label{app:discrep-proof}

We verify the identity's three claims by telescoping and linearity.

\begin{proof}
For the first claim,
\[
    D_T = \sum_{t=1}^T(e_{A_t}-x_t) = N(T)-S(T),
\]
so \(D_{T,i}=N_i(T)-S_i(T)\). If \(S_i(T)\ge m+B\) and \(\|D_T\|_\infty\le B\), then \(N_i(T)=S_i(T)+D_{T,i}\ge m\). For the second claim,
\[
    \sum_{t=1}^T \mu^\top x_t - \sum_{t=1}^T \mu_{A_t}
    = \mu^\top\!\sum_{t=1}^T x_t - \mu^\top\!\sum_{t=1}^T e_{A_t} = -\mu^\top D_T.
\]
The third claim is an immediate reformulation of the first two.
\end{proof}

\subsection{Proof of Lemma~\ref{lem:block-id}}
\label{app:identities}

The blockwise fair-regret identity rewrites fair regret as a gap-weighted count of pulls beyond each block's quota.

\begin{proof}
Since \(\sum_i N_i(T)=T\) and \(\Delta_{i^\star}=0\), \(\sum_{t=1}^T\mu_{A_t}=\sum_i N_i(T)\mu_i=T\mu_\star-\sum_{i\neq i^\star}\Delta_i N_i(T)\). Substituting into \(\widehat\Reg_{\mathbf m}(T)=\OPT_{\mathbf m}(T)-\sum_t\mu_{A_t}\) and using \(\OPT_{\mathbf m}(T)=T\mu_\star-\sum_b\sum_{i\neq i^\star}m_{b,i}\Delta_i\) gives the identity. Blockwise fairness gives \(N_i(T)=\sum_b N_{b,i}\ge\sum_b m_{b,i}\), so every term is nonnegative.
\end{proof}

\subsection{Proof of Theorem~\ref{thm:blockwise}}
\label{app:blockwise-proof}

We establish exact blockwise fairness of BDQ-UCB and its \(K+4\sqrt{2KRL}\) residual-regret bound, reusing the concentration event of Theorem~\ref{thm:main} on the nonmandatory rounds.

\begin{proof}
The quota schedule in block \(b\) pulls arm \(i\) exactly \(m_{b,i}\) times before any residual UCB pulls, so blockwise fairness holds deterministically.

The same concentration event \(\calE\) as in Theorem~\ref{thm:main} holds with probability at least \(1-\eta\). On \(\calE\), any residual UCB pull of a suboptimal arm \(i\) with \(n\) observations implies \(n\le 8L/\Delta_i^2\). Mandatory pulls are comparator-matched and do not contribute to blockwise fair regret. Hence only residual pulls of suboptimal arms contribute, at most \(1+8L/\Delta_i^2\) per arm beyond the at most one initialization pull absorbed by the additive \(K\), and at most \(R\) in total. Lemma~\ref{lem:block-id} gives
\[
    \widehat\Reg_{\mathbf m}(T)
    \le
    K+
    \sum_{i:\Delta_i>0}
    \min\!\left\{R\Delta_i,\frac{8L}{\Delta_i}\right\}.
\]
The gap-free bound follows by an \(\varepsilon\)-split replacing \(T\) by \(R\); choosing \(\varepsilon=\sqrt{8KL/R}\) gives \(K+4\sqrt{2KRL}\). If \(R=0\), regret is zero.
\end{proof}

\subsection{Proof of Theorem~\ref{thm:block-lower}}
\label{app:lower-proof}

We prove the \(\Omega(\sqrt{KR})\) minimax bound by reducing to a floor-free single-block bandit and applying a KL two-point argument.

\begin{proof}
Reduce to a single block with horizon \(T=R\) and zero floors; every policy is then blockwise fair, and this is a standard stochastic bandit problem. For each \(v\in[K]\), define instance \(P_v\) with arm \(v\) having mean \(1/2+\varepsilon\) and all others mean \(1/2\). Let \(P_0\) be the all-\(1/2\) instance and set \(a_v=\E_0 N_v(R)\). Regret on instance \(v\) is \(\varepsilon\,\E_v[R-N_v(R)]\). Averaging over \(v\),
\[
    \frac1K\sum_{v=1}^K \Reg_v^\pi(R)
    =
    \varepsilon R - \frac{\varepsilon}{K}\sum_{v=1}^K \E_v N_v(R).
\]
By the chain rule for KL divergence, the bound \(\mathrm{kl}(1/2,\,1/2+\varepsilon)\le 4\varepsilon^2\) valid for \(\varepsilon\le1/4\), and Pinsker's inequality,
\[
    \E_v N_v(R)
    \le
    a_v + R\varepsilon\sqrt{2a_v}.
\]
Summing over \(v\) and applying Cauchy's inequality to \(\sum_v\sqrt{a_v}\le\sqrt{KR}\),
\[
    \frac1K\sum_{v=1}^K \Reg_v^\pi(R)
    \ge
    \varepsilon R\!\left(1-\tfrac1K-\varepsilon\sqrt{\tfrac{2R}{K}}\right).
\]
Setting \(\varepsilon=(4\sqrt{2})^{-1}\sqrt{K/R}\) (truncated at \(1/4\)) gives \(c\sqrt{KR}\) for some universal \(c>0\); see also \citet{lattimore2020bandit} for the template.
\end{proof}

\subsection{Proof of Proposition~\ref{prop:naive-group}}
\label{app:naive-proof}

We exhibit a single-group instance on which nearest-integer rounding misses the floor by \(\Omega(s)\).

\begin{proof}
Take a single group \(g\) of size \(s\) and a fractional plan with \(x_i = c_i + (\tfrac12-\epsilon)\) for each \(i\in g\), with integer part \(c_i\) and fractional part \(\tfrac12-\epsilon\). Nearest-integer rounding sends every \(x_i\) down to \(c_i\), so \(\sum_{i\in g}(n_i-x_i)=-s(\tfrac12-\epsilon)=-\Omega(s)\). Setting the floor \(f_g=\sum_{i\in g} x_i\) makes the realized group exposure short by \(\Omega(s)\).
\end{proof}

\subsection{Proof of Theorem~\ref{thm:groupbf}}
\label{app:groupbf-proof}

We give the Beck--Fiala null-space rounding and bound every group's exposure error strictly below the arm degree \(t\) within the reserved budget.

\begin{proof}
Write \(x_i=\lfloor x_i\rfloor+f_i\) with \(f_i\in[0,1)\). It suffices to produce \(z\in\{0,1\}^K\) with \(|\sum_{i\in g}(z_i-f_i)|<t\) for every \(g\in\calG\) and to set \(n_i=\lfloor x_i\rfloor+z_i\): then \(n_i\in\{\lfloor x_i\rfloor,\lceil x_i\rceil\}\) for every \(i\) (coordinates with \(f_i=0\) are frozen at \(z_i=0\) from the start), the totals satisfy \(\sum_i\lfloor x_i\rfloor\le\sum_i n_i\le\sum_i\lceil x_i\rceil\le C\), and \(\sum_{i\in g}(n_i-x_i)=\sum_{i\in g}(z_i-f_i)\).

Initialize \(y\leftarrow f\). Call a coordinate \emph{alive} if \(y_i\in(0,1)\) and \emph{frozen} otherwise, and call a group \emph{active} if it contains more than \(t\) alive coordinates. While at least one coordinate is alive, consider the homogeneous linear system, in the alive coordinates, that fixes \(\sum_{i\in g,\ i\ \mathrm{alive}} y_i\) for every active group \(g\). If \(a\) coordinates are alive, each lies in at most \(t\) groups, so the number of alive-coordinate incidences is at most \(ta\); each active group accounts for more than \(t\) of these incidences, so the number of active groups is strictly less than \(a\). The system therefore has fewer equations than unknowns and admits a nonzero solution \(v\) supported on the alive coordinates. Move \(y\) along \(\pm v\) until some coordinate reaches \(0\) or \(1\), and freeze it there. Each iteration freezes at least one coordinate, so the process terminates after at most \(K\) iterations, each of which solves one linear system, giving polynomial time overall.

An active group's alive-coordinate sum is exactly invariant under every move, and its frozen coordinates never change, so its total \(\sum_{i\in g}y_i\) is invariant while the group is active. Once a group is released it has at most \(t\) alive coordinates, each of which subsequently moves within \((0,1)\) before freezing at \(0\) or \(1\), so each contributes strictly less than \(1\) of drift, and the group total drifts by strictly less than \(t\) in absolute value. Hence at termination \(|\sum_{i\in g}(z_i-f_i)|<t\) for every \(g\in\calG\).

Finally, for every group,
\[
    \sum_{i\in g}n_i
    =
    \sum_{i\in g}x_i+\sum_{i\in g}(z_i-f_i)
    >
    \sum_{i\in g}x_i-t
    \ge
    f_g,
\]
and since \(\sum_{i\in g}n_i\) is an integer it is at least \(\lceil f_g\rceil\ge f_g\). Every group floor is therefore met, and \(\sum_i n_i\le\sum_i\lceil x_i\rceil\le C\) keeps the allocation within the reserved budget.
\end{proof}

\subsection{Proof of Corollary~\ref{cor:groupbdq}}
\label{app:groupbdq-proof}

We show Group-BDQ-UCB is exactly group-feasible via Theorem~\ref{thm:groupbf} and inherits the residual-budget regret bound of Theorem~\ref{thm:blockwise}.

\begin{proof}
Feasibility: by Theorem~\ref{thm:groupbf}, the rounded plan \(n_b\) of every block meets every group floor, and \(\sum_i n_{b,i}\le\sum_i\lceil x_{b,i}\rceil\le H_b\), so the mandatory phase fits inside the block and \(R_b\ge0\).

For regret, let the comparator pull arm \(i\) exactly \(n_{b,i}\) times in block \(b\) and assign all \(R_b\) residual rounds of the block to a best arm. The algebra of Lemma~\ref{lem:block-id}, with \(m_{b,i}\) replaced by \(n_{b,i}\), expresses the realized fair regret against this comparator as \(\sum_{i\neq i^\star}\Delta_i\bigl(N_i(T)-\sum_b n_{b,i}\bigr)\), a gap-weighted count of residual pulls. The concentration event \(\calE\) of the proof of Theorem~\ref{thm:main} holds with probability at least \(1-\eta\); on \(\calE\), any residual selection of a suboptimal arm \(i\) with \(n\) prior observations forces \(n\le 8L/\Delta_i^2\), so arm \(i\) receives at most \(1+8L/\Delta_i^2\) residual pulls beyond the at most one initialization pull absorbed by the additive \(K\), and at most \(R\) residual pulls in total. The \(\varepsilon\)-split with \(\varepsilon=\sqrt{8KL/R}\) then yields the bound \(K+4\sqrt{2KRL}\), exactly as in the proof of Theorem~\ref{thm:blockwise}.
\end{proof}

\subsection{Proof of Proposition~\ref{prop:adaptivity-lower}}
\label{app:adaptivity-proof}

We lower-bound the price of plan adaptivity by \(c\,B\sqrt H\) through an Assouad-type averaging over sign patterns with a per-block Bretagnolle--Huber bound.

\begin{proof}
\emph{The family.} Arms are a star arm \(0\) and \(B\) disjoint pairs \(\{a_b,a_b'\}\), so \(K=2B+1\); groups are the pairs, so the arm degree is \(t=1\). There are \(B\) blocks of even length \(H\), and block \(b\) carries the single floor \(f_{b,g_b}=m:=H/2\) on its own pair, all other floors zero. The star arm is Bernoulli\((3/4)\) in every instance. For a sign vector \(s\in\{\pm1\}^B\), instance \(\nu_s\) sets \(a_b\sim\mathrm{Bernoulli}(\tfrac12+s_b\varepsilon)\) and \(a_b'\sim\mathrm{Bernoulli}(\tfrac12-s_b\varepsilon)\), with
\[
    \varepsilon\ :=\ \frac{1}{8\sqrt{2H}}\ \le\ \frac14 .
\]
Under every \(\nu_s\), the per-block fractional optimum places \(m\) on the good arm of the block's pair and \(H-m\) on the star, so \(\OPT^{\mathrm{frac}}=B\bigl[m(\tfrac12+\varepsilon)+(H-m)\tfrac34\bigr]\).

\emph{Regret is bad-arm mass.} Fix \(s\) and a block \(b\); write \(G,W,O,S\) for the block-\(b\) pulls of the good pair arm, the bad pair arm, all other pairs' arms, and the star, with \(G+W+O+S=H\) and \(G+W\ge m\) by blockwise group fairness. All non-star means are at most \(\tfrac12+\varepsilon\), so the block-\(b\) regret against the fractional optimum is at least
\[
\begin{aligned}
    &(G-m)\bigl(\tfrac14-\varepsilon\bigr)+W\bigl(\tfrac14+\varepsilon\bigr)+O\bigl(\tfrac14-\varepsilon\bigr)\\
    &\quad\ge\ -W\bigl(\tfrac14-\varepsilon\bigr)+W\bigl(\tfrac14+\varepsilon\bigr)\ =\ 2\varepsilon W ,
\end{aligned}
\]
using \(G-m\ge-W\) and \(O\ge0\). Hence \(\mathrm{reg}_b(s)\ge2\varepsilon\,\E_s[W_b]\), where \(W_b\) is the block-\(b\) pull count of the bad arm of pair \(b\), and every block's regret is nonnegative.

\emph{Per-coordinate two-point bound.} Fix \(b\) and \(s_{-b}\), and let \(P_\pm\) denote the trajectory laws over the whole horizon under \(s_b=\pm1\). With \(A:=\{N^{\mathrm{blk}}_{a_b}\ge m/2\}\), where \(N^{\mathrm{blk}}_{a_b}\) counts block-\(b\) pulls of \(a_b\): under \(s_b=-1\) the bad arm is \(a_b\) and \(\mathrm{reg}_b\ge2\varepsilon(m/2)P_-(A)\); under \(s_b=+1\) the bad arm is \(a_b'\) with \(N^{\mathrm{blk}}_{a_b'}\ge m-N^{\mathrm{blk}}_{a_b}\), so \(\mathrm{reg}_b\ge2\varepsilon(m/2)P_+(A^c)\). The Bretagnolle--Huber inequality gives
\[
\begin{aligned}
    \mathrm{reg}_b(s_{-b},-)+\mathrm{reg}_b(s_{-b},+)
    &\ \ge\ \varepsilon m\bigl(P_-(A)+P_+(A^c)\bigr)\\
    &\ \ge\ \frac{\varepsilon m}{2}\,e^{-\mathrm{KL}(P_-,P_+)} .
\end{aligned}
\]
The two laws differ only on the arms of pair \(b\), so by the chain rule and \(\mathrm{kl}(\tfrac12\pm\varepsilon,\tfrac12\mp\varepsilon)\le16\varepsilon^2\) for \(\varepsilon\le\tfrac14\),
\[
\begin{aligned}
    \mathrm{KL}(P_-,P_+)&\ \le\ 16\varepsilon^2\,\E_{(s_{-b},-)}\bigl[M_b\bigr],\\
    M_b&:=\text{total horizon pulls of pair }b .
\end{aligned}
\]

\emph{Markov selection.} Let \(q_b:=2^{-B}\sum_s\E_s[M_b]\). The pairs are disjoint, so \(\sum_bM_b\le T=BH\) pathwise and \(\sum_bq_b\le BH\); by Markov's inequality at most \(B/4\) coordinates have \(q_b>4H\), so at least \(3B/4\) coordinates satisfy \(q_b\le4H\). For such a coordinate, averaging over \(s_{-b}\),
\[
    2^{-(B-1)}\!\!\sum_{s_{-b}}\mathrm{KL}(P_-,P_+)\ \le\ 16\varepsilon^2\cdot2q_b\ \le\ 128\,\varepsilon^2H\ =\ 1 ,
\]
by the choice of \(\varepsilon\), and Jensen's inequality gives \(2^{-(B-1)}\sum_{s_{-b}}e^{-\mathrm{KL}}\ge e^{-1}\).

\emph{Assembling.} Averaging the two-point bound over \(s_b\) and then over \(s_{-b}\), each coordinate \(b\) with \(q_b\le4H\) contributes at least \((\varepsilon m/4)e^{-1}\) to the sign-averaged total regret, so
\[
\begin{aligned}
    2^{-B}\sum_s\bigl[\OPT^{\mathrm{frac}}-\E_s V^\pi\bigr]
    &\ \ge\ \frac{3B}{4}\cdot\frac{\varepsilon m}{4}\,e^{-1}\\
    &\ =\ \frac{3}{256\sqrt2\,e}\,B\sqrt H ,
\end{aligned}
\]
and the supremum over the family dominates the average, proving the claim with \(c=3/(256\sqrt2\,e)\). Finally, on the same instances the fixed plan that places the block's floor mass on \(a_b\) (with unit slack) is admissible for Corollary~\ref{cor:groupbdq}, giving regret \(O(\sqrt{KRL})\) against the fixed-plan comparator.
\end{proof}

\subsection{Proof of Theorem~\ref{thm:disjoint}}
\label{app:disjoint-proof}

We prove exact feasibility of D-BDQ-UCB and a \(4K+8\sqrt{KTL}\) regret bound by decomposing the per-block optimum across disjoint groups.

\begin{proof}
\emph{Feasibility.} Group \(g\) receives exactly \(f_{b,g}\) covering pulls in block \(b\), so every floor is met exactly and deterministically; \(\sum_g f_{b,g}\le H_b\) leaves the surplus nonnegative.

\emph{The comparator decomposes.} Write \(\mu^\star_g=\max_{i\in g}\mu_i\). For any \(y\ge0\) with \(\sum_{i\in g}y_i=G_g\ge f_{b,g}\) and \(\sum_iy_i=H_b\), disjointness gives
\[
\begin{aligned}
    \langle\mu,y\rangle
    &\ \le\ \sum_g\mu^\star_g G_g+\mu_\star\Bigl(H_b-\sum_gG_g\Bigr)\\
    &\ \le\ \sum_g f_{b,g}\,\mu^\star_g+\mu_\star\Bigl(H_b-\sum_g f_{b,g}\Bigr),
\end{aligned}
\]
using \(\mu^\star_g\le\mu_\star\) for the second inequality; the value is attained by covering each floor with a best member of its group and assigning the surplus to a best arm. Hence \(\OPT^{\mathrm{frac}}=\sum_b[\sum_gf_{b,g}\mu^\star_g+(H_b-\sum_gf_{b,g})\mu_\star]\), and with \(\widetilde\Delta_i=\mu^\star_g-\mu_i\) for \(i\in g\),
\[
    \OPT^{\mathrm{frac}}-V\ =\ \sum_{g}\sum_{i\in g}\widetilde\Delta_i\,N^{\mathrm{cov}}_i
    \ +\ \sum_{i\ne i^\star}\Delta_i\,N^{\mathrm{sur}}_i ,
\]
where \(N^{\mathrm{cov}}_i\) and \(N^{\mathrm{sur}}_i\) count covering and surplus pulls of arm \(i\) over the horizon.

\emph{Counting.} Let \(\calE\) be the concentration event of the proof of Theorem~\ref{thm:main}, of probability at least \(1-\eta\). At most one covering selection of any arm occurs with zero prior samples. On \(\calE\), a covering selection of \(i\in g\) with \(n\ge1\) prior samples requires \(U_i\ge U_{i^\star_g}\ge\mu^\star_g\), hence \(\mu_i+2\sqrt{2L/n}\ge\mu^\star_g\) and \(n\le8L/\widetilde\Delta_i^2\); extra samples from surplus pulls or other blocks only increase \(n\), so \(N^{\mathrm{cov}}_i\le1+8L/\widetilde\Delta_i^2\), while \(\sum_{i\in g}N^{\mathrm{cov}}_i=F_g\). Splitting group \(g\) at \(\varepsilon_g=\sqrt{8|g|L/F_g}\),
\[
\begin{aligned}
    \sum_{i\in g}\widetilde\Delta_i N^{\mathrm{cov}}_i
    &\ \le\ \varepsilon_g F_g+\sum_{i\in g:\,\widetilde\Delta_i\ge\varepsilon_g}\Bigl(1+\frac{8L}{\widetilde\Delta_i}\Bigr)\\
    &\ \le\ 2|g|+4\sqrt{2\,|g|F_gL}.
\end{aligned}
\]
The surplus term is the residual argument of Theorem~\ref{thm:blockwise} verbatim: on \(\calE\) it is at most \(2K+4\sqrt{2KR'L}\).

\emph{Assembling.} Summing over groups, using \(\sum_g|g|\le K\) (disjointness) and Cauchy--Schwarz,
\[
\begin{aligned}
    \sum_g\sqrt{|g|F_g}+\sqrt{KR'}
    &\ \le\ \sqrt{\Bigl(\sum_g|g|+K\Bigr)\Bigl(\sum_gF_g+R'\Bigr)}\\
    &\ \le\ \sqrt{2KT},
\end{aligned}
\]
which gives \(\OPT^{\mathrm{frac}}-V\le4K+4\sqrt{2L}(\sum_g\sqrt{|g|F_g}+\sqrt{KR'})\le4K+8\sqrt{KTL}\) on \(\calE\).

\emph{Consequences.} On the family of Proposition~\ref{prop:adaptivity-lower}: \(|g|=2\), \(F_{g_b}=H/2\), \(K=2B+1\), \(R'=BH/2\), so \(\sum_g\sqrt{|g|F_g}=B\sqrt H\) and \(\sqrt{KR'}\le\sqrt{3B\cdot BH/2}\le2B\sqrt H\), giving \(O(\sqrt L\cdot B\sqrt H)\) against the \(\Omega(B\sqrt H)\) lower bound in every regime. With singleton groups, \(\widetilde\Delta_i=0\) annihilates the covering term, the strong comparator coincides with the plan-matched one, and the bound reduces to the surplus term of Theorem~\ref{thm:blockwise}.
\end{proof}

\subsection{Proof of Lemma~\ref{lem:static-id}}
\label{app:identities-static}

The static fair-regret identity is the \(B=1\) specialization of Lemma~\ref{lem:block-id}, rewriting fair regret as a gap-weighted count of pulls beyond the quota.

\begin{proof}
Since \(\sum_i N_i(T)=T\) and \(\Delta_{i^\star}=0\),
\[
    \sum_{t=1}^T\mu_{A_t}=\sum_{i=1}^K N_i(T)\mu_i=T\mu_\star-\sum_{i\neq i^\star}\Delta_i N_i(T).
\]
Substituting into \(\widehat\Reg_m(T)=\OPT_m(T)-\sum_t\mu_{A_t}\) and using \(\OPT_m(T)=T\mu_\star-m\sum_{i\neq i^\star}\Delta_i\) gives the identity. Under \(m\)-fairness every term \(\Delta_i(N_i(T)-m)\) is nonnegative.
\end{proof}

\subsection{Proof of Lemma~\ref{lem:balanced}}
\label{app:quota-proof}

We bound the cyclic-schedule discrepancy by one via a prefix-count argument.

\begin{proof}
After \(m\) complete cycles, each arm has been pulled exactly once per cycle, so \(N_i(Km)=m\). For a prefix \(t\), write \(t=qK+r\) with \(q\ge0\) and \(0\le r<K\). Each arm has been pulled either \(q\) or \(q+1\) times. Since \(t/K=q+r/K\),
\[
    \left|N_i(t)-\frac{t}{K}\right|
    \le
    \max\!\left\{\frac{r}{K},1-\frac{r}{K}\right\}
    \le 1.
\]
\end{proof}

\subsection{Proof of Theorem~\ref{thm:main}}
\label{app:main-proof}

We prove exact fairness deterministically and the \(K+4\sqrt{2KTL}\) regret bound on a Hoeffding concentration event.

\begin{proof}
The exact fairness claim is deterministic. If \(m\ge1\), the first phase pulls every arm exactly \(m\) times, so \(N_i(T)\ge m\) for every arm. If \(m=0\), the constraint is vacuous.

Define the event
\[
    \calE
    =
    \left\{
    \forall i\in[K],\ \forall n\in\{1,\dots,T\}:
    \left|\widehat{\mu}_{i,n}-\mu_i\right|
    \le
    \sqrt{\frac{2L}{n}}
    \right\},
\]
where \(\widehat{\mu}_{i,n}\) is the empirical mean of the first \(n\) rewards from arm \(i\), and \(L=\log(2KT/\eta)\). By Hoeffding's inequality and a union bound over \(KT\) pairs,
\[
    \Prob(\calE^c)
    \le
    2KT\exp(-4L)
    \le
    \eta.
\]

Assume \(\calE\) holds. Fix a suboptimal arm \(i\neq i^\star\) selected during the UCB phase with \(n=N_i(t)\) observations. Since \(i\) maximizes the UCB index and on \(\calE\) we have \(\widehat{\mu}_{i^\star}+\sqrt{2L/N_{i^\star}}\ge\mu_\star\) and \(\widehat{\mu}_i+\sqrt{2L/n}\le\mu_i+2\sqrt{2L/n}\),
\[
    \Delta_i
    \le
    2\sqrt{\frac{2L}{n}},
    \qquad\text{so}\qquad
    n
    \le
    \frac{8L}{\Delta_i^2}.
\]
Let \(M_i=(N_i(T)-m)_+\) be the number of UCB pulls of arm \(i\) beyond the quota. Then \(M_i\le 1+8L/\Delta_i^2\). Using Lemma~\ref{lem:static-id},
\[
    \widehat{\Reg}_m(T)
    \le
    K
    +
    \sum_{i:\Delta_i>0}
    \min\!\left\{
        T\Delta_i,
        \frac{8L}{\Delta_i}
    \right\}.
\]
For the gap-free bound, split arms at threshold \(\varepsilon\):
\[
    \widehat{\Reg}_m(T)
    \le
    K+\varepsilon T+\frac{8KL}{\varepsilon}.
\]
Choosing \(\varepsilon=\sqrt{8KL/T}\) gives \(K+4\sqrt{2KTL}\). The expected-regret bound follows by setting \(\eta=1/T\) and bounding the \(\calE^c\) contribution by \(1\).
\end{proof}

Figure~\ref{fig:regret-vs-T} provides empirical confirmation: fair regret stays inside the \(O(\sqrt{KT\log(KT)})\) envelope and vanishes at large \(T\).

\begin{figure*}[h]
\centering
\placeholdergraphic[width=\textwidth]{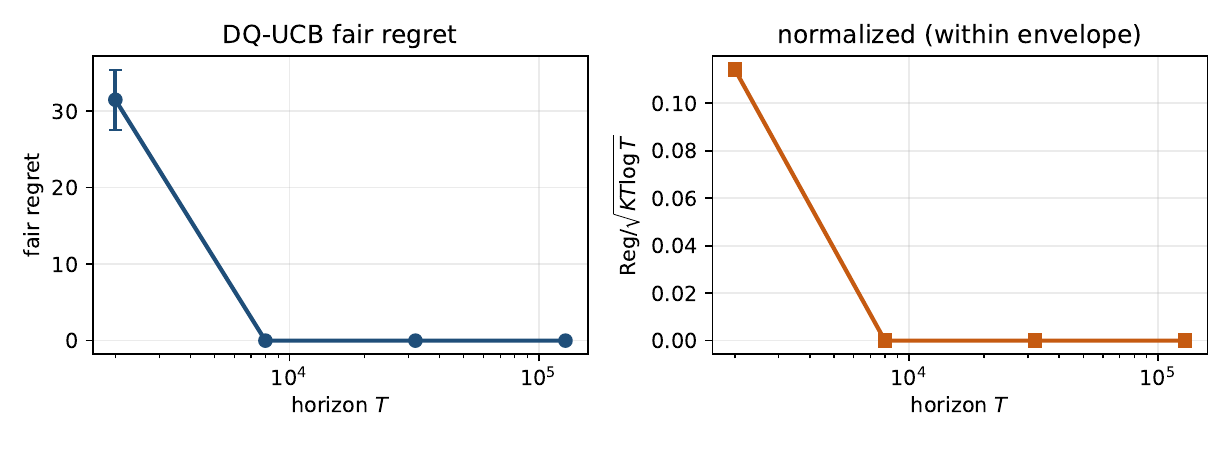}
\caption{E1: Fair regret (left) and normalized regret \(\mathrm{Reg}/\sqrt{KT\log T}\) (right) for DQ-UCB vs.\ horizon \(T\) (\(K=5\), \(\delta=0.1\), 16 seeds). Fair regret drops from \(31.5\) at \(T=2{,}000\) to zero for \(T\ge8{,}000\); the normalized regret stays within the \(O(\sqrt{KT\log(KT)})\) envelope of Theorem~\ref{thm:main} (it is the regret upper bound that is verified, not a matched growth rate, since realized fair regret is zero at large \(T\)).}
\label{fig:regret-vs-T}
\end{figure*}

\subsection{Proof of Corollary~\ref{cor:realized}}
\label{app:realized-proof}

We pass from pseudo-regret to realized reward via an Azuma--Hoeffding martingale bound.

\begin{proof}
Theorem~\ref{thm:main} bounds \(\OPT_m(T)-\sum_t\mu_{A_t}\). Since rewards lie in \([0,1]\), the differences \(X_{A_t}-\mu_{A_t}\) form a bounded martingale difference sequence. By Azuma--Hoeffding, with probability at least \(1-\eta\),
\[
    \sum_{t=1}^T \mu_{A_t}-\sum_{t=1}^T X_{A_t}
    \le
    \sqrt{2T\log(1/\eta)}.
\]
A union bound with the event of Theorem~\ref{thm:main} gives the claim.
\end{proof}

\subsection{Proof of Theorem~\ref{thm:pareto}}
\label{app:pareto-proof}

We identify the value-maximizing allocation at each fairness level and show the resulting curve is Pareto-efficient.

\begin{proof}
Fix \(\beta\in[0,1/K]\). Any \(p\) with \(\phi(p)\ge\beta\) assigns at least \(\beta\) to every arm. Among such allocations, the unique reward maximizer assigns exactly \(\beta\) to every suboptimal arm and the residual \(1-(K-1)\beta\) to \(i^\star\); this is \(p^\beta\). Its value is
\[
    V(p^\beta)
    =
    (1-(K-1)\beta)\mu_\star
    +
    \beta\sum_{i\neq i^\star}\mu_i
    =
    \mu_\star - \beta\sum_{i\neq i^\star}\Delta_i.
\]
Since \(p^\beta\) is the unique value maximizer at fairness level \(\beta\) and \(V(p^\beta)\) is strictly decreasing in \(\beta\) when \(i^\star\) is unique, no point on the curve is dominated. Conversely, any Pareto-efficient \(p\) with \(\phi(p)=\beta\) must equal \(p^\beta\).
\end{proof}

\subsection{Proof of Corollary~\ref{cor:frontier}}
\label{app:frontier-cor}

We match DQ-UCB's guarantee to the integral Pareto frontier and divide by \(T\).

\begin{proof}
The best fair integral allocation for floor \(m\) assigns \(m\) pulls to each suboptimal arm and all remaining pulls to \(i^\star\), which is the integral version of \(p^{m/T}\) on the continuous Pareto frontier. Theorem~\ref{thm:main} bounds DQ-UCB's regret against this allocation; dividing by \(T\) gives convergence of average reward to \(V(p^{m/T})\) at rate \(O(\sqrt{K\log(KT)/T})\).
\end{proof}

\subsection{Proof of Theorem~\ref{thm:bdqmoss}}
\label{app:bdqmoss-proof}

We embed the residual rounds as a MOSS game of horizon \(R\), giving the \(C\sqrt{KR}+K\) bound and, via doubling, the unknown-\(R\) case.

\begin{proof}
Fairness is immediate: the mandatory schedule of BDQ-UCB is unchanged, so Theorem~\ref{thm:blockwise}(i) applies verbatim. If \(R<K\), the fair regret is at most \(R\le\sqrt{KR}\) deterministically and the bound holds with any \(C\ge1\); assume \(R\ge K\).

The positions of the residual rounds are deterministic given the block structure and floors: block \(b\) consists of its mandatory schedule followed by \(R_b\) residual rounds. Let \(\tau(1)<\dots<\tau(R)\) enumerate the residual rounds, and for \(r\le R\) let \(A'_r=A_{\tau(r)}\) and \(X'_r\) denote the arm pulled and the reward observed at the \(r\)th residual round. The residual rule of BDQ-MOSS is a function of \((A'_1,X'_1,\dots,A'_{r-1},X'_{r-1})\) only, and, because rewards are independent across pulls and identically distributed within each arm, the conditional law of \(X'_r\) given the residual past and \(A'_r=i\) is \(\nu_i\), regardless of how many mandatory pulls occurred in between. Hence the process \((A'_r,X'_r)_{r\le R}\) has exactly the law of the MOSS algorithm with horizon parameter \(R\) interacting with the bandit \((\nu_1,\dots,\nu_K)\) for \(R\) rounds. By Lemma~\ref{lem:block-id}, the blockwise fair regret of BDQ-MOSS equals \(\sum_{i\ne i^\star}\Delta_i N^{\mathrm{res}}_i(R)=\sum_{r=1}^{R}\Delta_{A'_r}\), the pseudo-regret of the embedded game. The distribution-free guarantee for MOSS \citep{audibert2009minimax} (see also \citealt{lattimore2020bandit}, Chapter 9) bounds its expectation by \(C\sqrt{KR}\) for a universal constant \(C\), with the \(K\) initialization pulls contributing at most \(K\); this proves the upper bound. The matching \(\Omega(\sqrt{KR})\) is Theorem~\ref{thm:block-lower}, and \(\sqrt{KR}\ge K\) for \(R\ge K\), so the additive \(K\) does not affect the rate.

If the floors are revealed only at block starts, \(R\) is unknown in advance. Run the residual rule in phases \(j=0,1,2,\dots\), where phase \(j\) spans the next \(2^j\) residual rounds, restarting the residual statistics and using horizon parameter \(2^j\) within the phase. Each phase is itself an embedded MOSS game by the argument above, so phase \(j\) contributes expected regret at most \(C\sqrt{K2^j}+K\), and at most \(\lceil\log_2(2R)\rceil\) phases begin, so the total expected fair regret is at most \(C'\sqrt{KR}+K\lceil\log_2(2R)\rceil\) for a universal \(C'\), preserving the rate up to a logarithmic additive initialization term.
\end{proof}

\subsection{Proof of Proposition~\ref{prop:positive-floor-lower}}
\label{app:positive-floor-proof}

We force \(\Omega(\sqrt{KR})\) regret even under positive mandatory exposure, since the mandatory pulls are comparator-matched and carry no information.

\begin{proof}
Fix \(K\ge2\), \(R\ge K\), and an integer \(M\ge0\). The instance has two blocks: block one has length \(H_1=M\) and floor vector \(m_1=(M,0,\ldots,0)\), so its entire length is mandatory mass on arm \(1\); block two has length \(H_2=R\) with zero floors, so the total residual budget is exactly \(R\) and the total mandatory budget is \(M\). Throughout, arm \(1\) is Bernoulli\((1/2)\) under every instance considered. Three observations drive the argument. First, because \(H_1=m_{1,1}\), every blockwise-fair policy pulls arm \(1\) in every round of block one, and so does the blockwise-fair comparator; by Lemma~\ref{lem:block-id}, these \(M\) pulls cancel from fair regret under every instance. Second, the law of those observations is the same under all instances, so by the chain rule they contribute zero to the KL divergence between the trajectory distributions of any two instances; all information comes from residual pulls. Third, under any instance whose unique best arm is \(v\), every residual pull of an arm other than \(v\) costs at least the minimum gap in fair regret.

Consider first \(K\ge3\). Let \(P_0\) give every arm mean \(1/2\), and for \(v\in\{2,\ldots,K\}\) let \(P_v\) give arm \(v\) mean \(1/2+\varepsilon\) and every other arm mean \(1/2\), with \(\varepsilon\in(0,1/4]\) chosen below. Write \(N_v^{\mathrm{res}}\) for the number of residual pulls of arm \(v\) and \(a_v=\E_0[N_v^{\mathrm{res}}]\), so \(\sum_{v=2}^{K}a_v\le R\). Under \(P_v\), fair regret satisfies \(\Reg_v\ge\varepsilon\,\E_v[R-N_v^{\mathrm{res}}]\). The chain rule gives \(\mathrm{KL}(P_0^\pi,P_v^\pi)=a_v\,\mathrm{kl}(1/2,1/2+\varepsilon)\le4\varepsilon^2 a_v\), and Pinsker's inequality yields \(\E_v[N_v^{\mathrm{res}}]\le a_v+R\varepsilon\sqrt{2a_v}\). Averaging over the \(K-1\) alternatives and applying Cauchy--Schwarz, \(\sum_{v=2}^{K}\sqrt{a_v}\le\sqrt{(K-1)R}\),
\[
    \frac{1}{K-1}\sum_{v=2}^{K}\Reg_v
    \ge
    \varepsilon R\!\left(1-\frac{1}{K-1}-\varepsilon\sqrt{\frac{2R}{K-1}}\right).
\]
Take \(\varepsilon=(4\sqrt2)^{-1}\sqrt{(K-1)/R}\), which satisfies \(\varepsilon\le1/4\) because \(R\ge K>(K-1)/2\). Then \(\varepsilon\sqrt{2R/(K-1)}=1/4\), and \(1-1/(K-1)\ge1/2\) for \(K\ge3\), so the average regret is at least
\[
    \frac{\varepsilon R}{4}
    =
    \frac{\sqrt{(K-1)R}}{16\sqrt2}
    \ge
    \frac{\sqrt{KR}}{32},
\]
using \(K-1\ge K/2\). Some instance in the family therefore forces fair regret at least \(\sqrt{KR}/32\).

Consider now \(K=2\). Arm \(1\) remains Bernoulli\((1/2)\) under both instances, and arm \(2\) is Bernoulli\((1/2+\varepsilon)\) under \(P^{+}\) and Bernoulli\((1/2-\varepsilon)\) under \(P^{-}\); the mandatory pulls of arm \(1\) are again uninformative and comparator-matched. Let \(A=\{N_1^{\mathrm{res}}\ge R/2\}\). Under \(P^{+}\) the best arm is \(2\) with \(\Delta_1=\varepsilon\), so \(\Reg_{P^{+}}\ge\varepsilon\,\E^{+}[N_1^{\mathrm{res}}]\ge\varepsilon(R/2)\,P^{+}(A)\); under \(P^{-}\) the best arm is \(1\) with \(\Delta_2=\varepsilon\), so \(\Reg_{P^{-}}\ge\varepsilon\,\E^{-}[R-N_1^{\mathrm{res}}]\ge\varepsilon(R/2)\,P^{-}(A^c)\). By the Bretagnolle--Huber inequality,
\[
    P^{+}(A)+P^{-}(A^c)\ \ge\ \tfrac12\exp\!\bigl(-\mathrm{KL}(P^{+,\pi},P^{-,\pi})\bigr),
\]
and by the chain rule \(\mathrm{KL}(P^{+,\pi},P^{-,\pi})\le\E^{+}[N_2^{\mathrm{res}}]\,\mathrm{kl}(1/2+\varepsilon,\,1/2-\varepsilon)\le16\varepsilon^2R\) for \(\varepsilon\le1/4\), since \(\mathrm{kl}(1/2+\varepsilon,1/2-\varepsilon)=2\varepsilon\log\frac{1+2\varepsilon}{1-2\varepsilon}\le16\varepsilon^2\) in that range. Choosing \(\varepsilon=(16R)^{-1/2}\le1/4\) makes the exponent at most \(1\), so
\[
\begin{aligned}
    \Reg_{P^{+}}+\Reg_{P^{-}}
    &\ge
    \frac{\varepsilon R}{4}\,e^{-1}
    =
    \frac{e^{-1}\sqrt{R}}{16}\\
    &=
    \Omega(\sqrt{R})
    =
    \Omega(\sqrt{KR}).
\end{aligned}
\]

In every case, some instance in the family forces fair regret \(\Omega(\sqrt{KR})\) even though the overall instance contains positive mandatory exposure \(M\), which proves the claim with a universal constant.
\end{proof}

\subsection{Proof of Theorem~\ref{thm:instance-lower}}
\label{app:instance-lower-proof}

We derive the instance-dependent \(\log R\) lower bound by a change of measure against the instance that lifts arm \(i\) above \(\mu_\star\).

\begin{proof}
Fix an instance \(\nu\) of the template family, an arm \(i\ne1\) with \(\Delta_i>0\), and \(\varepsilon\in(0,1-\mu_\star)\). Let \(\nu'\) agree with \(\nu\) except that arm \(i\) has mean \(\mu_\star+\varepsilon\); then \(\nu'\) belongs to the family, and arm \(i\) is its unique best arm. Write \(\Reg_\nu(R)\) for the expected fair regret of \(\pi\) under \(\nu\) at residual budget \(R\).

All mandatory pulls are on arm \(1\), whose law is the same under \(\nu\) and \(\nu'\), so by the chain rule for KL divergence the trajectory laws satisfy
\[
    \mathrm{KL}\bigl(P_\nu,P_{\nu'}\bigr)
    =
    \E_\nu\bigl[N^{\mathrm{res}}_i\bigr]\,\mathrm{kl}(\mu_i,\mu_\star+\varepsilon).
\]
Let \(A=\{N^{\mathrm{res}}_i\ge R/2\}\). Under \(\nu\), by Lemma~\ref{lem:block-id}, the expected fair regret is at least \(\Delta_i\,\E_\nu[N^{\mathrm{res}}_i]\ge\Delta_i(R/2)P_\nu(A)\); under \(\nu'\), every residual pull of an arm other than \(i\) has gap at least \(\varepsilon\) (arm \(1\) has mean \(1/2\le\mu_\star\)), so the expected fair regret is at least \(\varepsilon\,\E_{\nu'}[R-N^{\mathrm{res}}_i]\ge\varepsilon(R/2)P_{\nu'}(A^c)\). With \(c_\varepsilon=\min(\Delta_i,\varepsilon)\), the Bretagnolle--Huber inequality gives
\[
\begin{aligned}
    \Reg_\nu(R)+\Reg_{\nu'}(R)
    &\ \ge\
    \frac{c_\varepsilon R}{2}\bigl(P_\nu(A)+P_{\nu'}(A^c)\bigr)\\
    &\ \ge\
    \frac{c_\varepsilon R}{4}\exp\Bigl(-\E_\nu\bigl[N^{\mathrm{res}}_i\bigr]\,\mathrm{kl}(\mu_i,\mu_\star+\varepsilon)\Bigr),
\end{aligned}
\]
and rearranging,
\[
    \E_\nu\bigl[N^{\mathrm{res}}_i\bigr]
    \ \ge\
    \frac{\log\bigl(c_\varepsilon R/4\bigr)-\log\bigl(\Reg_\nu(R)+\Reg_{\nu'}(R)\bigr)}{\mathrm{kl}(\mu_i,\mu_\star+\varepsilon)}.
\]
Fix \(a\in(0,1]\). Uniform fastness applied to \(\nu\) and to \(\nu'\) gives \(\Reg_\nu(R)+\Reg_{\nu'}(R)\le R^{a}\) for all sufficiently large \(R\), so
\[
    \liminf_{R\to\infty}\frac{\E_\nu[N^{\mathrm{res}}_i]}{\log R}
    \ \ge\
    \frac{1-a}{\mathrm{kl}(\mu_i,\mu_\star+\varepsilon)}.
\]
Letting \(a\downarrow0\) and then \(\varepsilon\downarrow0\), using the continuity of \(\mathrm{kl}(\mu_i,\cdot)\) on \((0,1)\), proves the per-arm claim. The regret consequence follows from Lemma~\ref{lem:block-id}: \(\Reg_\nu(R)\ge\sum_{i\ne i^\star}\Delta_i\,\E_\nu[N^{\mathrm{res}}_i]\ge\sum_{i\ne1:\,\Delta_i>0}\Delta_i\,\E_\nu[N^{\mathrm{res}}_i]\), dropping the nonnegative arm-\(1\) term when arm \(1\) is suboptimal, and summing the per-arm bounds.
\end{proof}

\subsection{Proof of Corollary~\ref{cor:bdqkl}}
\label{app:bdqkl-proof}

We embed the residual rounds as a kl-UCB\(^{++}\) game to attain the exact instance-dependent constant.

\begin{proof}
The mandatory schedule is unchanged, so feasibility is Theorem~\ref{thm:blockwise}(i). Exactly as in the proof of Theorem~\ref{thm:bdqmoss}, the residual rounds with residual-only statistics form a standard \(K\)-armed Bernoulli bandit of horizon \(R\), played here by kl-UCB\(^{++}\) with horizon parameter \(R\), and by Lemma~\ref{lem:block-id} the blockwise fair regret equals the pseudo-regret of that embedded game. kl-UCB\(^{++}\) is simultaneously minimax-optimal and asymptotically optimal for these models \citep{menard2017minimax}: the embedded pseudo-regret is at most \(C\sqrt{KR}\) in expectation for a universal constant \(C\), plus at most \(K\) for initialization, and on every fixed instance the pull counts satisfy \(\limsup_{R\to\infty}\E[N^{\mathrm{res}}_i]/\log R\le1/\mathrm{kl}(\mu_i,\mu_\star)\) for every suboptimal arm \(i\). Multiplying by \(\Delta_i\) and summing gives the displayed limsup. The instance-wise regret is therefore \(O(\log R)\), so BDQ-KL is uniformly fast whenever \(M\) grows at most polynomially in \(R\), and for every arm \(i\ne1\) of the template family the per-arm limsup meets the liminf of Theorem~\ref{thm:instance-lower}: the instance-dependent characterization is exact arm by arm. When \(R\) is not known in advance, the doubling scheme of Theorem~\ref{thm:bdqmoss} preserves the minimax bound; the exact asymptotic constant is specific to known \(R\), since restarting inflates the logarithmic term.
\end{proof}

\subsection{Proof of Theorem~\ref{thm:ogbdq}}
\label{app:ogbdq-proof}

We show OG-BDQ-UCB is exactly group-feasible and bound its regret against the slack-feasible per-block optimum by \(3KB+4H_{\max}\sqrt{2LB}\).

\begin{proof}
Write \(P_b=\{y\ge0:\sum_{i\in g}y_i\ge f_{b,g}+t\ \forall g,\ \sum_iy_i=H_b-2K\}\) for the block-\(b\) plan polytope, assumed nonempty, and \(y_b^\star\in\argmax_{y\in P_b}\langle\mu,y\rangle\).

\emph{Feasibility and accounting.} The plan \(y_b\) has group totals at least \(f_{b,g}+t\) and ceiling budget \(\sum_i\lceil y_{b,i}\rceil\le\sum_iy_{b,i}+K=H_b-K\), so Theorem~\ref{thm:groupbf} produces \(n_b\) with \(n_{b,i}\in\{\lfloor y_{b,i}\rfloor,\lceil y_{b,i}\rceil\}\), every group floor met, and \(\sum_in_{b,i}\le H_b-K\). Together with the \(K\) initialization pulls, the block therefore has \(\ell_b:=H_b-K-\sum_in_{b,i}\) leftover rounds with \(0\le\ell_b\le2K\), the upper bound because \(\sum_in_{b,i}\ge\sum_i\lfloor y_{b,i}\rfloor\ge(H_b-2K)-K\). Every group floor is met in every block deterministically.

\emph{Concentration.} Let \(\calE\) be the event of the proof of Theorem~\ref{thm:main}: for every arm \(i\) and every count \(n\le T\), the empirical mean after \(n\) observations is within \(\sqrt{2L/n}\) of \(\mu_i\); \(\Prob(\calE)\ge1-\eta\). At the block-\(b\) planning step every arm has received one initialization pull in each of blocks \(1,\dots,b\), so \(N_i\ge b\), and on \(\calE\) the clipped index satisfies \(\mu_i\le U_i\) (because \(\widehat\mu_i+\sqrt{2L/N_i}\ge\mu_i\) and \(\mu_i\le1\)) and
\[
    U_i-\mu_i\ \le\ \min\Bigl(1,\ 2\sqrt{2L/N_i}\Bigr)\ \le\ \min\Bigl(1,\ 2\sqrt{2L/b}\Bigr)\ =:\ \rho_b,
\]
uniformly over arms.

\emph{Per-block regret.} On \(\calE\), optimism and LP optimality of \(y_b\) give
\[
\begin{aligned}
    \langle\mu,y_b^\star\rangle
    &\ \le\ \langle U,y_b^\star\rangle
    \ \le\ \langle U,y_b\rangle\\
    &\ \le\ \langle\mu,y_b\rangle+\rho_b\sum_iy_{b,i}
    \ \le\ \langle\mu,y_b\rangle+\rho_bH_b .
\end{aligned}
\]
The realized mean value of block \(b\) is at least \(\langle\mu,n_b\rangle\) (initialization and leftover pulls contribute nonnegatively), and \(\langle\mu,n_b\rangle\ge\langle\mu,y_b\rangle-K\) because \(n_b\) and \(y_b\) differ by less than one in at most \(K\) coordinates and \(\mu_i\le1\). Hence the block-\(b\) contribution to \(\OPT^{\mathrm{ad}}-V\) is at most
\[
    \bigl[\langle\mu,y_b^\star\rangle+2K\mu_\star\bigr]-\bigl[\langle\mu,y_b\rangle-K\bigr]\ \le\ \rho_bH_b+3K .
\]

\emph{Summation.} Since \(\sum_{b=1}^{B}\rho_b\le2\sqrt{2L}\sum_{b=1}^{B}b^{-1/2}\le4\sqrt{2LB}\),
\[
    \OPT^{\mathrm{ad}}-V\ \le\ 3KB+H_{\max}\sum_{b=1}^B\rho_b\ \le\ 3KB+4H_{\max}\sqrt{2LB}
\]
on \(\calE\), which is the claim; with \(H_b=T/B\) the second term is \(4\sqrt2\,T\sqrt{L/B}\).
\end{proof}

\subsection{Proof of Proposition~\ref{prop:slater-gap}}
\label{app:slater-proof}

We bound the cost of the built-in slack by \(O(B)\) using a Slater-type interior point.

\begin{proof}
Write \(V^{\mathrm{frac}}_b\) for the unslacked optimum and \(V^\star_b=\max_{y\in P_b}\langle\mu,y\rangle+2K\mu_\star\). If \(H_b<\max\{t/\sigma_b,2K/\kappa_b\}\) the claim is trivial, because both values lie in \([0,H_b]\); so assume otherwise and set
\[
    \theta\ :=\ \max\Bigl\{\frac{t}{\sigma_bH_b},\ \frac{2K}{\kappa_bH_b}\Bigr\}\ \in\ (0,1].
\]
Let \(y^\ast\) attain \(V^{\mathrm{frac}}_b\) and define \(x:=(1-\theta)y^\ast+\theta z\). For every group, \(\sum_{i\in g}x_i\ge(1-\theta)f_{b,g}+\theta(f_{b,g}+\sigma_bH_b)=f_{b,g}+\theta\sigma_bH_b\ge f_{b,g}+t\); for the budget, \(\sum_ix_i\le(1-\theta)H_b+\theta(1-\kappa_b)H_b=H_b-\theta\kappa_bH_b\le H_b-2K\). Padding \(x\) with additional mass on a best arm until \(\sum_ix_i=H_b-2K\) preserves the floors and does not decrease \(\langle\mu,x\rangle\), so the padded plan lies in \(P_b\) and
\[
    V^\star_b\ \ge\ \langle\mu,x\rangle+2K\mu_\star\ \ge\ (1-\theta)\langle\mu,y^\ast\rangle\ \ge\ V^{\mathrm{frac}}_b-\theta H_b ,
\]
using \(\langle\mu,y^\ast\rangle\le H_b\). Since \(\theta H_b=\max\{t/\sigma_b,2K/\kappa_b\}\), the claim follows.
\end{proof}

\subsection{Proof of Proposition~\ref{prop:dual-ledger}}
\label{app:dual-ledger-proof}

We turn blockwise group-fair regret into an exact per-pull LP-duality ledger and instantiate its disjoint form.

\begin{proof}
For any primal-feasible \(y\) (that is, \(y\ge0\), \(\sum_{i\in g}y_i\ge f_{b,g}\) for every group, and \(\sum_iy_i=H_b\)), dual feasibility and \(\lambda_b\ge0\) give
\[
\begin{aligned}
    \langle\mu,y\rangle
    &\ \le\ \sum_i\Bigl(\omega_b-\sum_{g\ni i}\lambda_{b,g}\Bigr)y_i
    \ =\ \omega_bH_b-\sum_g\lambda_{b,g}\sum_{i\in g}y_i\\
    &\ \le\ \omega_bH_b-\sum_gf_{b,g}\lambda_{b,g},
\end{aligned}
\]
so \(V^{\mathrm{frac}}_b\le\omega_bH_b-\sum_gf_{b,g}\lambda_{b,g}\), with equality for a dual-optimal pair by strong duality, the block program being feasible by assumption and bounded. Define the credit of round \(\tau\in\calB_b\) as \(c_\tau=\sum_{g\ni A_\tau:\,d_g(\tau)>0}\lambda_{b,g}\). A pull of an arm in \(g\) while \(d_g>0\) lowers \(d_g\) by exactly one, and blockwise group fairness drives every deficit from \(f_{b,g}\) to \(0\) inside the block, so group \(g\) is credited on exactly \(f_{b,g}\) rounds and \(\sum_{\tau\in\calB_b}c_\tau=\sum_gf_{b,g}\lambda_{b,g}\). Since \(|\calB_b|=H_b\),
\[
\begin{aligned}
    V^{\mathrm{frac}}_b-\sum_{\tau\in\calB_b}\mu_{A_\tau}
    &\ \le\ \sum_{\tau\in\calB_b}\bigl(\omega_b-\mu_{A_\tau}\bigr)-\sum_{\tau\in\calB_b}c_\tau\\
    &\ =\ \sum_{\tau\in\calB_b}\Bigl[r_{A_\tau}+\sum_{g\ni A_\tau:\,d_g(\tau)=0}\lambda_{b,g}\Bigr],
\end{aligned}
\]
where the equality splits \(\omega_b-\mu_{A_\tau}=r_{A_\tau}+\sum_{g\ni A_\tau}\lambda_{b,g}\) and subtracts the credit. Nonnegativity of every bracket is dual feasibility (\(r_{A_\tau}\ge0\)) together with \(\lambda_b\ge0\). When \((\omega_b,\lambda_b)\) is dual optimal, the first display is an equality, hence so is the ledger, and complementary slackness gives \(r_i=0\) for every arm in the support of an optimal fractional plan.

For the disjoint instantiation used by Theorem~\ref{thm:disjoint}, take \(\omega_b=\mu_\star\) and \(\lambda_{b,g}=\mu_\star-\mu^\star_g\ge0\). Dual feasibility holds because \(\omega_b-\lambda_{b,g}=\mu^\star_g\ge\mu_i\) for \(i\in g\) and \(\omega_b=\mu_\star\ge\mu_i\) for ungrouped arms; the dual value is \(H_b\mu_\star-\sum_gf_{b,g}(\mu_\star-\mu^\star_g)=V^{\mathrm{frac}}_b\), so the pair is optimal; and the reduced costs are \(r_i=\mu^\star_g-\mu_i=\widetilde\Delta_i\) for \(i\in g\) and \(r_i=\Delta_i\) for ungrouped arms, recovering exactly the two columns bounded in the proof of Theorem~\ref{thm:disjoint}.
\end{proof}

\subsection{Proof of Proposition~\ref{prop:greedy-fails}}
\label{app:greedy-fails-proof}

We construct a two-group overlap instance on which any within-group index rule pays \(\Omega(T)\), because the optimal cover accepts one arm's lower mean to satisfy two floors at once.

\begin{proof}
Four arms \(a,b,c,s\) with Bernoulli means \(0.70,0.70,0.65,0.90\); groups \(g_1=\{a,c\}\) and \(g_2=\{b,c\}\), so the arm degree is \(t=2\); \(B\) blocks of length \(H\) divisible by \(3\), each with floors \(f_{b,g_1}=f_{b,g_2}=H/3\). Because each group reads the full mass of its members, placing \(H/3\) on \(c\) satisfies both floors simultaneously, and the fractional optimum puts the remaining \(2H/3\) on \(s\):
\[
    V^{\mathrm{frac}}_b\ =\ \bigl(0.65+2\cdot0.90\bigr)\tfrac{H}{3}\ =\ 2.45\,\tfrac{H}{3}
    \qquad\text{per block.}
\]

Consider any rule of the stated class, with indices \(U_i=\min\{1,\widehat\mu_i+\sqrt{2L/N_i}\}\), and work on the concentration event \(\calE\) of the proof of Theorem~\ref{thm:main}. A covering selection of \(c\) in either group with \(n\ge1\) prior samples forces \(U_c\ge U_a\ge\mu_a\) (or the same with \(b\)), hence \(0.65+2\sqrt{2L/n}\ge0.70\) and \(n\le3200L\); a surplus selection of \(c\) forces \(U_c\ge U_s\ge0.90\), hence \(n\le128L\). Since every selection increments \(N_c\), arm \(c\) is pulled at most \(1+3200L\) times over the entire horizon. Likewise a surplus selection of \(a\) or \(b\) forces \(n\le200L\) (gap \(0.20\) to \(s\)), so each incurs at most \(1+200L\) surplus pulls. Outside these at most \(3+3600L\) exceptional pulls, every block allocates exactly \(H/3\) covering pulls of \(a\) for \(g_1\), \(H/3\) of \(b\) for \(g_2\), and \(H/3\) surplus pulls of \(s\), for a block value of \((0.70+0.70+0.90)H/3=2.30\,H/3\) and a per-block gap of \(0.15\,H/3=H/20\). Every block's regret against its fractional optimum is nonnegative, each exceptional pull distorts value by at most one, and feasibility is exact throughout since each floor is served by exactly \(H/3\) covering pulls; hence on \(\calE\) the total regret is at least \(T/20-(3+3600L)\), which is the claim with a universal constant. The failure is structural rather than statistical: the rule learns every mean correctly and still pays \(\Omega(T)\), because the optimal cover accepts \(c\)'s lower mean in exchange for satisfying two floors with one budget, and no within-group index comparison represents that exchange.
\end{proof}

\subsection{Proof of Proposition~\ref{prop:pbdq-feas}}
\label{app:pbdq-feas-proof}

We show P-BDQ-UCB is pathwise feasible under the initial cover-slack condition, via a one-step slack bound and the terminal cover-rounding guard.

\begin{proof}
Fix a block with \(H_b-\mathrm{mc}(f_b)>2K\), so \(S(0)>2K\) and the guard does not fire at the block start. Every pull lowers \(r\) by one and \(\mathrm{mc}\) by at most one. For the second claim, monotonicity gives \(\mathrm{mc}(d^+(A))\le\mathrm{mc}(d)\); conversely, let \(\pi_d\) be an optimal solution of the covering dual
\[
    \mathrm{mc}(d)=\max\Bigl\{\langle\pi,d\rangle:\ \pi\ge0,\ \textstyle\sum_{g\ni i}\pi_g\le1\ \forall i\Bigr\},
\]
whose feasible region does not depend on the demands. Then
\[
\begin{aligned}
    \mathrm{mc}(d)-\mathrm{mc}(d^+(A))
    &\ \le\ \langle\pi_d,\,d-d^+(A)\rangle
    \ =\!\!\sum_{g\ni A:\,d_g\ge1}\!\!\pi_{d,g}\\
    &\ \le\ \sum_{g\ni A}\pi_{d,g}\ \le\ 1
\end{aligned}
\]
by the dual constraint at the pulled arm, using that \(\pi_d\) remains feasible for the demands \(d^+(A)\). Hence \(S(\tau+1)-S(\tau)\in[-1,0]\) in the sampled phase. Since \(S(\tau)\le r(\tau)\) always, the guard fires no later than the round with \(r(\tau)=2K\); firing cannot occur at the block start, so the firing state has a pre-guard predecessor with slack above \(2K\), whence \(S(\tau_c)>2K-1\ge K\). Rounding an optimal fractional cover \(w(\tau_c)\) up coordinatewise raises at most \(K\) coordinates, so
\[
\begin{aligned}
    \sum_i\lceil w_i(\tau_c)\rceil
    &\ \le\ \mathrm{mc}(d(\tau_c))+K
    \ =\ r(\tau_c)-S(\tau_c)+K\\
    &\ \le\ r(\tau_c)-K+1
    \ \le\ r(\tau_c).
\end{aligned}
\]
The rounded cover therefore fits in the remaining budget; executing it drives every remaining deficit to zero, and leftover rounds, spent on the index argmax, cannot violate one-sided floor constraints. Every group floor of every block is thus satisfied, pathwise and deterministically.
\end{proof}

\subsection{Proof of Proposition~\ref{prop:pbdq-disjoint}}
\label{app:pbdq-disjoint-proof}

We verify the two descent inequalities automatically on disjoint systems, where the covering value is linear.

\begin{proof}
Disjointness makes the covering value linear: no arm serves two groups, so \(\mathrm{mc}(d)=\sum_g d_g\). At a pre-guard state, primal feasibility of \(\widehat y\) gives \(\Prob(A\in g)=\sum_{i\in g}\widehat y_i/r\ge d_g/r\) for every deficient group, so
\[
\begin{aligned}
    \E\bigl[\mathrm{mc}(d^+(A))\bigr]
    &=\sum_{g:\,d_g\ge1}\bigl(d_g-\Prob(A\in g)\bigr)\\
    &\le\Bigl(1-\frac1r\Bigr)\sum_g d_g
    =\frac{r-1}{r}\,\mathrm{mc}(d),
\end{aligned}
\]
which is \eqref{eq:cert-cover}, with no event required.

For \eqref{eq:cert-value}, write \(\mu^\star_g=\max_{i\in g}\mu_i\) and \(\lambda_g=\mu_\star-\mu^\star_g\ge0\). On the pre-guard region \(r>\mathrm{mc}(d)+2K\ge\sum_gd_g\), the residual value is linear:
\[
    \Psi_\mu(r,d)=\mu_\star r-\sum_g\lambda_g d_g ,
\]
because for any feasible \(y\) with group totals \(Y_g\ge d_g\) we have \(\langle\mu,y\rangle\le\sum_g\mu^\star_gY_g+\mu_\star(r-\sum_gY_g)\), the coefficient of \(Y_g\) is \(\mu^\star_g-\mu_\star\le0\), and the bound is attained at \(Y_g=d_g\). Both states \((r,d)\) and \((r-1,d^+(A))\) lie in this region, since \(\sum_gd^+(A)_g\ge\sum_gd_g-1\). Hence, with \(p=\widehat y/r\) and \(\E[d^+(A)_g]=d_g-\Prob(A\in g)\) for deficient groups, linearity gives
\[
\begin{aligned}
    \E\bigl[\Psi_\mu(r-1,d^+(A))+\mu_A\bigr]-\Psi_\mu(r,d)
    &=-\sum_i p_i\,\check r_i,\\
    \check r_i&=\mu_\star-\mu_i-\!\!\sum_{g\ni i:\,d_g\ge1}\!\!\lambda_g .
\end{aligned}
\]
It therefore suffices that \(\check r_i\le\rho_i\) for every support arm of \(\widehat y\). The optimistic program has the explicit optimal dual \(\omega^U=\max_jU_j\) and \(\lambda^U_g=\omega^U-\max_{j\in g}U_j\ge0\): it is feasible because \(\omega^U-\lambda^U_g=\max_{j\in g}U_j\ge U_i\) for \(i\in g\) and \(\omega^U\ge U_i\) for ungrouped arms, and its objective \(\omega^Ur-\sum_g\lambda^U_gd_g=\sum_gd_g\max_{j\in g}U_j+(r-\sum_gd_g)\,\omega^U\) matches the primal optimum, so the pair is optimal. By complementary slackness, every support arm of every optimal plan has zero reduced cost under this dual: a grouped support arm \(i\in g\) satisfies \(U_i=\max_{j\in g}U_j\) when \(\lambda^U_g>0\), and \(U_i=\omega^U\) otherwise; an ungrouped support arm satisfies \(U_i=\omega^U\). On \(\calE_L\), a within-group maximizer \(i\in g\) satisfies \(\mu_i+\rho_i\ge U_i\ge U_j\ge\mu_j\) for the group's best true arm \(j\), so \(\mu^\star_g-\mu_i\le\rho_i\); a global maximizer satisfies \(\mu_\star-\mu_i\le\rho_i\) the same way. In the first case \(\check r_i=\mu^\star_g-\mu_i\) if \(g\) is deficient and \(\check r_i=\mu_\star-\mu_i\) otherwise; in every case \(\check r_i\le\max\{\mu^\star_g-\mu_i,\mu_\star-\mu_i\}\le\rho_i\), because a within-group maximizer also satisfies \(\mu_\star-\mu_i\le\rho_i\) whenever it is a global maximizer, and \(\mu^\star_g-\mu_i\le\mu_\star-\mu_i\) always. This gives \eqref{eq:cert-value} on \(\calE_L\).

The final claim of the proposition is immediate: \(\sum_gf_{b,g}\le(1-\sigma)H_b\) is the initial-slack clause because \(\mathrm{mc}(f_b)=\sum_gf_{b,g}\), and \(\sigma H_b>2K\) is the block-start guard condition of Theorem~\ref{thm:pbdq}.
\end{proof}

\subsection{Proof of Theorem~\ref{thm:pbdq}}
\label{app:pbdq-proof}

We prove the conditional \(\widetilde O(\sqrt{KT})\) guarantee for P-BDQ-UCB by combining slack concentration with the true-value descent inequality.

\begin{proof}
Fix a block \(b\). Write \(r(\tau)\) and \(d(\tau)\) for the remaining budget and integer deficits before round \(\tau\), \(\mathrm{mc}\) and \(S(\tau)=r(\tau)-\mathrm{mc}(d(\tau))\) as in the main text, and
\[
\begin{aligned}
    \Psi(\tau)=\Psi_\mu(r(\tau),d(\tau))
    =\max\Bigl\{\langle\mu,y\rangle:\ y\ge0,\
    &\textstyle\sum_{i\in g}y_i\ge d_g(\tau)\ \forall g,\\
    &\sum_i y_i=r(\tau)\Bigr\},
\end{aligned}
\]
so that \(\Psi(0)=V^{\mathrm{frac}}_b\). The margin clause gives \(S(0)\ge\sigma H_b\), and with \(\sigma H_b>2K\) the hypothesis \(H_b-\mathrm{mc}(f_b)>2K\) of Proposition~\ref{prop:pbdq-feas} holds. Feasibility, the one-step bound \(S(\tau+1)-S(\tau)\in[-1,0]\) during the sampled phase, and the deterministic guard fit are therefore available throughout; let \(\tau_c\) be the guard's firing round, so \(S(\tau_c)\in(2K-1,2K]\) and the residual polytope is nonempty at every sampled round.

\emph{Stopped-process convention.} The descent condition asserts \eqref{eq:cert-cover}--\eqref{eq:cert-value} at pre-guard states reached on \(\calE_L\). Formally, every conditional-expectation step below is applied to the process stopped at the first pre-guard time whose state violates either inequality; on \(\calE_L\) that time is not before \(\tau_c\), and every path outside \(\calE_L\) is charged to the failure budget at the end, at cost at most \(T\).

\emph{Slack concentration.}
Before \(\tau_c\), let \(p_i(\tau)=\widehat y_i(\tau)/r(\tau)\). By the cover-contraction inequality~\eqref{eq:cert-cover},
\[
    \E[\mathrm{mc}(d(\tau+1))\mid\mathcal F_\tau]
    \le
    \frac{r(\tau)-1}{r(\tau)}\mathrm{mc}(d(\tau)),
\]
so \(M(\tau)=S(\tau)/r(\tau)\) is a submartingale before the guard: \(\E[S(\tau+1)]\ge r(\tau)-1-\frac{r(\tau)-1}{r(\tau)}\mathrm{mc}(d(\tau))=\frac{r(\tau)-1}{r(\tau)}S(\tau)\). Since \(|S(\tau+1)-S(\tau)|\le1\) by Proposition~\ref{prop:pbdq-feas} and \(0\le S(\tau)\le r(\tau)\), its one-step increments satisfy \(|M(\tau+1)-M(\tau)|\le2/(r(\tau)-1)\) whenever \(r(\tau)>1\). With \(M(0)\ge\sigma\), Azuma's inequality and
\(\sum_{s\ge r}1/(s-1)^2\le2/r\) imply that for every pre-guard state with
\(r(\tau)\ge r^\star=\lceil64L/\sigma^2\rceil\),
\[
    \Prob(M(\tau)\le\sigma/2)
    \le
    \exp(-\sigma^2 r(\tau)/64)
    \le
    \frac{\eta}{2KT}.
\]
A union bound over all rounds and blocks gives, with probability at least \(1-\eta\), \(M(\tau)>\sigma/2\) for every such pre-guard state. On this event, whenever additionally
\(r(\tau)\ge(4K+2)/\sigma\), we have \(S(\tau)>2K\), so the guard cannot yet fire. Hence
\[
    r(\tau_c)
    \le
    \frac{64L}{\sigma^2}+\frac{4K+2}{\sigma}+1.
\]

\emph{Regret in the sampled phase.}
On \(\calE_L\), the clipped indices satisfy \(\mu_i\le U_i\le\mu_i+\rho_i\) with
\(\rho_i=\min\{1,2\sqrt{2L/(N_i\vee1)}\}\), and the true-value descent inequality~\eqref{eq:cert-value} gives, for each pre-guard sampled round,
\[
    \E[\Psi(\tau)-\Psi(\tau+1)-\mu_{A_\tau}\mid\mathcal F_\tau]
    \le
    \E[\rho_{A_\tau}\mid\mathcal F_\tau].
\]
Telescoping over \(\tau<\tau_c\), using \(\Psi(\tau_c)\le r(\tau_c)\), and noting that rewards in the committed tail are nonnegative, yields the pathwise inequality
\[
    V^{\mathrm{frac}}_b-\sum_{\tau\in\calB_b}\mu_{A_\tau}
    \le
    \sum_{\tau<\tau_c}\bigl(\Psi(\tau)-\Psi(\tau+1)-\mu_{A_\tau}\bigr)+r(\tau_c)
\]
for the block. Taking expectations with the stopped-process convention, summing over blocks, and using the tail bound above, the only remaining term is the sum of confidence radii over actually pulled arms. For each arm, \(\sum_{n=1}^{N_i(T)}\min\{1,2\sqrt{2L/n}\}\le1+4\sqrt{2LN_i(T)}\), so by Cauchy--Schwarz,
\[
    \sum_{\tau=1}^T\rho_{A_\tau}
    \le
    K+4\sqrt{2L}\sum_i\sqrt{N_i(T)}
    \le
    K+4\sqrt{2KTL}.
\]
The reward-concentration event \(\calE_L\) fails with probability at most \(\eta\), the slack-concentration event fails with probability at most \(\eta\), and the regret on either failure is at most \(T\). Thus
\[
\begin{aligned}
    \OPT^{\mathrm{frac}}-\E[V]
    &\le
    K+4\sqrt{2KTL}+2\eta T\\
    &\quad+B\left(\frac{64L}{\sigma^2}+\frac{4K+2}{\sigma}+1\right),
\end{aligned}
\]
which is the claimed bound after absorbing constants into \(C\).
\end{proof}

\subsection{Proof of the Static Best-Fair Comparator}
\label{app:comparator-static}

\begin{proposition}
\label{prop:static-comparator}
Let \(m\in\{0,1,\dots,\lfloor T/K\rfloor\}\). Among all integer vectors \(n=(n_1,\dots,n_K)\) satisfying \(n_i\ge m\) and \(\sum_{i=1}^K n_i=T\), the reward \(\sum_{i=1}^K n_i\mu_i\) is maximized by assigning \(m\) pulls to every arm \(i\neq i^\star\) and assigning all remaining pulls to \(i^\star\).
\end{proposition}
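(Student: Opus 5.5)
The plan is to reduce the claim to the fair-regret algebra already used in Lemma~\ref{lem:static-id}, written for an arbitrary integer allocation instead of a pull sequence. Take any integer vector \(n\) with \(n_i\ge m\) and \(\sum_i n_i=T\). Using \(\sum_i n_i=T\) and \(\Delta_{i^\star}=0\), I rewrite its reward as
\[
    \sum_{i=1}^K n_i\mu_i
    =
    T\mu_\star-\sum_{i\neq i^\star}\Delta_i n_i .
\]
This turns the maximization of reward into the minimization of the gap-weighted count \(\sum_{i\neq i^\star}\Delta_i n_i\) over the feasible set.

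Next I use the constraint \(n_i\ge m\) together with \(\Delta_i\ge0\) to get the termwise bound \(\Delta_i n_i\ge\Delta_i m\) for each \(i\neq i^\star\). Summing these yields
\[
    \sum_i n_i\mu_i\le T\mu_\star-m\sum_{i\neq i^\star}\Delta_i .
\]
The right-hand side is exactly \(\OPT_m(T)\).

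Finally I check that the proposed allocation \(n^\star(m)\) attains this bound and is feasible. It sets \(n_i=m\) for \(i\neq i^\star\), so each term \(\Delta_i n_i\) equals \(\Delta_i m\) and the bound holds with equality. Feasibility has two parts. First, \(n_{i^\star}=T-(K-1)m\ge m\), which is equivalent to \(Km\le T\) and follows from \(m\le\lfloor T/K\rfloor\). Second, the entries are integers summing to \(T\) by construction.

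There is no real obstacle here. The only point needing care is the feasibility step \(T-(K-1)m\ge m\), which is where the restriction \(m\le\lfloor T/K\rfloor\) is used. If the intended statement also claims uniqueness, I would add one remark. Suppose some suboptimal arm has \(\Delta_i>0\) and \(n_i>m\); then the termwise inequality for that arm is strict, so the reward is strictly below \(\OPT_m(T)\). Hence the maximizer is unique up to moving excess pulls among arms with \(\Delta_i=0\), that is, up to ties for the best arm.
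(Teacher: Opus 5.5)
Your proof is correct and is essentially the paper's argument. The paper writes \(n_i=m+r_i\) and notes that \(\sum_i r_i\mu_i\), subject to \(r_i\ge0\) and \(\sum_i r_i=T-Km\), is maximized by placing all residual mass on \(i^\star\); that is the same inequality as your termwise bound \(\Delta_i(n_i-m)\ge0\), written in gap form, and your explicit check that \(T-(K-1)m\ge m\) spells out a feasibility step the paper leaves implicit.
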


\begin{proof}
Let \(n\) be any feasible fair allocation. Since \(n_i\ge m\), define \(r_i=n_i-m\ge0\), so that \(\sum_{i=1}^K r_i=T-Km\). The expected reward of \(n\) is
\[
    \sum_{i=1}^K n_i\mu_i
    =
    m\sum_{i=1}^K \mu_i
    +
    \sum_{i=1}^K r_i\mu_i.
\]
The first term is fixed across all feasible allocations, so maximizing reward is equivalent to maximizing \(\sum_i r_i\mu_i\) subject to \(r_i\ge0\) and \(\sum_i r_i=T-Km\). This is maximized by placing all residual mass on an arm with largest mean, namely \(i^\star\): \(r_{i^\star}=T-Km\) and \(r_i=0\) for \(i\neq i^\star\), equivalently \(n_{i^\star}=T-(K-1)m\) and \(n_i=m\) for \(i\neq i^\star\).
\end{proof}

\subsection{Proof of the Blockwise Comparator}
\label{app:comparator-block}

\begin{proposition}
\label{prop:block-comparator}
For each block \(b\), among all integer allocations satisfying \(n_{b,i}\ge m_{b,i}\) and \(\sum_i n_{b,i}=H_b\), reward is maximized by assigning \(m_{b,i}\) pulls to every arm and all residual pulls to a best arm.
\end{proposition}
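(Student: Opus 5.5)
The plan is to reduce the block problem to the static comparator argument of Proposition~\ref{prop:static-comparator}, applied one block at a time. The key observation is that the objective decouples across blocks. The total reward is \(\sum_b\sum_i n_{b,i}\mu_i\), and the constraints \(n_{b,i}\ge m_{b,i}\) and \(\sum_i n_{b,i}=H_b\) involve only block \(b\). So maximizing the total is the same as maximizing each block's reward \(\sum_i n_{b,i}\mu_i\) separately, and it is enough to work within a single block.

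Fix a block \(b\) and substitute \(r_{b,i}=n_{b,i}-m_{b,i}\ge 0\). The feasibility assumption \(\sum_i m_{b,i}\le H_b\) gives \(\sum_i r_{b,i}=H_b-\sum_i m_{b,i}\ge 0\). The block reward then splits as
\[
    \sum_i n_{b,i}\mu_i=\sum_i m_{b,i}\mu_i+\sum_i r_{b,i}\mu_i .
\]
The first term is the same for every feasible allocation. The problem therefore reduces to maximizing the linear form \(\sum_i r_{b,i}\mu_i\) over nonnegative integer vectors with a fixed total. Since \(\sum_i r_{b,i}\mu_i\le \mu_\star\sum_i r_{b,i}\), this maximum is attained by putting all residual mass on a best arm \(i^\star\). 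That gives \(n_{b,i}=m_{b,i}\) for \(i\neq i^\star\) and \(n_{b,i^\star}=m_{b,i^\star}+H_b-\sum_j m_{b,j}\).

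Summing the optimal block values gives
\[
    \sum_b\Bigl[\sum_i m_{b,i}\mu_i+\Bigl(H_b-\sum_j m_{b,j}\Bigr)\mu_\star\Bigr]=T\mu_\star-\sum_b\sum_{i\neq i^\star}m_{b,i}\Delta_i ,
\]
which matches \(\OPT_{\mathbf m}(T)\) in the definition.

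There is no real obstacle here. The one point to state explicitly is that the cross-block decoupling is valid: the constraints are separable by block, the objective is additive, and the means are stationary, so nothing links the blocks. The integrality of the allocation is also preserved, since the optimizer is integral.
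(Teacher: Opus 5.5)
Your proposal is correct and takes essentially the same approach as the paper: substitute \(r_{b,i}=n_{b,i}-m_{b,i}\ge0\), observe that the floor term is fixed, place the residual mass \(H_b-\sum_i m_{b,i}\) on a best arm, and sum over blocks to obtain \(\OPT_{\mathbf m}(T)\). Your remarks on cross-block separability and your explicit check of the \(\OPT_{\mathbf m}(T)\) formula spell out steps that the paper's one-line summation leaves implicit.
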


\begin{proof}
For a fixed block, write \(n_{b,i}=m_{b,i}+r_{b,i}\) with \(r_{b,i}\ge0\). The block reward is \(\sum_i m_{b,i}\mu_i + \sum_i r_{b,i}\mu_i\). The first term is fixed by the block floors, and the residual mass \(\sum_i r_{b,i}=H_b-\sum_i m_{b,i}\) is maximized by placing it on a best arm. Summing over blocks gives \(\OPT_{\mathbf m}(T)\).
\end{proof}

\subsection{Proof of Corollary~\ref{cor:static-special}}
\label{app:static-special}

\begin{corollary}[Static case as a special case]
\label{cor:static-special}
Taking \(B=1\), \(H_1=T\), and \(m_{1,i}=m\) for every arm reduces Theorem~\ref{thm:blockwise} to Theorem~\ref{thm:main}, up to replacing \(R=T-Km\) by the looser bound \(T\).
\end{corollary}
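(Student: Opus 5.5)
The plan is to instantiate Theorem~\ref{thm:blockwise} with \(B=1\), \(H_1=T\), and \(m_{1,i}=m\) for every arm, and then check that each of its objects and conclusions coincides with the corresponding one in Theorem~\ref{thm:main}.

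\textbf{Objects.} The residual budget becomes \(R=T-Km\). The blockwise comparator becomes
\[
\OPT_{\mathbf m}(T)=T\mu_\star-m\sum_{i\neq i^\star}\Delta_i=\OPT_m(T),
\]
so \(\widehat\Reg_{\mathbf m}(T)=\widehat\Reg_m(T)\), and Lemma~\ref{lem:block-id} reduces verbatim to Lemma~\ref{lem:static-id}. Blockwise fairness \(N_{1,i}\ge m\) is exactly \(m\)-fairness.

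\textbf{Algorithms.} BDQ-UCB with one block pulls each arm \(m\) times in some fixed order and then runs UCB with the same radius \(\sqrt{2L/N_i}\) and the same \(L=\log(2KT/\eta)\). DQ-UCB uses the cyclic order, which is one admissible fixed order. The only discrepancy is \(m=0\): DQ-UCB pulls each arm once, whereas BDQ-UCB relies on the \(+\infty\) convention for unseen arms. Both explore each arm once before the index is used, and that cost is the additive \(K\) in both bounds, so the proofs coincide.

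\textbf{Bounds.} Theorem~\ref{thm:blockwise}(i) gives Theorem~\ref{thm:main}(i). Theorem~\ref{thm:blockwise}(ii) with \(R=T-Km\le T\) gives Theorem~\ref{thm:main}(ii), using monotonicity
\[
\min\{R\Delta_i,\,8L/\Delta_i\}\le\min\{T\Delta_i,\,8L/\Delta_i\}.
\]
Theorem~\ref{thm:blockwise}(iii) gives \(K+4\sqrt{2KRL}\le K+4\sqrt{2KTL}\), which is Theorem~\ref{thm:main}(iii). The expected bound (iv) then follows as in Theorem~\ref{thm:main}, by setting \(\eta=1/T\) and charging the failure event at most \(T\cdot\eta=1\).

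\textbf{Main obstacle.} The only delicate point is the \(m=0\) initialization mismatch just described. I would handle it by noting that both schedules give every arm at least one sample before any index comparison, so the concentration argument is unchanged. I would also remark that the reduction is strictly tighter, since \(R\) can be much smaller than \(T\).
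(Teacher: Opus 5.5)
Your proposal is correct and follows the paper's own proof. You instantiate \(B=1\), \(H_1=T\), \(m_{1,i}=m\), identify the blockwise comparator and fairness constraint with their static counterparts, relax \(R=T-Km\) to \(T\) in parts (ii)--(iii), and set \(\eta=1/T\) for (iv). Your explicit treatment of the \(m=0\) initialization (the \(+\infty\) convention versus DQ-UCB's single forced cycle) is extra care that the paper's claim that BDQ-UCB ``executes DQ-UCB exactly'' glosses over.
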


\begin{proof}
With \(B=1\), \(H_1=T\), and \(m_{1,i}=m\) for every \(i\), the blockwise constraint collapses to \(N_i(T)\ge m\), and BDQ-UCB executes DQ-UCB exactly. The blockwise comparator (Lemma~\ref{lem:block-id}) becomes \(\OPT_m(T)\) (Lemma~\ref{lem:static-id}), and the residual budget is \(R=T-Km\). Substituting into Theorem~\ref{thm:blockwise}(ii)--(iii) yields the bounds of Theorem~\ref{thm:main}(ii)--(iii), loosened by replacing \(R\) with \(T\). The expected-regret bound follows by setting \(\eta=1/T\).
\end{proof}

\subsection{Proof of Proposition~\ref{prop:separation}}
\label{app:separation-proof}

\begin{proposition}[Separation from block-independent floors]
\label{prop:separation}
Fix \(\alpha\in(0,1/2]\), an even number of blocks \(B\), and a block length \(H\) such that \(\alpha H\) is an integer. Consider two arms with \(\mu_1=1\) and \(\mu_2=1-\Delta\), where \(\Delta>0\). In odd blocks, require \(m_{b,2}=\alpha H\) and \(m_{b,1}=0\); in even blocks, require \(m_{b,1}=m_{b,2}=0\). Then: (i) no single final-horizon global floor can encode these blockwise constraints, since an allocation may satisfy the exact aggregate count of arm 2 while violating every odd-block requirement; (ii) any block-independent per-block lower bound \(g_2H\) for arm 2 either violates the odd-block requirement if \(g_2<\alpha\), or, if \(g_2\ge\alpha\), incurs at least \(\tfrac{\alpha\Delta}{2}T\) additional blockwise regret relative to the best blockwise-fair comparator, where \(T=BH\); and (iii) BDQ-UCB satisfies all blockwise constraints exactly and has regret \(O(\sqrt{KR\log(KT)})\).
\end{proposition}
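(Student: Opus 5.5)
The proof treats the three parts separately. Parts (i) and (ii) are explicit constructions and comparisons against the comparator of Proposition~\ref{prop:block-comparator}. Part (iii) is a direct invocation of Theorem~\ref{thm:blockwise}.

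For (i), the only possible global encoding is the aggregate requirement $N_2(T)\ge \sum_b m_{b,2}=(B/2)\alpha H$, or more generally any final-count floor on arm 2. The counterexample allocation pulls arm 2 exactly $2\alpha H$ times in each even block (possible since $\alpha\le 1/2$) and never in odd blocks. Its aggregate count equals $(B/2)\alpha H$ exactly, yet $N_{b,2}=0<\alpha H$ in every odd block. Any final-count constraint satisfied by the blockwise-fair comparator is therefore also satisfied by this infeasible allocation, so no global floor encodes the block constraints.

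For (ii), a block-independent rule imposes $N_{b,2}\ge g_2H$ in every block.
\begin{itemize}
\item If $g_2<\alpha$, the allocation that pulls arm 2 exactly $\lceil g_2H\rceil$ times per block satisfies the surrogate. Whenever $\lceil g_2 H\rceil<\alpha H$, which holds because $\alpha H$ is an integer and $g_2H<\alpha H$, it violates every odd-block requirement.
\item If $g_2\ge\alpha$, every surrogate-feasible policy pulls arm 2 at least $\alpha H$ times in each of the $B/2$ even blocks, where the true floor is zero.
\end{itemize}
In the second case the comparator places all even-block rounds on arm 1, so the identity of Lemma~\ref{lem:block-id} gives fair regret at least
\[
\Delta\cdot (B/2)\alpha H=\tfrac{\alpha\Delta}{2}T .
\]
This bound holds pathwise, hence in expectation, and it adds to the nonnegative regret from the odd blocks.

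For (iii), BDQ-UCB is exactly feasible by Theorem~\ref{thm:blockwise}(i). Its regret bound $K+4\sqrt{2KRL}$ comes from Theorem~\ref{thm:blockwise}(iii) with $L=\log(2KT/\eta)$. Taking $\eta=1/T$ and bounding the failure event by $1$ gives expected regret $O(\sqrt{KR\log(KT)})$.

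The only delicate point is making (ii) precise about what "block-independent bound" means, namely a fixed fraction $g_2$ applied to every block. It also requires checking the integer-rounding subtlety in the $g_2<\alpha$ case, which the integrality of $\alpha H$ resolves.
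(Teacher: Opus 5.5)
Your three-part structure matches the paper's proof, which is only a sketch. The paper argues that an aggregate count cannot see where pulls are placed, uses the even-block counting argument through Lemma~\ref{lem:block-id}, and appeals directly to Theorem~\ref{thm:blockwise}. Your regret computation in the \(g_2\ge\alpha\) case and your part (iii) are fine, but two concrete steps fail as written.

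\textbf{Part (i).} Pulling arm 2 exactly \(2\alpha H\) times in each of the \(B/2\) even blocks gives \(N_2(T)=B\alpha H=\alpha T\). That is twice the required aggregate \((B/2)\alpha H=\alpha T/2\), so your allocation does \emph{not} have the exact aggregate count. Your sentence ``any final-count constraint satisfied by the comparator is also satisfied by this allocation'' is therefore false. For example, \(N_1(T)\ge T-\alpha T/2\) and \(N_2(T)\le\alpha T/2\) both hold for the comparator but not for your allocation. The fix is to move each odd block's quota into an even block: \(\alpha H\) pulls of arm 2 in every even block and none in odd blocks. Then the final count vector coincides with the comparator's, so every constraint on final counts is satisfied identically, while every odd block has \(N_{b,2}=0<\alpha H\). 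This version does not even need \(\alpha\le 1/2\).

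\textbf{Part (ii).} You correctly flagged the rounding as the delicate point, but resolved it incorrectly. From \(g_2H<\alpha H\) and \(\alpha H\in\mathbb{Z}\) you only get \(\lceil g_2H\rceil\le\alpha H\), not strict inequality. Take \(\alpha H=10\) and \(g_2H=9.5\). Integrality of pull counts turns \(N_{b,2}\ge g_2H\) into \(N_{b,2}\ge\alpha H\) in every block. So that surrogate does not under-enforce the odd blocks at all; it forces \(\alpha H\) pulls of arm 2 in every even block and pays \(\tfrac{\alpha\Delta}{2}T\). The correct case split has two branches:
\begin{itemize}
\item If \(\lceil g_2H\rceil<\alpha H\), equivalently \(g_2\le\alpha-1/H\), your violating allocation exists.
\item If \(\lceil g_2H\rceil\ge\alpha H\), your even-block argument applies verbatim, since it only uses \(N_{b,2}\ge\alpha H\) in even blocks.
\end{itemize}
So the ``violates or pays'' dichotomy holds for every \(g_2\), but the threshold is not \(g_2<\alpha\). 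Alternatively, read the surrogate floor with the paper's integer-floor convention \(\lfloor g_2H\rfloor\), as in \(m=\lfloor\delta T\rfloor\). Under that convention \(\lfloor g_2H\rfloor<\alpha H\) does follow from \(g_2<\alpha\) and the integrality of \(\alpha H\). The paper's proof just says ``under-enforces (if \(g_2<\alpha\))'' and never addresses this rounding, so you should state explicitly which convention you adopt.
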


\begin{proof}
The aggregate arm-2 requirement is \((B/2)\alpha H=\alpha T/2\). A global final-count constraint enforcing this total cannot distinguish odd-block placements from even-block placements, proving (i). For (ii), a block-independent surrogate with per-block lower bound \(g_2H\) either under-enforces odd-block requirements (if \(g_2<\alpha\)) or forces at least \(\alpha H\) pulls of the suboptimal arm in each of the \(B/2\) even blocks (if \(g_2\ge\alpha\)), incurring at least \((B/2)\alpha H\Delta=\alpha\Delta T/2\) additional regret. Part (iii) is Theorem~\ref{thm:blockwise}.
\end{proof}

\end{document}